\newcommand{\boldtheta}{\boldsymbol{\theta}}
\newcommand{\boldphi}{\boldsymbol{\phi}}
\newtheorem{proposition}{Proposition}
\newtheorem*{proposition*}{Proposition}
\newcommand*\circled[1]{\tikz[baseline=(char.base)]{
            \node[shape=circle,draw,inner sep=.5pt] (char) {#1};}}
\title{Monte Carlo Filtering Objectives: A New Family of Variational Objectives to Learn Generative Model and Neural Adaptive Proposal for Time Series}
\author{
Shuangshuang Chen$^{1,2}$\footnote{Contact Author}\and
Sihao Ding$^2$\and
Yiannis Karayiannidis$^{3}$\And
Mårten Björkman$^1$\\
\affiliations
$^1$ Royal Institute of Technology\\
$^2$ Volvo Car Corporation\\
$^3$ Chalmers University of Technology\\
\emails
\{shuche, celle\}@kth.se,
sihao.ding@volvocars.com,
yiannis@chalmers.se
}
\begin{document}

\maketitle

\begin{abstract}
Learning generative models and inferring latent trajectories have shown to be challenging for time series due to the intractable marginal likelihoods of flexible generative models. It can be addressed by surrogate objectives for optimization. We propose Monte Carlo filtering objectives (MCFOs), a family of variational objectives for jointly learning parametric generative models and amortized adaptive importance proposals of time series. MCFOs extend the choices of likelihood estimators beyond Sequential Monte Carlo in state-of-the-art objectives, possess important properties revealing the factors for the tightness of objectives, and allow for less biased and variant gradient estimates. We demonstrate that the proposed MCFOs and gradient estimations lead to efficient and stable model learning, and learned generative models well explain data and importance proposals are more sample efficient on various kinds of time series data.
\end{abstract}

\frenchspacing

\section{Introduction}
Learning a generative model with latent variables for time series is of interest in many applications. However, exact inference and marginalization are often intractable for flexible generative models, making it challenging to learn. There are a few popular approaches to circumvent these difficulties: implicit methods that learn generative models by comparing generated samples to data distributions like Generative Adversarial Networks (GANs) \citep{Ian2014gan}; and explicit methods that define surrogate objectives of the intractable marginal log-likelihood like Variational Autoencoder (VAEs) \citep{kingma2014auto}, or tractable marginals by invertible transformations like Normalizing Flows (NFs) \citep{rezende2015variational}. 
Explicit methods are often preferable when latent/encoded information is of importance, e.g. filtering and smoothing problems for some subsequent tasks. In this work, we mainly focus on the second approach and propose a family of surrogate filtering objectives to learn generative models and adaptive importance proposal models 
for time series.

Researchers have introduced various surrogate objectives using variational approximations of intractable posterior for time series, known as evidence lower bounds (ELBOs), such as STONE \citep{bayer2014learning}, VRNN \citep{chung2015recurrent}, SRNN \citep{fraccaro2016sequential}, DKF \citep{krishnan2017structured}, KVAE \citep{fraccaro2017disentangled}
. However, they typically suffer from a general issue
caused by the limited flexibility of the variational approximations, thus restricting the learning of generative models.
To alleviate this constraint, IWAE \citep{burda2015importance} proposes a tighter objective by averaging importance weights of multiple samples drawn from a variational approximation. 
Monte Carlo objectives (MCOs) \citep{mnih2016variational} generalizes the IWAE objective and ELBOs for non-sequential data. AESMC \citep{le2017auto}, FIVO \citep{maddison2017filtering}, and VSMC \citep{naesseth2018variational} extend this idea for sequential data using the estimators by Sequential Monte Carlo (SMC) and propose closely related surrogates objectives for learning. 

Inspired by MCOs and the sequential variants, 
we propose Monte Carlo filtering objectives (MCFOs), a new family of surrogate objectives for generative models of time series, that
\begin{itemize}
    \item broadens previously limited choices of estimators for time series other than SMC,
    \item possesses unique properties such as monotonic convergence and asymptotic bias, revealing the factors that determine a tighter objective: the number of samples and importance proposals,
    \item reduces high variance in gradient estimates of proposal models common in state-of-the-art algorithms,
    without introducing bias, which allows for faster convergence and sample efficient proposal models. 
\end{itemize}



The paper is organized as follows: we first review the definition of MCOs and discuss common limitations of 
existing filtering variants. In Section \ref{sec:mcfo}, we derive MCFOs, 
explain their relations to other objectives and important properties. We demonstrate two instances of MCFOs with SMC and Particle Independence Metropolis-Hasting (PIMH) to learn models on 1) Linear Gaussian State Space Models (LGSSMs), 2) nonlinear, non-Gaussian, high dimensional SSMs of video sequences, 3) non-Markovian music sequences.

\section{Background}
\subsection{Monte Carlo Objectives}
\label{sec:mco}
For a generative model with observation $\mathbf{x}$ and latent state $\mathbf{z}$, a Monte Carlo objective (MCO) \citep{mnih2016variational} is defined as an estimate of the marginal log-likelihood $\log p(\mathbf{x})$ by samples drawn from a proposal distribution $q$:
\begin{equation}
    \mathbb{E}_{q(\mathbf{z})}[\log R] =  \log p(\mathbf{x}) - \mathbb{E}_{q(\mathbf{z})}[\log \frac{p(\mathbf{x})}{R}] \leq \log p(\mathbf{x}),
    \label{eq:stochastic_bound}
\end{equation}
where $R$ is any unbiased estimator of $p(\mathbf{x})$ that $\mathbb{E}[R] = p(\mathbf{x})$. It is also a lower bound of $\log p(\mathbf{x})$ as can be shown using Jensen's inequality. When an estimator takes a single sample from $q$ and $R = {p(\mathbf{x}, \mathbf{z})}/{q(\mathbf{z})}$, the MCO 
can be identified as ELBO in variational inference \citep{mnih2014neural}. When an estimator averages importance weights from $K$ samples, $R^K=K^{-1}\sum_{i=1}^K {p(\mathbf{z}^i, \mathbf{x})}/{q(\mathbf{z}^i)}$, 
it yields an importance weighted ELBO (IW-ELBO) \citep{burda2015importance, domke2018importance}.
This bound is proven to be tighter with increasing $K$ and asymptotically converges to $\log p(\mathbf{x})$, as $K\rightarrow \infty$. 

\subsection{Sequential Monte Carlo}
\label{sec:smc}
For a sequential observation $\mathbf{x}_{1:T}$ with latent trajectory $\mathbf{z}_{1:T}$, the generative process 
can be factorized as $p(\mathbf{z}_{1:T}, \mathbf{x}_{1:T}) = p(\mathbf{z}_1)p(\mathbf{x}_1|\mathbf{z}_1) \prod_{t=2}^T p(\mathbf{z}_{t}|\mathbf{z}_{1:t-1}, \mathbf{x}_{1:t-1}) p(\mathbf{x}_{t}|\mathbf{z}_{1:t}, \mathbf{x}_{1:t-1})$. Inferring latent trajectory, $p(\mathbf{z}_{1:T}|\mathbf{x}_{1:T})$, is of importance for marginalization and learning of generative models, however, it is usually intractable. 

Sequential Monte Carlo (SMC) \citep{doucet2009tutorial} approximates a target distribution, specifically $p(\mathbf{z}_{1:T}|\mathbf{x}_{1:T})$, 
using a set of weighted sample trajectories $\{ \Tilde{w}_{T}^{i}, \hat{\mathbf{z}}_{1:T}^{i} \}_{i=1:K}$. It combines Sequential Importance Sampling (SIS) with resampling, and consists of four main steps: 
\begin{itemize}
    \item[] \textit{Sample} $K$ particles $\hat{\mathbf{z}}^{i}_t$ from proposal $q(\mathbf{z}_t|\bar{\mathbf{z}}^{i}_{1:t-1},\mathbf{x}_{1:t})$ with previously resampled trajectories $\bar{\mathbf{z}}^{i}_{1:t-1}$;
    \item[] \textit{Append} to trajectory $\hat{\mathbf{z}}^{i}_{1:t}=(\hat{\mathbf{z}}^{i}_t, \bar{\mathbf{z}}^{i}_{1:t-1})$; 
    \item[] \textit{Weight} trajectories with $\Tilde{w}_t^{i}={w_t^{i}}/{\sum_{j=1}^K w_t^{j}}$, where \\ $w_t^{i} = {p(\mathbf{x}_{1:t}, \hat{\mathbf{z}}_{1:t}^{i}|\mathbf{x}_{1:t-1}, \bar{\mathbf{z}}^{i}_{1:t-1})}/{q(\hat{\mathbf{z}}^{i}_t|\bar{\mathbf{z}}^{i}_{1:t-1},\mathbf{x}_{1:t})}$;
    \item[] \textit{Resample} 
    from $\{\Tilde{w}_t^{i}, \hat{\mathbf{z}}_{1:t}^{i} \}$ to obtain equally-weighted particles 
    $\bar{\mathbf{z}}^{i}_{1:t} = \hat{\mathbf{z}}_{1:t}^{A_{t-1}^i}$,
    with ancestral indices $A^i_{t-1}$. 
\end{itemize}
This iteration continues until time $T$. Besides being an approximate inference, SMC also gives an unbiased estimate of the marginal likelihood $p(\mathbf{x}_{1:T})$ by the importance weights:  
\begin{equation}
    \hat{p}(\mathbf{x}_{1:T}) = \prod_{t=1}^T \left( \frac{1}{K} \sum_{i=1}^K w_t^i \right).
    \label{eq:smc_estimator}
\end{equation}
The variance of this estimate, accessed by the so-called Effective Sample Size (ESS) for sample efficiency, is largely dependent on the proposal distributions $q$. 
We refer to \citep{doucet2009tutorial} for a more in-depth discussion.

\subsection{Variational Filtering Objectives}
To learn a generative model for time series data, various ELBO-like surrogate objectives have been proposed 
using different factorizations of generative models and approximations $q(\mathbf{z}_{1:T}|\mathbf{x}_{1:T})$. IW-ELBO 
can be extended to sequences by changing from Importance Sampling (IS) to SIS estimator $R^K=K^{-1}\sum_{i=1}^K {p(\mathbf{z}_{1:T}^i, \mathbf{x}_{1:T})}/{q(\mathbf{z}_{1:T}^i)}$. However, such an estimator suffers from exponential growth of variance with the 
length of sequences.
To improve this, AESMC, FIVO and VSMC 
propose three closely related MCOs, exploiting the SMC estimators (\ref{eq:smc_estimator}): 
\begin{equation}
    \begin{aligned}
        & \text{ELBO}_{\text{SMC}} = \mathbb{E}_{Q_{\text{SMC}}} \bigg[ \sum_{t=1}^T \log  \underbrace{\left( \frac{1}{K} \sum_{i=1}^K w_t^i \right)}_{R_t^K} \bigg]  \\
        & Q_{\text{SMC}}(z_{1:T}^{1:K}) = \int \left( \prod_{i=1}^K q(z_1^i) \right) \\ & \quad \quad \quad \quad \prod_{t=2}^T \prod_{i=1}^K \left(q(z_t^i|z_{1:t-1}^{A_{t-1}^i}) \cdot 
        \frac{w_{t-1}^{A_{t-1}^{i}}}{\sum_j w_{t-1}^{j}}
        \right) d  A_{1:T-1}^{1:K}.
    \end{aligned}
    \label{eq:ELBO_AESMC}
\end{equation}
It is found that the learning of generative models via the objective suffers high variance in gradient estimation
, since 
importance weight $w_t^i$ does not allow for smooth gradient computation. We show that simply ignoring some high variance term in the gradient estimate as the suggested solution in previous methods, introduces an extra bias and leads to non-optimality of proposal and generative parameters at convergence. To tackle these problems and extend variational filtering objectives
, we propose MCFOs, 
and discuss their important properties in the following sections. 

\section{Monte Carlo Filtering Objectives}
\label{sec:mcfo}
Instead of constructing variational lower bound from (\ref{eq:smc_estimator}), we leverage the following decomposition of joint marginal log-likelihood:
\vspace{-1mm}
\begin{equation}
    \log p(\mathbf{x}_{1:T}) = \log p(\mathbf{x}_1) + \sum_{t=2}^T \log p(\mathbf{x}_t|\mathbf{x}_{1:t-1}).
    \label{eq:recursion}
\end{equation}
Nonetheless, $\log p(\mathbf{x}_t|\mathbf{x}_{1:t-1})$ is usually intractable, which makes learning a generative model by maximizing (\ref{eq:recursion}) only possible in some limited cases. 
Instead, we define $\mathcal{L}_t^K$, an MCO to $\log p(\mathbf{x}_t|\mathbf{x}_{1:t-1})$ with $K$ samples: 
\begin{equation}
\begin{aligned}
    &  \mathcal{L}_t^K = 
    \mathbb{E}_{Q_t^K}[\log R_t^K], \\ 
   & Q_t^K(\mathbf{z}_{1:t}^{1:K}| \mathbf{x}_{1:t})
        = p(\mathbf{z}_{1:t-1}^{1:K}|\mathbf{x}_{1:t-1}) \cdot \prod_{i=1}^K  q(\mathbf{z}_{t}^{i}|\mathbf{z}_{1:t-1}^{i},\mathbf{x}_{1:t}), \\
    & R_t^K
        = \frac{1}{K} \sum_{i=1}^K \frac{p(\mathbf{z}_{1:t}^{i},\mathbf{x}_{1:t})}{p(\mathbf{z}_{1:t-1}^{i},\mathbf{x}_{1:t-1})q(\mathbf{z}_{t}^{i}|\mathbf{z}_{1:t-1}^{i},\mathbf{x}_{1:t})},\\
    & \text{specifically $\mathcal{L}_1^K$ for $\log p(\mathbf{x}_1)$,}\\
    & Q_1^K(\mathbf{z}_{1}^{1:K}| \mathbf{x}_{1})
        = \prod_{i=1}^K q(\mathbf{z}_1^{i}|\mathbf{x}_1), ~~
        R_1^K
        = \frac{1}{K} \sum_{i=1}^K \frac{p(\mathbf{z}_1^{i}, \mathbf{x}_1)}{q(\mathbf{z}_1^{i}|\mathbf{x}_1)},
\end{aligned}
\label{eq:L_t_k}
\end{equation}
\vspace{-2mm}

\noindent 
where 
$q(\mathbf{z}_{t}|\mathbf{z}_{1:t-1},\mathbf{x}_{1:t})$ and $q(\mathbf{z}_{1}|\mathbf{x}_{1})$ are the proposal distributions
, $R_t^K$ and $R_1^K$ are the unbiased estimators of 
$p(\mathbf{x}_{t}|\mathbf{x}_{1:t-1})$ and $p(\mathbf{x}_{1})$ using $K$ samples; see Appendix \ref{sec:appendix_lower_bound} for derivations. Summing up the series of MCOs in (\ref{eq:L_t_k}), we then define a Monte Carlo filtering objective (MCFO), $\mathcal{L}_{\text{MCFO}}$:
\begin{equation*}
    \begin{aligned}
        & \mathcal{L}_{\text{MCFO}}^K (\mathbf{x}_{1:T}, p, q) 
        = \sum_{t=1}^T \mathcal{L}_t^K \leq \log p(\mathbf{x}_{1:T}) 
        , 
        \\ 
    \end{aligned} 
\end{equation*}
as the lower bound of $\log p(\mathbf{x}_{1:T})$. To avoid notation clutter, we leave out arguments on $\mathcal{L}_{\text{MCFO}}^K$ and $Q_t^K$ 
whenever the context permits. 

Considering the filtering problem for which future observations have no impact on the current posterior, replacing $p(\mathbf{z}_{1:t-1}^{1:K}|\mathbf{x}_{1:t-1})$ in $Q_t^K$ 
with $K$ sample approximations $\hat{p}(\mathbf{z}_{1:t-1}^{1:K}|\mathbf{x}_{1:t-1}) = \sum_{i=1}^K \Tilde{w}_{t-1}^{i} \delta(\mathbf{z}_{1:t-1}^{i}-\hat{\mathbf{z}}_{1:t-1}^{i})$ retrieves the definition of $\text{ELBO}_{\text{SMC}}$ in (\ref{eq:ELBO_AESMC}). 
The objective can be considered as an estimate of MCFOs by SMC and is consistent to MCFOs with the asymptotic bias of $\mathcal{O}(1/K)$; see Appendix \ref{sec:fivo_and_mcfo} for detailed discussion. 
MCFOs can freely choose other estimator alternatives such as PIMH \citep{andrieu2010particle} and unbiased MCMC with couplings \citep{jacob2017unbiased} to further improve sampling efficiency of SMC. 

\subsection{Properties of MCFOs}
Except for the general properties inherited from MCOs such as \textit{bound} and \textit{consistency}, the convergence of MCFOs 
is \textit{monotonic} like IW-ELBO, but unique to earlier filtering objectives. Additionally, the asymptotic bias of MCFOs can be shown to relate to the total variances of estimators. 
\begin{proposition}
\label{prop:properties}
\textnormal{(Properties of MCFOs)}. Let $\mathcal{L}_{\text{MCFO}}^K$ be an MCFO of $\log p(\mathbf{x}_{1:T}) $ by a series of unbiased estimators $R_t$ of $p(\mathbf{x}_t|\mathbf{x}_{1:t-1})$ using $K$ samples. Then, 
\begin{enumerate}[label=\alph*)] 
\item \label{prop:properties_1} \textnormal{(Bound)} $\log p(\mathbf{x}_{1:T}) \geq \mathcal{L}_{\text{MCFO}}^K$.
\item \label{prop:properties_2} \textnormal{(Monotonic convergence)} $\mathcal{L}_{\text{MCFO}}^{K+1} \geq \mathcal{L}_{\text{MCFO}}^{K} \geq  \hdots \geq  \mathcal{L}_{\text{MCFO}}^{1}$.
\item \label{prop:properties_3} \textnormal{(Consistency)} If $p(\mathbf{z}_{1},\mathbf{x}_{1}) /q(\mathbf{z}_{1}|\mathbf{x}_{1})$ and ${p(\mathbf{z}_{1:t},\mathbf{x}_{1:t})}/{(p(\mathbf{z}_{1:t-1},\mathbf{x}_{1:t-1})q(\mathbf{z}_{t}|\mathbf{z}_{1:t-1},\mathbf{x}_{1:t}))}$ for all $t\in[2,T]$ are bounded, then $\mathcal{L}_{\text{MCFO}}^K \rightarrow \log p(\mathbf{x}_{1:T})$ as K $\rightarrow \infty$.
\item \label{prop:properties_4} \textnormal{(Asymptotic Bias)} For a large K, the bias of bound is related to the variance of estimator $R_t$, $\mathbb{V}[R_t]$, 
\[\lim_{K\rightarrow \infty} K(\log p(\mathbf{x}_{1:T}) - \mathcal{L}_{\text{MCFO}}^K) = \sum_{t=1}^T \frac{\mathbb{V}[R_t]}{2 p(\mathbf{x}_t|\mathbf{x}_{1:t-1})^2}. \]
\end{enumerate}
\end{proposition}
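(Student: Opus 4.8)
The plan is to prove each of the four properties by reducing to the known single-step MCO facts, since $\mathcal{L}_{\text{MCFO}}^K = \sum_{t=1}^T \mathcal{L}_t^K$ is a sum of ordinary Monte Carlo objectives, each of which is an estimate of $\log p(\mathbf{x}_t \mid \mathbf{x}_{1:t-1})$ via the unbiased estimator $R_t^K$.

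\textbf{Part \ref{prop:properties_1} (Bound).} By construction each $\mathcal{L}_t^K = \mathbb{E}_{Q_t^K}[\log R_t^K]$ with $\mathbb{E}[R_t^K] = p(\mathbf{x}_t \mid \mathbf{x}_{1:t-1})$ (unbiasedness of the self-normalized-type estimator, established in the appendix referenced after \eqref{eq:L_t_k}). Applying Jensen's inequality termwise gives $\mathcal{L}_t^K \le \log \mathbb{E}[R_t^K] = \log p(\mathbf{x}_t \mid \mathbf{x}_{1:t-1})$; summing over $t$ and using the telescoping decomposition \eqref{eq:recursion} yields the bound. I would state this as a one-line consequence of \eqref{eq:stochastic_bound} applied per step.

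\textbf{Part \ref{prop:properties_2} (Monotonic convergence).} Here I would invoke the IWAE/Burda monotonicity argument at the level of a single step: for a fixed proposal, the $K$-sample average estimator $R_t^K = K^{-1}\sum_{i=1}^K W_t^i$ of i.i.d.\ weights satisfies $\mathbb{E}[\log R_t^{K+1}] \ge \mathbb{E}[\log R_t^K]$, because $R_t^K$ is (in distribution) the conditional expectation of $R_t^{K+1}$ given a size-$K$ sub-multiset, so by conditional Jensen $\mathbb{E}[\log R_t^{K+1}] = \mathbb{E}[\log \mathbb{E}[R_t^{K+1}\mid \text{sub-sample}]] \le \dots$ — wait, the direction: $R_t^K = \mathbb{E}[R_t^{K+1} \mid \{W^1,\dots,W^K\}]$ only after symmetrizing, so one writes $R_t^K \overset{d}{=} \frac{1}{K+1}\sum_{j} R_{t,-j}^K$ where $R_{t,-j}^K$ leaves out index $j$ from $K+1$ draws, and then Jensen on the concave $\log$ gives $\log R_t^{K+1} \ge \frac{1}{K+1}\sum_j \log R_{t,-j}^K$, whose expectation is $\mathbb{E}[\log R_t^K]$. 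Thus $\mathcal{L}_t^{K+1}\ge \mathcal{L}_t^K$ for each $t$, and summing gives the chain of inequalities. The only subtlety is that the proposal $q(\mathbf{z}_t^i\mid\mathbf{z}_{1:t-1}^i,\mathbf{x}_{1:t})$ draws are independent across $i$ and the prefix $p(\mathbf{z}_{1:t-1}^{1:K}\mid \mathbf{x}_{1:t-1})$ factor is common, so the weights entering $R_t^K$ are genuinely i.i.d.\ conditional on the data — I would note this reduction explicitly.

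\textbf{Parts \ref{prop:properties_3} and \ref{prop:properties_4} (Consistency and Asymptotic Bias).} Both follow from a second-order delta-method expansion of $\mathbb{E}[\log R_t^K]$ around $\mathbb{E}[R_t^K] = p(\mathbf{x}_t\mid\mathbf{x}_{1:t-1}) =: p_t$. Writing $R_t^K = p_t(1 + U_K)$ with $\mathbb{E}[U_K]=0$ and $\mathbb{V}[U_K] = \mathbb{V}[R_t^K]/p_t^2 = \mathbb{V}[R_t]/(K p_t^2)$ (variance of an average of $K$ i.i.d.\ terms), a Taylor expansion $\log(1+U_K) = U_K - \tfrac12 U_K^2 + o(U_K^2)$ gives $\mathcal{L}_t^K = \log p_t - \tfrac12 \mathbb{V}[R_t]/(K p_t^2) + o(1/K)$. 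Multiplying by $K$, summing over $t$, and taking $K\to\infty$ yields exactly the stated limit in \ref{prop:properties_4}; the $o(1/K)\to 0$ part (which also gives consistency, \ref{prop:properties_3}) is where the boundedness hypotheses on the weight ratios are needed, to control higher moments of $U_K$ and justify the interchange of limit and expectation (uniform integrability / dominated convergence). The main obstacle, and the part I would spend the most care on, is making this delta-method step rigorous: one must argue that the remainder terms are genuinely $o(1/K)$ uniformly, which is why the statement assumes the per-step importance ratios are bounded — under that assumption $U_K$ is bounded and all moments are controlled, so a clean Taylor-with-remainder plus dominated convergence finishes it. I would cite the analogous analysis for MCOs/IW-ELBO (e.g., the treatment in \citep{mnih2016variational,maddison2017filtering}) for the single-step expansion and then emphasize that the sum over $t$ is finite so no new issue arises from the sequential structure.
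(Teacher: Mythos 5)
Your argument is correct and, for parts a) and b), coincides with the paper's: Jensen's inequality applied per step to the decomposition $\log p(\mathbf{x}_{1:T})=\sum_t \log p(\mathbf{x}_t|\mathbf{x}_{1:t-1})$, and the Burda-style subset-averaging identity plus Jensen for monotonicity (the paper follows \citep[Theorem~1]{burda2015importance} exactly as you do). For c) and d) you take a genuinely different route: the paper proves consistency via the strong law of large numbers, continuity of $\log$, uniform integrability and Vitali's convergence theorem, and obtains the asymptotic bias per time step by directly citing \citep[Theorem~3]{domke2018importance}, then summing over $t$ (with a Tannery argument when $T\to\infty$). You instead derive both from a single second-order expansion of $\mathbb{E}[\log R_t^K]$ around $p_t$; this is essentially a self-contained re-derivation of the cited Domke--Sheldon result, which has the merit of making explicit where boundedness enters, but it shifts onto you the burden the paper delegates to the citation: the Taylor-plus-dominated-convergence step needs control of $\log R_t^K$ on the event that $R_t^K$ is near zero (boundedness of the weights from above controls the positive moments of your $U_K$ but not the unboundedness of $\log$ from below), so a concentration or negative-moment argument is required there — a level of care comparable to what the paper itself leaves implicit in its Vitali/uniform-integrability step. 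Two small points: the i.i.d.-ness of the weights that you rely on (both for $\mathbb{V}[R_t^K]=\mathbb{V}[R_t]/K$ and for the sub-sampling argument) requires reading $p(\mathbf{z}_{1:t-1}^{1:K}|\mathbf{x}_{1:t-1})$ in $Q_t^K$ as a product of independent copies of the filtering posterior, i.e.\ each particle carries its own prefix rather than a single ``common'' one; and the appropriate single-step reference for the expansion is \citep{domke2018importance} rather than \citep{mnih2016variational,maddison2017filtering}.
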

\proof See Appendix \ref{sec:appendix_lower_bound_property}.

Although increasing the number of samples $K$ leads to a tighter MCFO, a too large $K$ is infeasible in terms of computational and memory.
It has also been shown that larger $K$ may deteriorate to learn proposals \citep{rainforth2018tighter}. An appropriate $K$ is a critical hyperparameters that affects learning and inference. On the other hand, \textit{asymptotic bias} suggests another way for a tighter bound, i.e.~using less variant estimators $R_t$, which is something that has been overlooked in recent literature. It explains why the bounds defined by SMC are tighter than IW-ELBO by SIS. Thus, a proposal model that permits less variant $R_t$, either designed or learned, is another key instrument. 



\subsection{Optimal Importance Proposals}
\label{sec:importance_proposal}

Considering proposals $q$ as an argument of MCFOs, we can derive the optimal proposals when they maximize the bound.
\begin{proposition}
\textnormal{(Optimal importance proposals $q^*$ for an MCFO)}. The bound is maximized and exact to $\log p(\mathbf{x}_{1:T})$ when the importance proposals are
\begin{equation*}
    \begin{aligned}
        & q^*(\mathbf{z}_{1}|\mathbf{x}_{1}) = p(\mathbf{z}_{1}|\mathbf{x}_{1}), \\
        & \text{for all } t=2:T \\
        & q^*(\mathbf{z}_{t}|\mathbf{z}_{1:t-1},\mathbf{x}_{1:t}) = \frac{p(\mathbf{z}_{1:t}|\mathbf{x}_{1:t})}{p(\mathbf{z}_{1:t-1}|\mathbf{x}_{1:t-1})} = p(\mathbf{z}_{t}|\mathbf{z}_{1:t-1}, \mathbf{x}_{1:t}).\\
    \end{aligned}
\end{equation*}
\end{proposition}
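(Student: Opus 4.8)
The plan is to show that the claimed proposals make every term $\mathcal{L}_t^K$ in the sum $\mathcal{L}_{\text{MCFO}}^K = \sum_{t=1}^T \mathcal{L}_t^K$ exactly equal to its target $\log p(\mathbf{x}_t|\mathbf{x}_{1:t-1})$, and that no proposal can do better term-by-term, so that the whole bound attains $\log p(\mathbf{x}_{1:T})$ via the decomposition \eqref{eq:recursion}. The key observation is that each $\mathcal{L}_t^K$ is an MCO of the form $\mathbb{E}_{Q_t^K}[\log R_t^K]$, so by the slack identity in \eqref{eq:stochastic_bound} the gap is $\mathbb{E}_{Q_t^K}\!\left[\log \frac{p(\mathbf{x}_t|\mathbf{x}_{1:t-1})}{R_t^K}\right] \ge 0$, with equality (by strict concavity of $\log$, hence the equality case of Jensen) if and only if $R_t^K$ is $Q_t^K$-almost-surely constant and equal to $p(\mathbf{x}_t|\mathbf{x}_{1:t-1})$.

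First I would analyze the $t=1$ term: with $q^*(\mathbf{z}_1|\mathbf{x}_1) = p(\mathbf{z}_1|\mathbf{x}_1)$, each importance weight $p(\mathbf{z}_1^i,\mathbf{x}_1)/q^*(\mathbf{z}_1^i|\mathbf{x}_1) = p(\mathbf{z}_1^i,\mathbf{x}_1)/p(\mathbf{z}_1^i|\mathbf{x}_1) = p(\mathbf{x}_1)$ is deterministically constant, so $R_1^K = p(\mathbf{x}_1)$ with probability one and $\mathcal{L}_1^K = \log p(\mathbf{x}_1)$. Then for $t \ge 2$ I would substitute $q^*(\mathbf{z}_t^i|\mathbf{z}_{1:t-1}^i,\mathbf{x}_{1:t}) = p(\mathbf{z}_t^i|\mathbf{z}_{1:t-1}^i,\mathbf{x}_{1:t})$ into the definition of $R_t^K$ and use the chain-rule factorization $p(\mathbf{z}_{1:t},\mathbf{x}_{1:t}) = p(\mathbf{z}_{1:t-1},\mathbf{x}_{1:t-1})\, p(\mathbf{x}_t|\mathbf{x}_{1:t-1})\, p(\mathbf{z}_t|\mathbf{z}_{1:t-1},\mathbf{x}_{1:t})$; the $q^*$ factor cancels the conditional posterior factor and the $p(\mathbf{z}_{1:t-1},\mathbf{x}_{1:t-1})$ factors cancel, leaving each summand equal to $p(\mathbf{x}_t|\mathbf{x}_{1:t-1})$, hence $R_t^K = p(\mathbf{x}_t|\mathbf{x}_{1:t-1})$ almost surely and $\mathcal{L}_t^K = \log p(\mathbf{x}_t|\mathbf{x}_{1:t-1})$. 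Summing over $t$ and invoking \eqref{eq:recursion} gives $\mathcal{L}_{\text{MCFO}}^K = \log p(\mathbf{x}_{1:T})$, which by Proposition~\ref{prop:properties}\ref{prop:properties_1} is the maximum possible value, so $q^*$ is indeed a maximizer.

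For the "only" direction — that maximizing the bound forces these proposals — I would argue that since $\mathcal{L}_{\text{MCFO}}^K = \sum_t \mathcal{L}_t^K$ and each term is bounded above by $\log p(\mathbf{x}_t|\mathbf{x}_{1:t-1})$ independently, attaining the global maximum requires each term to be tight; the Jensen equality condition then forces $R_t^K$ to be constant $Q_t^K$-a.s., and taking $K=1$ (or conditioning on a single particle, using that under $Q_t^K$ the particles are i.i.d. given the history) forces $p(\mathbf{z}_t,\mathbf{z}_{1:t-1},\mathbf{x}_{1:t})/(p(\mathbf{z}_{1:t-1},\mathbf{x}_{1:t-1})\,q(\mathbf{z}_t|\mathbf{z}_{1:t-1},\mathbf{x}_{1:t})) = p(\mathbf{x}_t|\mathbf{x}_{1:t-1})$ for $q$-almost-every $\mathbf{z}_t$, which rearranges exactly to $q(\mathbf{z}_t|\mathbf{z}_{1:t-1},\mathbf{x}_{1:t}) = p(\mathbf{z}_t|\mathbf{z}_{1:t-1},\mathbf{x}_{1:t})$. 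The main obstacle — really the only subtle point — is handling the support/absolute-continuity caveat: the equality of proposals holds only on the support of $p$ and one needs $q \ll p(\cdot|\mathbf{x})$ (equivalently the boundedness assumption of Proposition~\ref{prop:properties}\ref{prop:properties_3}) for the weights to be well defined; I would state this assumption explicitly and otherwise the argument is a short direct computation rather than anything deep.
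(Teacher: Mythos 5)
Your proof is correct and follows essentially the same route as the paper's: the paper writes $\log p(\mathbf{x}_t|\mathbf{x}_{1:t-1})=\mathcal{L}_t^K+\text{gap}$ and shows the gap vanishes at $q^*$, which is exactly your observation that each per-step importance weight collapses to the constant $p(\mathbf{x}_t|\mathbf{x}_{1:t-1})$, so $R_t^K$ is degenerate, every $\mathcal{L}_t^K$ is tight, and summing over $t$ with the bound property gives the maximum. Two minor remarks: your ``chain-rule factorization'' $p(\mathbf{z}_{1:t},\mathbf{x}_{1:t})=p(\mathbf{z}_{1:t-1},\mathbf{x}_{1:t-1})\,p(\mathbf{x}_t|\mathbf{x}_{1:t-1})\,p(\mathbf{z}_t|\mathbf{z}_{1:t-1},\mathbf{x}_{1:t})$ is not generic but uses $p(\mathbf{z}_{1:t-1}|\mathbf{x}_{1:t})=p(\mathbf{z}_{1:t-1}|\mathbf{x}_{1:t-1})$, i.e.\ the filtering identification already asserted by the proposition's second equality (the cancellation is unconditional if you plug in $q^*$ in its ratio form, as the paper does), and your necessity argument via Jensen's equality case is extra content the paper does not attempt, since its proof (and the statement) only concern sufficiency.
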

\proof See Appendix \ref{sec:optimal_importance_proposal}.

The optimal importance proposals always propagate samples from the previous filtering posterior $p(\mathbf{z}_{1:t-1}|\mathbf{x}_{1:t-1})$ to the new target $p(\mathbf{z}_{1:t}|\mathbf{x}_{1:t})$, thus leads MCFOs to be exact. For SSMs that assume Markovian latent variables 
and conditional independent observations, 
the optimal importance proposals are further simplified to $p(\mathbf{z}_{t}|\mathbf{z}_{t-1}, \mathbf{x}_{t})$ 
\citep[Proposition~2]{doucet2000sequential}. 
For common intractable problems, though the filtering posteriors and optimal proposals are not accessible, 
we can learn a parametric adaptive importance proposal model jointly with generative models by optimizing MCFOs.

\subsection{Learning Generative and Proposal Models}
\label{sec:gradient_estimate}

\begin{table*}[!htb]
  \centering
  \small
    \begin{tabular}{lcc}
    Method  & $\triangledown_{\boldphi}$ & $\triangledown_{\boldtheta}$\\
    \hline
    MCFO & See (\ref{eq:gradient_phi}) & See (\ref{eq:gradient_theta_rws})\\
    \hline
    AESMC/FIVO/VSMC &  
    \multicolumn{2}{c}{
    \parbox{0.7\textwidth}{
    \begin{equation}
        \triangledown_{\boldtheta, \boldphi} \log \hat{p}(\mathbf{x}_{1:T}) + \sum_{t=2}^T \log \frac{\hat{p}(\mathbf{x}_{1:t})}{\hat{p}(\mathbf{x}_{1:t-1})} \left( \sum_{i=1}^K \triangledown_{\boldtheta, \boldphi} \log ({w_{t-1}^{A_{t-1}^i}}/{\sum_j w_{t-1}^j}) \right) 
        \label{eq:AESMC_gradient}
    \end{equation}
    }}
    \\
    \hline
    IWAE & \multicolumn{2}{c}{
    $
    \triangledown_{\boldtheta, \boldphi} f_{\boldtheta, \boldphi}( \mathbf{x}_{1:T}, \Tilde{g}_{\boldphi}(\mathbf{x}_{1:T}^{1}, \boldsymbol{\epsilon}_{1:T}^{1}), \hdots, \Tilde{g}_{\boldphi}(\mathbf{x}_{1:T}, \boldsymbol{\epsilon}_{1:T}^{K}))
    $, where 
    $\Tilde{g}$ is reparameterized function of $q_{\boldphi}(\mathbf{z}_{1:T}|\mathbf{x}_{1:T})$ 
    } \\
    \hline
    NASMC & $
    \sum_{t=1}^T \sum_{i=1}^K \Tilde{w}_t^{i} \triangledown_{\boldphi} \log q_{\boldphi}(\hat{\mathbf{z}}_t^{i}|\mathbf{x}_{1:t}, \hat{\mathbf{z}}_{1:t-1}^{A_{t-1}^{i}})$ & 
    $ \sum_{t=1}^T \sum_{i=1}^K \Tilde{w}_t^{i} \triangledown_{\boldtheta} \log p_{\boldtheta}(\hat{\mathbf{z}}_t^{i}, \mathbf{x}_t|\mathbf{x}_{1:t-1}, \hat{\mathbf{z}}_{1:t-1}^{A_{t-1}^{i}})$
    \\
    \hline
  \end{tabular}
  \caption{Comparison of gradient estimates by MCFO, AESMC/FIVO/VSMC, IWAE, NASMC.}\vspace{-2mm}
  \label{table:gradient}
\end{table*}

To illustrate the learning of a flexible importance proposal and/or a generative model, 
we explicitly parameterize generative model $p_{\boldtheta}$ and proposal model $q_{\boldphi}$ by $\boldtheta$ and $\boldphi$ respectively, and optimize them by gradient-based algorithms.

Earlier methods 
suffer from high variance in gradient estimate due to the second term in (\ref{eq:AESMC_gradient}) in Table~\ref{table:gradient}. This is mainly caused by 1) large magnitudes of $\log \hat{p}(\mathbf{x}_{1:T})$, especially at the beginning of training \citep{mnih2014neural}; and 2) high variance in the gradient for non-smooth categorical distribution of discrete ancestral indices $A_{t-1}^i$ in SMC. 
FIVO and VSMC propose to ignore the high variance term to stabilize and accelerate convergence. However, it comes at the cost of an induced bias that cannot be eliminated by increasing the number of samples
 and deteriorates the convergence to optimum \citep{roeder2017sticking}. 

MCFOs circumvent the issue for $\triangledown_{\boldphi} \mathcal{L}_{\text{MCFO}}^K $ by the \textit{reparameterization trick} \citep{kingma2014auto}
without an extra bias. Assuming the proposal distribution $q_{\boldphi}$ is reparameterizable \citep{mohamed2019monte}, $\triangledown_{\boldphi} \mathcal{L}_t^K$ is estimated less variantly by:
\begin{equation}
    \begin{aligned}
        & \triangledown_{\boldphi} \mathcal{L}_t^K = \mathbb{E}_{p_{\boldtheta}(\mathbf{z}_{1:t-1}^{i}|\mathbf{x}_{1:t-1})} \big[\triangledown_{\boldphi} \mathbb{E}_{q_{\boldphi}(\mathbf{z}_{t}^{i}|\mathbf{z}_{1:t-1}^{i},\mathbf{x}_{1:t})} \\
        & \quad \left(\log \frac{1}{K} \sum_{i=1}^K 
        \frac{p_{\boldtheta}(\mathbf{z}_{1:t}^{i},\mathbf{x}_{1:t})}{p_{\boldtheta}(\mathbf{z}_{1:t-1}^{i},\mathbf{x}_{1:t-1})q_{\boldphi}(\mathbf{z}_{t}^{i}|\mathbf{z}_{1:t-1}^{i},\mathbf{x}_{1:t})} \right) \big]\\
        \simeq & \triangledown_{\boldphi}f_{\boldtheta,\boldphi}(\mathbf{x}_{1:t},\hat{\mathbf{z}}_{1:t-1}^{1:K},\\
        & \quad \underbrace{ g_{\boldphi}(\hat{\mathbf{z}}_{1:t-1}^{1}, \boldsymbol{\epsilon}^i, \mathbf{x}_{1:t}), \hdots, g_{\boldphi}(\hat{\mathbf{z}}_{1:t-1}^{K},\boldsymbol{\epsilon}^K, \mathbf{x}_{1:t})}_{\text{reparameterization trick, } \hat{\mathbf{z}}_{t}^{i} = g_{\boldphi}(\hat{\mathbf{z}}_{1:t-1}^{i},\boldsymbol{\epsilon}^i, \mathbf{x}_{1:t})}), 
    \end{aligned}
    \label{eq:gradient_phi}
\end{equation}
where $\{\hat{\mathbf{z}}_{1:t}^{i}\}_{i=1:K}$ are K sample trajectories, e.g. from SMC, and can be specified with ancestral indices $A_{t-1}^i$ when resampling applies, 
$f_{\boldtheta,\boldphi}(\cdot)$ is the logarithm average function $\log \frac{1}{K} \sum_{i=1}^K \left( \frac{\cdot}{\cdot} \right)$, and $\boldsymbol{\epsilon}^i$ is a sample from a base distribution $p(\boldsymbol{\epsilon})$. 
The same trick cannot directly apply to $\triangledown_{\boldtheta} \mathcal{L}_t^K$, because of the existence of $\boldtheta$ in the expectation. 
Instead, we use the score function of $p(\mathbf{x}_{t}|\mathbf{x}_{1:t-1})$ to estimate $\triangledown_{\boldtheta} \mathcal{L}_t^K$:
\begin{equation}
    \begin{aligned}
        \triangledown_{\boldtheta} \mathcal{L}_t^K \simeq \sum_{i=1}^K \Tilde{w}^i_t \triangledown_{\boldtheta} \log p_{\boldtheta}(\mathbf{x}_{t}, \hat{\mathbf{z}}_{t}^{i}|\mathbf{x}_{1:t-1}, \hat{\mathbf{z}}_{1:t-1}^{i}),
    \end{aligned}
    \label{eq:gradient_theta_rws}
\end{equation}
\noindent
where $\Tilde{w}^i_t$ are the normalized importance weights of $\hat{\mathbf{z}}_{1:t}^{i}$; see Appendix \ref{sec:appendix_gradient_theta} and \ref{sec:appendix_gradient_theta_rws} for detailed derivation of (\ref{eq:gradient_phi}) and (\ref{eq:gradient_theta_rws}). Essentially, the score function estimate is equivalent as dropping the high variance term in (\ref{eq:AESMC_gradient}) for $\triangledown_{\boldtheta}$. 

NASMC \citep{gu2015neural} is closely related to MCFOs with SMC implementation in terms of $\triangledown_{\boldtheta}$, and RWS \citep{bornschein2014reweighted} is a special case of NASMC that replaces SMC by SIS. These two methods, however, construct a different surrogate objectives for optimizing $\boldphi$. While NASMC and RWS minimize the approximated inclusive KL-divergence, $\text{KL}(\hat{p}_{\boldtheta}(\mathbf{z}_{1:t}|\mathbf{x}_{1:t})||q_{\boldphi}(\mathbf{z}_{1:t}|\mathbf{x}_{1:t}))$, MCFOs minimize the 
disclusive KL-divergence, $\text{KL}(Q_t^K(\mathbf{z}_{1:t}^{1:K}|\mathbf{x}_{1:t})||p_{\boldtheta}(\mathbf{z}_{1:t}^{1:K}|\mathbf{x}_{1:t}))$, on the extended latent space as the dual problem of maximizing surrogate objectives. When the family of proposals is adequately flexible to include simple true posteriors
, both NASMC and MCFOs converge to the same optimum. To fit a potentially complex multi-modal posterior, the simple proposal learned by NASMC and RWS tends to have undesired low density everywhere in order to cover all modalities, thus impairs 
the sample efficiency of estimators and restricts the learning of generative models. For MCFOs, $Q_t^K(\mathbf{z}_{1:t}^{1:K}|\mathbf{x}_{1:t})$ is naturally a mixture of simple proposal distributions $q_{\phi}$ with importance weights $\Tilde{w}_{t-1}^i$, which remains flexible to fit multi-modal posteriors, while sustaining sample efficiency.

Furthermore, when the latent and observation variables of generative models are assumed to be finite-order Markovian, 
both gradients of MCFOs can be updated in an incremental manner which makes them well suited for arbitrarily long sequences and data streams.

\section{Experiments}
We seek to evaluate our method in experiments by answering: 1) what is the side-effect of ignoring the high variance term as proposed in earlier methods; 
2) do the gradient estimates of MCFOs reduce the variance without the cost of additional bias; 
3) how does the number of samples affect the learning of generative models and how sample efficient are the learned proposal models? 
We evaluate two instances of MCFOs, MCFO-SMC and MCFO-PIMH, using SMC and PIMH 
respectively 
to learn generative and proposal models on LGSSM, non-Gaussian, nonlinear, high dimensional SSMs of video sequences, and non-Markovian polyphonic music sequences\footnote{The implementation of our algorithms and experiments are available at \url{https://github.com/ssajj1212/MCFO}.}. We restrict the form of posteriors to $q_{\boldphi}(\mathbf{z}_{t}|\mathbf{z}_{t-1}, \mathbf{x}_{t})$ for SSM cases to encode history into low dimensional representations, while using VRNN \citep{chung2015recurrent} to accommodate long temporal dependencies for non-Markovian data. To be noted, all models are amortized 
over all time instances. 

\begin{figure*}[!htb]
    \centering
    \begin{subfigure}[b]{0.707\textwidth}
        \centering
        \includegraphics[width=\textwidth]{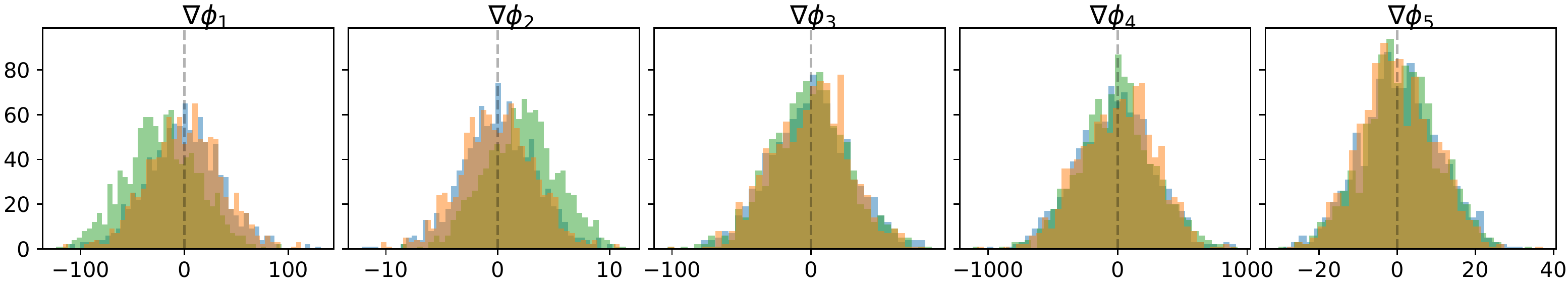}
    \end{subfigure}%
    \hfill
    \begin{subfigure}[b]{0.293\textwidth}
        \centering
        \includegraphics[width=\textwidth]{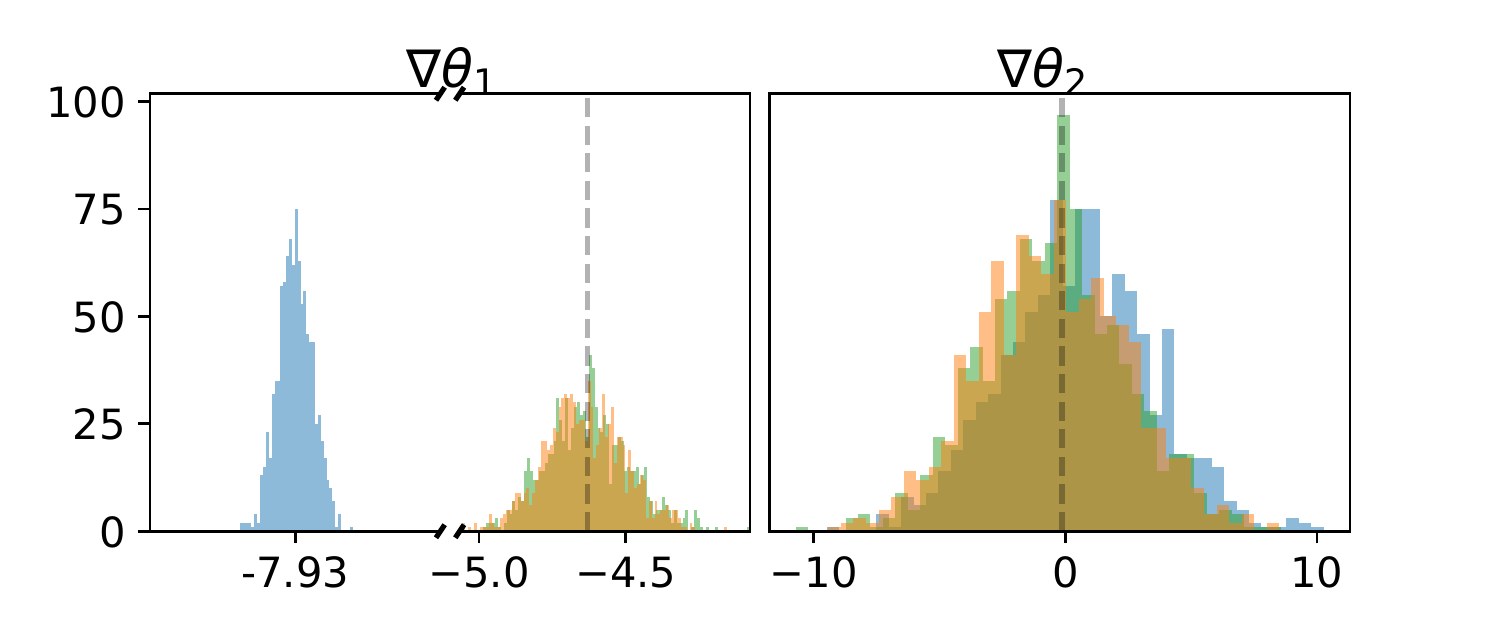}
    \end{subfigure}%
    \vfill
    \begin{subfigure}[b]{0.707\textwidth}
        \centering
        \includegraphics[width=\textwidth]{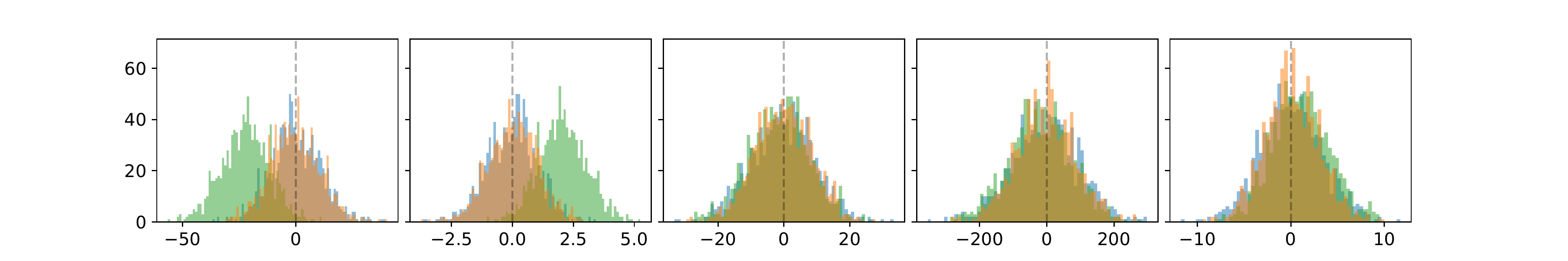}
    \end{subfigure}%
    \hfill
    \begin{subfigure}[b]{0.293\textwidth}
        \centering
        \includegraphics[width=\textwidth]{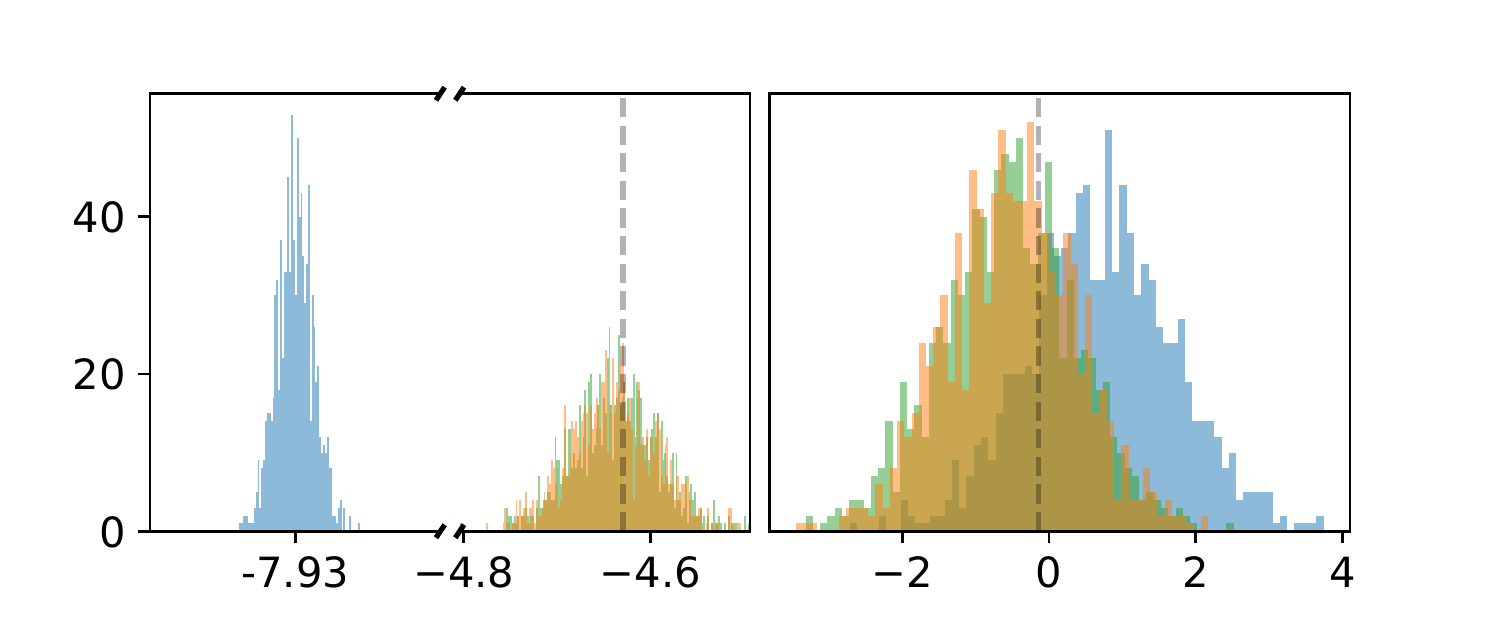}
    \end{subfigure}%
    \vfill
    \begin{subfigure}[b]{0.707\textwidth}
        \centering
        \includegraphics[width=\textwidth]{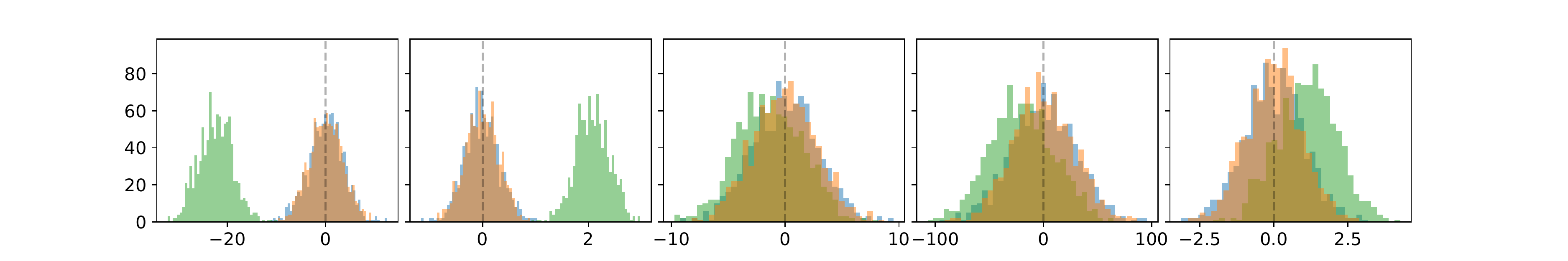}
    \end{subfigure}%
    \hfill
    \begin{subfigure}[b]{0.293\textwidth}
        \centering
        \includegraphics[width=\textwidth]{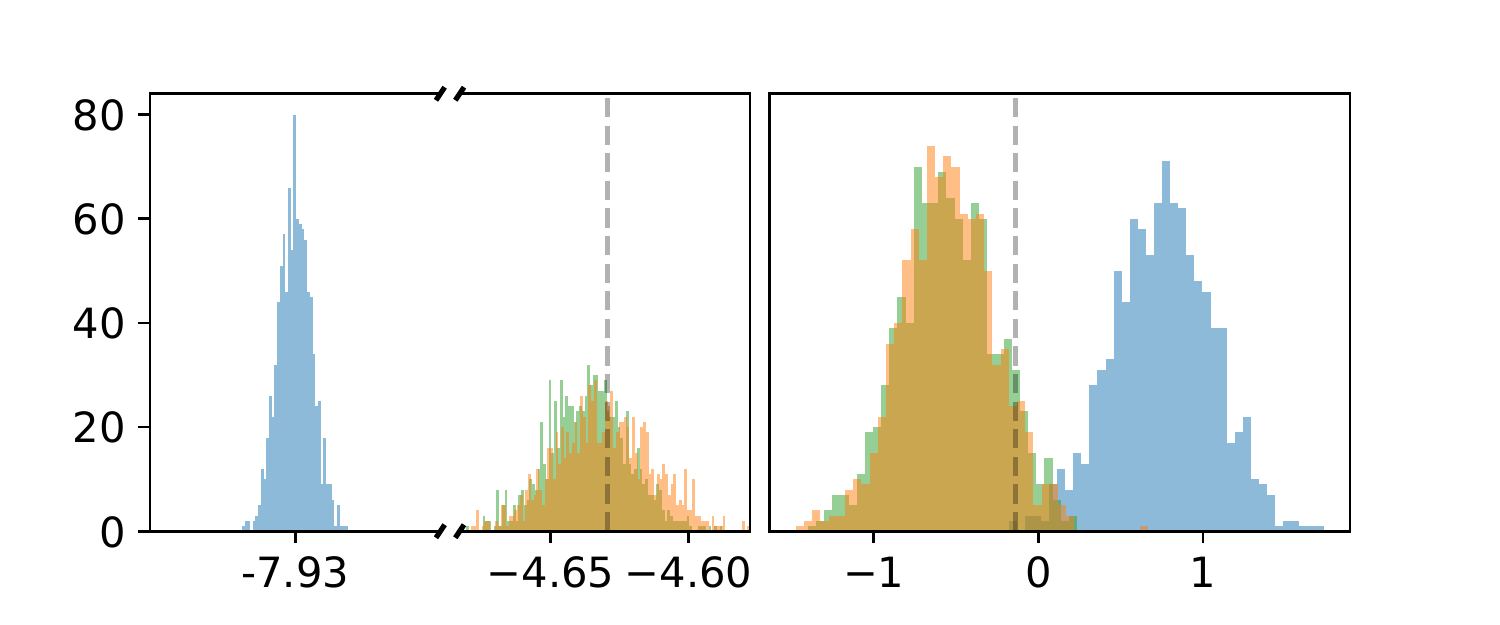}
    \end{subfigure}
    \vfill
    \begin{subfigure}[b]{0.305\textwidth}
        \centering
        \includegraphics[width=\textwidth]{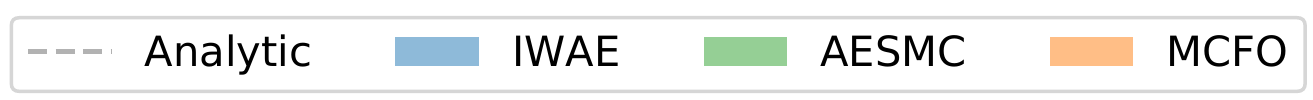}
    \end{subfigure} \vspace{-2mm}
    \caption{Gradient estimates of IWAE, AESMC, MCFO with respect to generative and proposal parameters at their optima with different numbers of samples $K$; \textit{Top:} $K = 10$, \textit{Middle:} $K = 100$, \textit{Bottom:} $K = 1000$.}\vspace{-4mm}
    \label{fig:gradient_phi_optimum}
\end{figure*}

\iftrue
\begin{figure*}[!htb]
    \centering
    \begin{subfigure}[b]{0.33\textwidth}
        \centering
        \includegraphics[width=\textwidth]{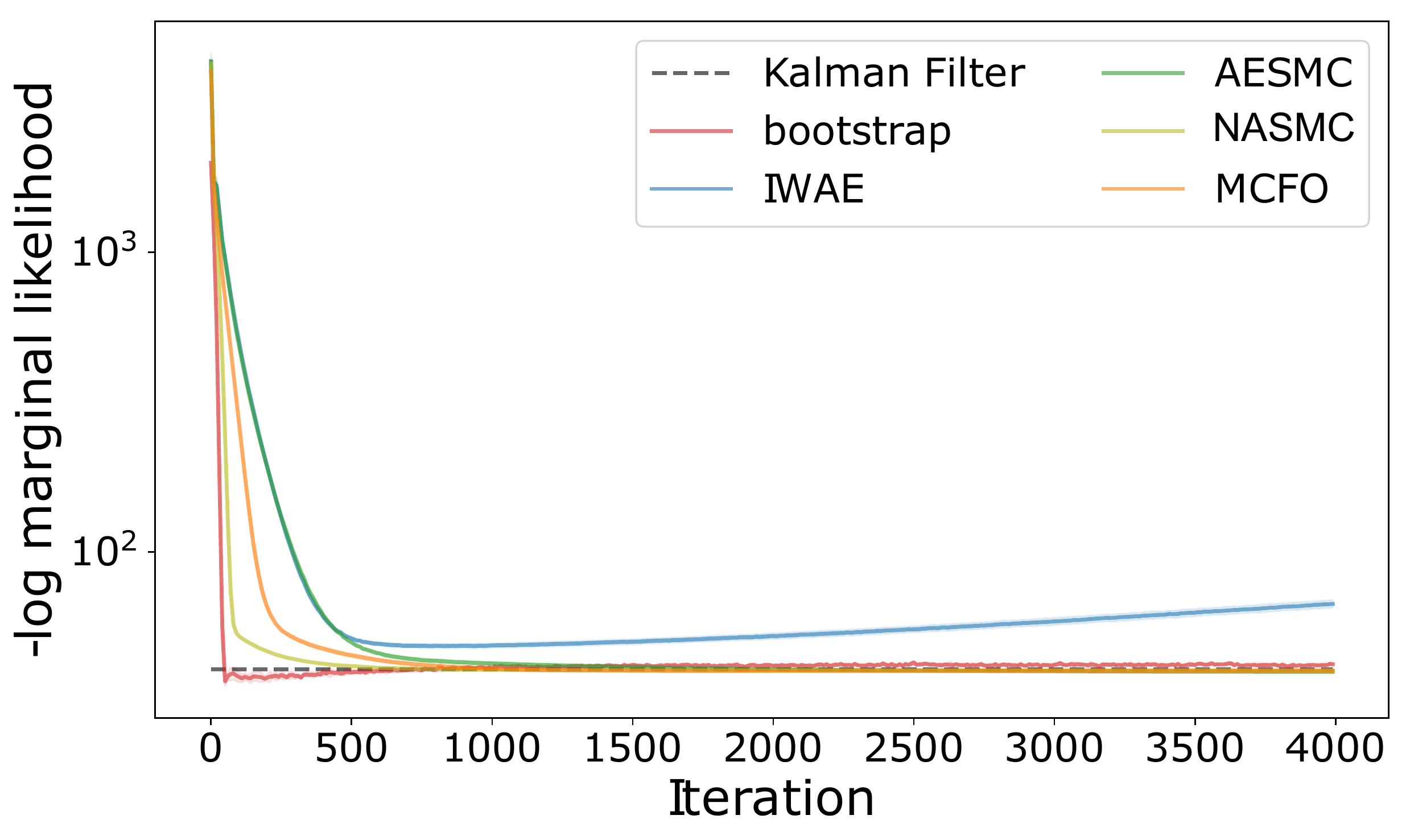}
    \end{subfigure}%
    \hfill
    \begin{subfigure}[b]{0.33\textwidth}
        \centering
        \includegraphics[width=\textwidth]{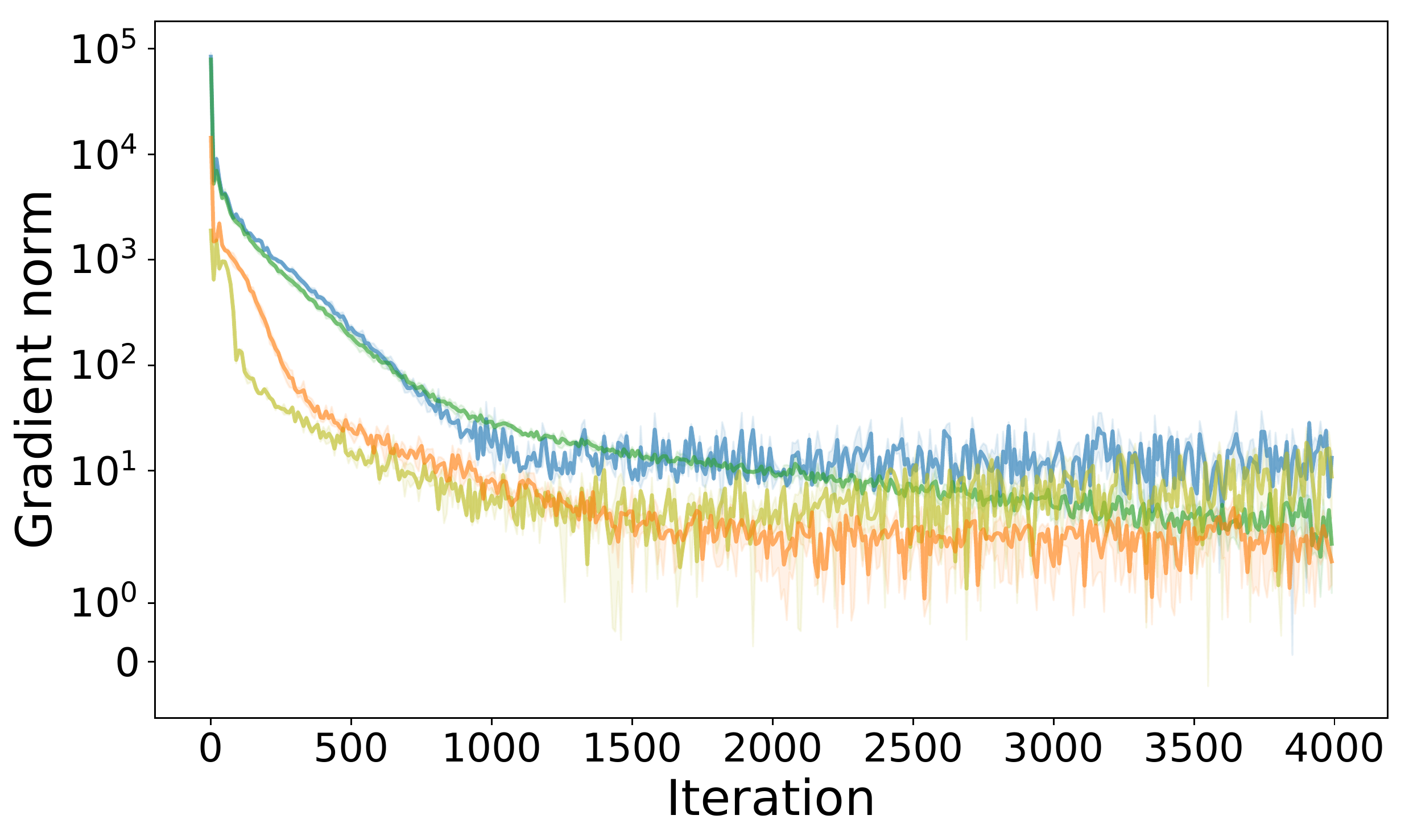}
    \end{subfigure}%
    \hfill
    \begin{subfigure}[b]{0.33\textwidth}
        \centering
        \includegraphics[width=\textwidth]{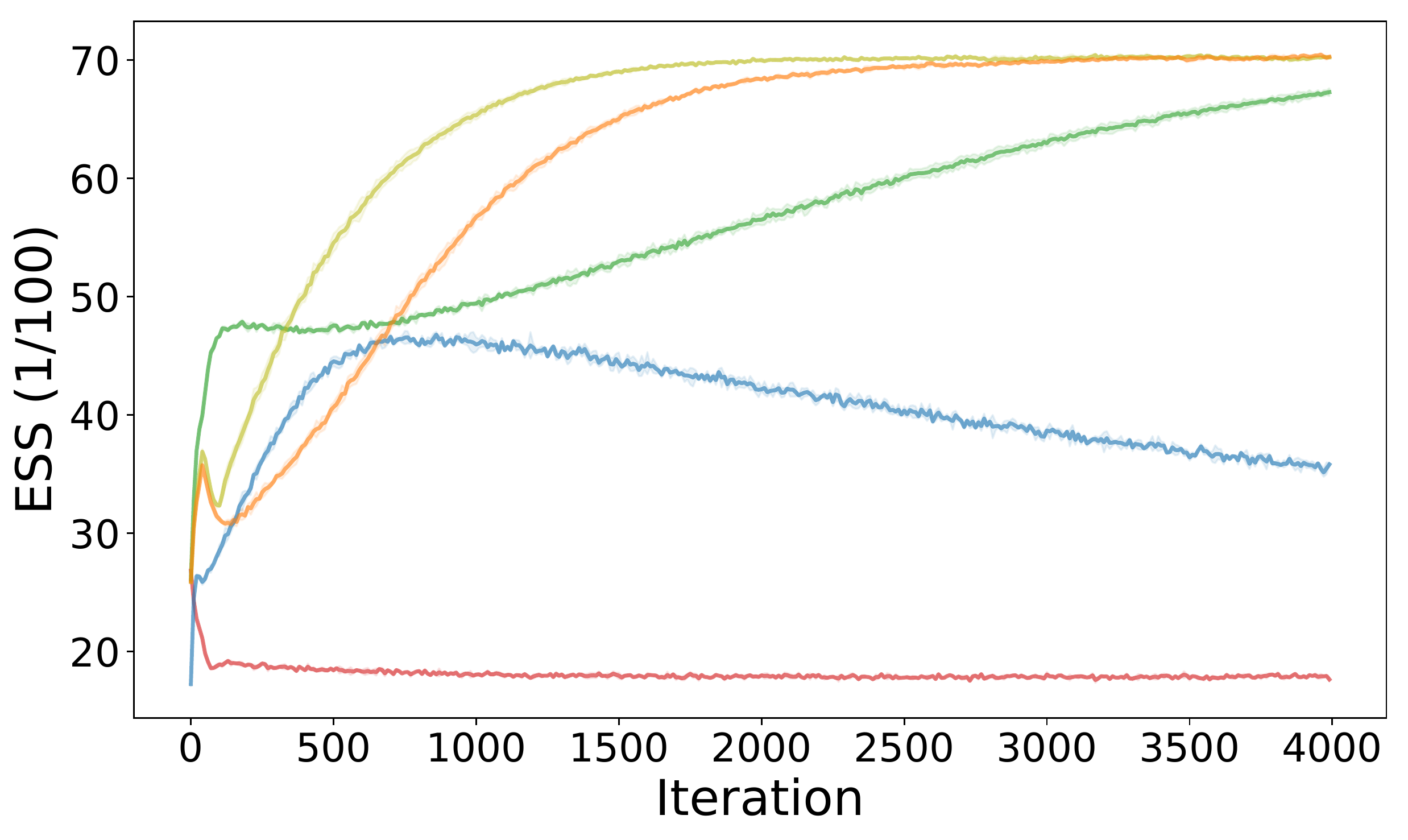}
    \end{subfigure} \vspace{-3mm}
    \caption{\textit{Left}: Negative marginal log-likelihoods (NLLs) on test data. \textit{Middle}: Gradient norms of parameters. \textit{Right}: Effective sample size (ESS). Lines indicate the average of 3 random seed trainings and shaded areas for standard deviation. See Figure \ref{fig:LGSSM_proposal_bias} for the convergence of generative and proposal parameters, $\boldtheta$ and $\boldphi$ in Appendix \ref{sec:appendix_learning_ssm}.}\vspace{-3mm}
    \label{fig:learning_ssm_K100}
\end{figure*}
\fi

\subsection{Gradient Estimation}

Following \citep{rainforth2018tighter, le2017auto}, we carry out experiments to examine gradient estimators on a tractable LGSSM, defined 
by $\theta_1$ and $\theta_2$, and importance proposal $q_{\boldphi}(\mathbf{z}_{t}|\mathbf{z}_{t-1}, \mathbf{x}_{t})$, parameterized by $\boldphi$: 
\begin{equation}
    \begin{aligned}
        & p(z_1) = \mathcal{N}(z_1; \mu_0, \sigma_0^2), p_{\boldtheta}(z_t|z_{t-1}) = \mathcal{N}(z_t; \theta_1 z_{t-1}, \Sigma_Q),\\
        & p_{\boldtheta}(x_t|z_t) = \mathcal{N}(x_t; \theta_2 z_t, \Sigma_R),\\
        & q_{\boldphi}(z_1|x_{1}) = \mathcal{N}(z_1; \phi_1 x_1 + \phi_2, \Sigma_{q,1}), \\
        & q_{\boldphi}(z_t|z_{t-1},x_{t}) = \mathcal{N}(z_t; \phi_3 z_{t-1} + \phi_4 x_t + \phi_5, \Sigma_{q,t}).
    \end{aligned}
    \label{eq:lgssm}
\end{equation}
See Appendix \ref{sec:appendix_optimal_proposal_lgssm} for detailed derivations. 


The gradient estimates are computed by backwards automatic differentiation \citep{baydin2017automatic} on the objectives defined by IWAE, AESMC and MCFO-SMC w.r.t. $\boldtheta$ and $\boldphi$ using sequences generated by the LGSSM. AESMC is implemented ignoring the high variance term in (\ref{eq:AESMC_gradient}) as suggested. Figure~\ref{fig:gradient_phi_optimum} shows 1000 gradient samples by all three methods under different numbers of samples, $K$, when both $\boldtheta$ and $\boldphi$ are at their optima. 

For $\triangledown \boldphi$, the induced bias of AESMC is distinct and does not disappear with increasing $K$, which makes parameters unable to converge to the exact optimum. Although increasing $K$ decreases the variance for all methods, it is specially detrimental to AESMC for which gradient estimates are barely close to true gradients \citep{rainforth2018tighter}, but beneficial to IWAE and MCFO. For $\triangledown \boldtheta$, MCFO and AESMC have similar estimates close to the analytical gradients, while IWAE estimates are substantially deviated due to the high variance of SIS estimators. Given the empirical gain of an alternating strategy with optimizing IWAE objective for $\boldphi$ and AESMC for $\boldtheta$ \citep{le2017auto}, training by MCFOs is expected to have a similar performance but with less computations. 
See Appendix \ref{sec:appendix_gradient_estimate_ex} for more gradient estimates at other locations. 

\subsection{Learning and Inference on LGSSMs}


To examine the learning of generative and proposal parameters, we generate 5000 trajectories by LGSSM in (\ref{eq:lgssm}) with $\theta_1 = 0.9$, $\theta_2 = 1.2$, of which 4000 are for training and rest for testing. Figure~\ref{fig:learning_ssm_K100} illustrates 5 benchmarking methods including bootstrap filtering \citep{sarkka2013bayesian}, IWAE, AESMC, NASMC and MCFO-SMC, using the same initialization and optimizer; see Appendix \ref{sec:appendix_learning_ssm} 
for experiment setups. Note that bootstrap uses prior as proposal, thus no proposal parameter needs to learn. To evaluate the performance of learned proposal models for sample efficiency and tightness of lower bound, we report the variance of estimators by $\text{ESS} = (\sum_{i} (\Tilde{w}_t^{i})^2)^{-1}$, and average over test sequences. 

MCFO-SMC and NASMC learn more sample efficient proposal models than bootstrap, AESMC and IWAE, and converge to the exact analytic optimum. Although AESMC does not differ significantly in terms of NLLs from MCFO and NASMC, the bias in gradient estimates shown previously, causes it slow to converge and cannot converge to the exact optimum. MCFO learns both generative and proposal models faster than AESMC. 
For this simple case, NASMC converges faster than MCFOs, since fitting a Gaussian proposal with NASMC to the uni-modal Gaussian posterior of LGSSM is easier than fitting a mixture of Gaussian proposals with MCFO. However, NASMC may fail to learn multi-modal posteriors for general intractable problems, as shown in the next section. Furthermore, increasing the number of samples and replacing SMC by PIMH with different number of sweeps only slightly improve the learning by MCFOs, see more results in Appendix \ref{sec:appendix_learning_ssm}. 

\iftrue
\begin{figure*}[!htb]
    \centering
    \begin{subfigure}[b]{0.71\textwidth}
        \centering
        \includegraphics[width=0.49\textwidth]{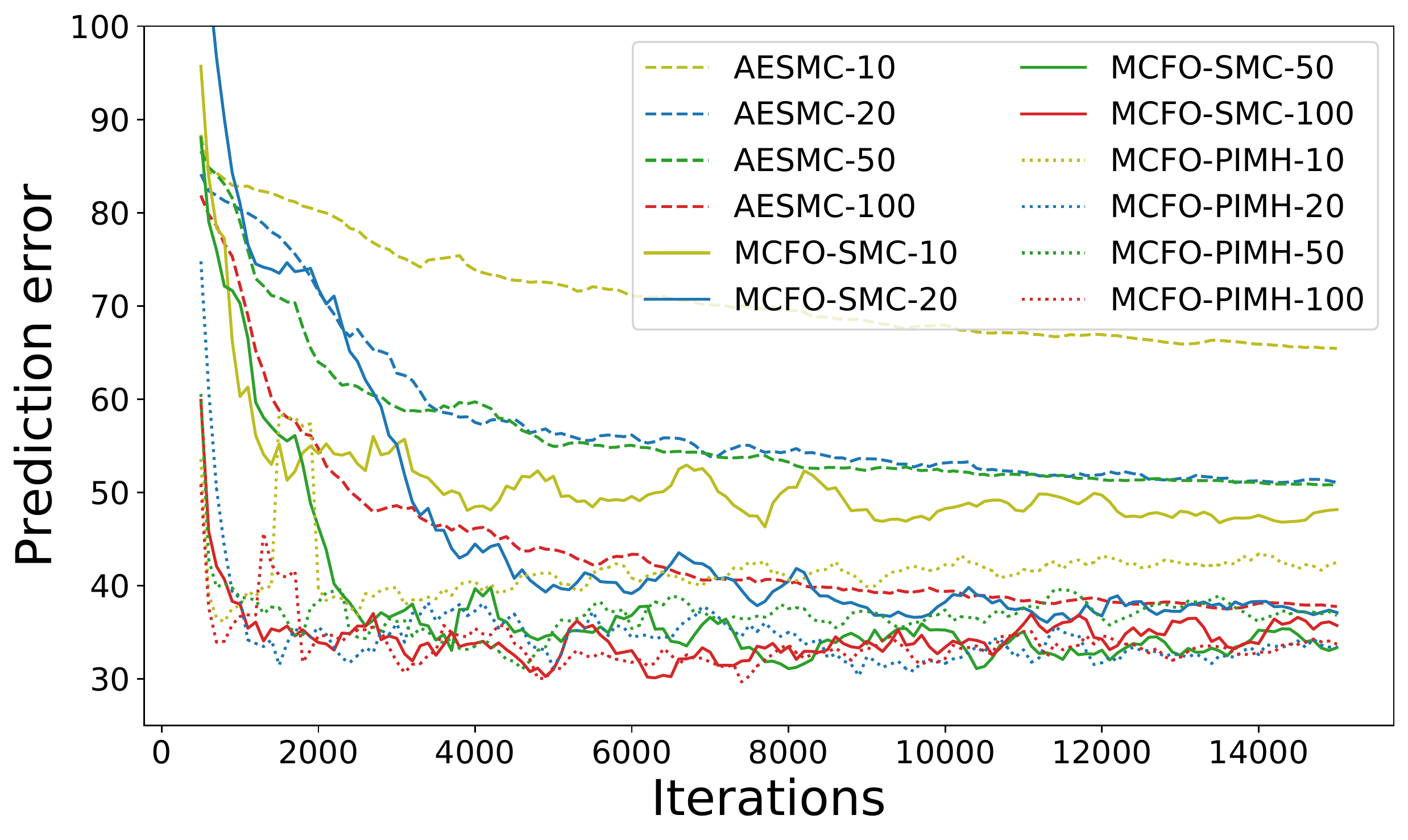}
        \includegraphics[width=0.49\textwidth]{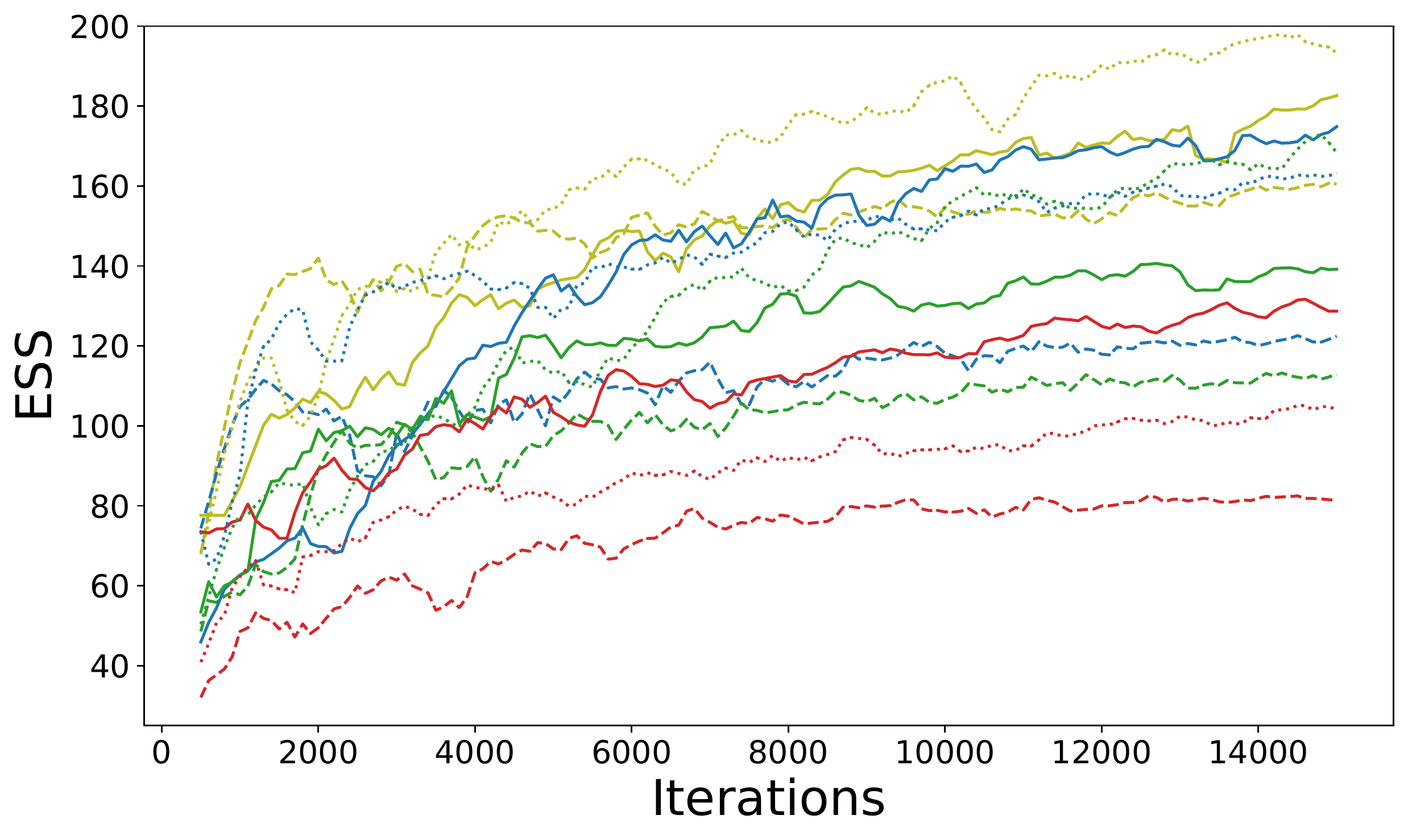}
    \end{subfigure}%
    \begin{subfigure}[b]{0.27\textwidth}
        \includegraphics[width=\textwidth]{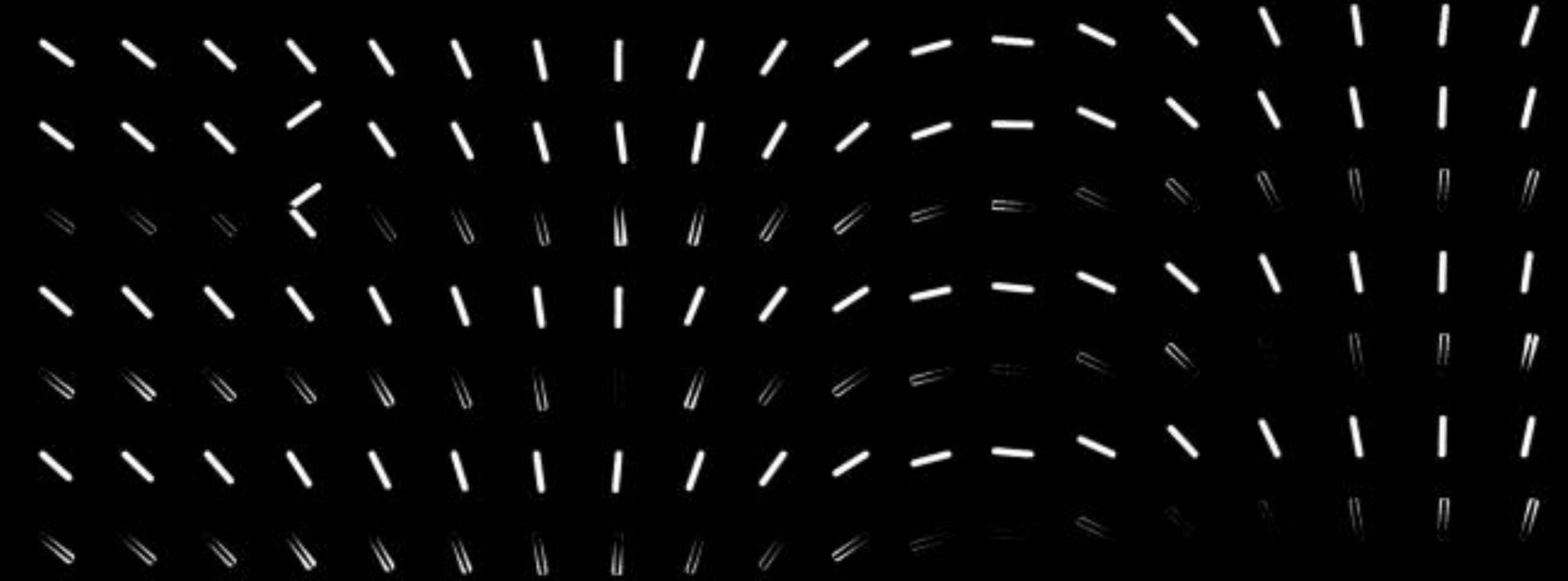}
        \includegraphics[width=\textwidth]{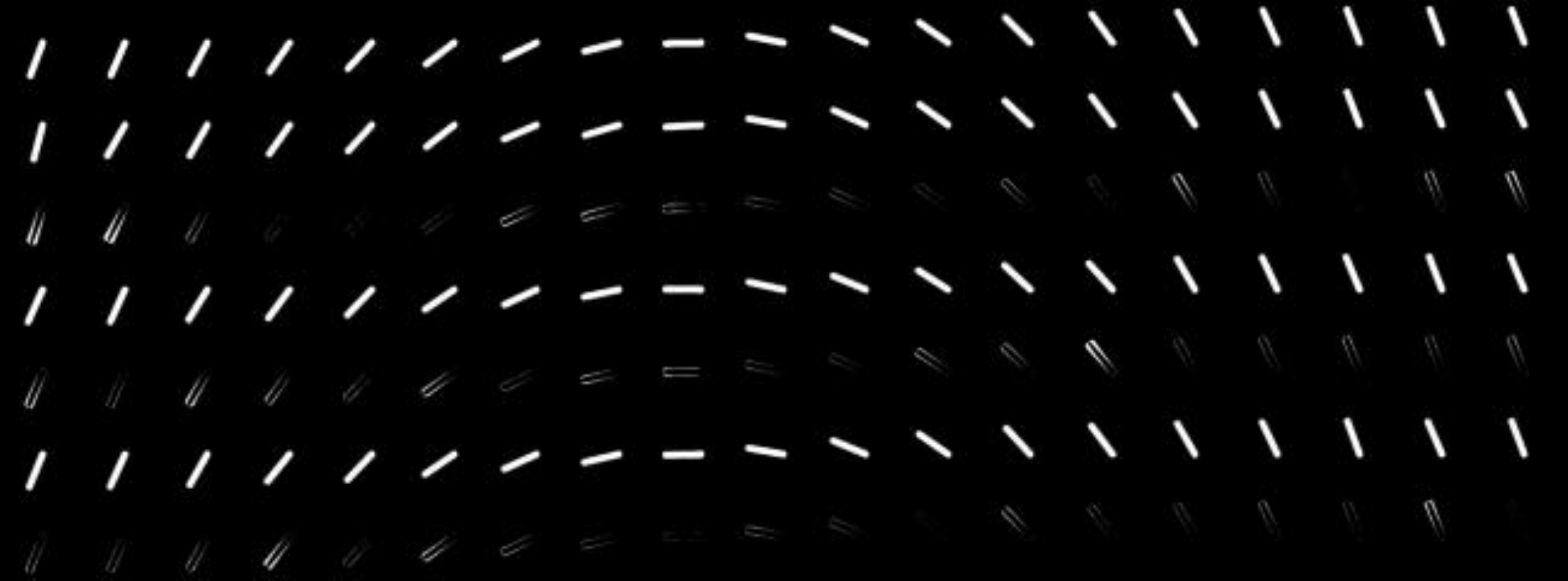}
    \end{subfigure}
    \vspace{-2mm}
    \caption{\textit{Left, Middle}: One-step prediction errors and ESS on the test sets of generative and proposal models learned by AESMC, MCFO-SMC, MCFO-PIMH with $K=10,20,50,100$, evaluated by SMC with 1000 samples and moving average over 3 evaluation runs. \textit{Right:} Two sequences with one-step predictions by AESMC, MCFO-SMC and MCFO-PIMH with $K=100$. Each row is Bernoulli mean of observations, the one-step predictions and the absolute differences between predictions and observations by AESMC, MCFO-SMC and MCFO-PIMH.}\vspace{-2mm}
    \label{fig:evaluation_Pendulum}
\end{figure*}

\begin{table*}[!htb]
  \centering
  \small
    \begin{tabular}{l|llll||lllll}
      & $K$    & AESMC  & MCFO-SMC & MCFO-PIMH   & $K$    & AESMC  & MCFO-SMC & MCFO-PIMH  \\
    \hline
    Prediction &\multirow{2}{*}{10}&  65.53 $\pm$ 0.18 & 47.54 $\pm$ 0.94 & \textbf{42.14 $\pm$ 1.49} & \multirow{2}{*}{50} &  50.84 $\pm$ 0.16 & \textbf{34.06 $\pm$ 1.46} & 37.01 $\pm$ 1.47\\
    ESS &  & 160.04 $\pm$ 1.97 &   180.97 $\pm$ 3.16 &   \textbf{195.62 $\pm$ 3.05} &   & 112.53 $\pm$ 2.42 &  139.24 $\pm$ 2.46 & \textbf{168.78$\pm$ 4.51}\\
    \hline
    Prediction & \multirow{2}{*}{20} & 51.13 $\pm$ 0.37  & 37.18 $\pm$ 0.72 & \textbf{33.82 $\pm$ 2.06} & \multirow{2}{*}{100} & 37.87 $\pm$ 0.21 & 36.17 $\pm$ 1.82  & \textbf{33.71 $\pm$ 0.98} \\
    ESS &  & 122.51 $\pm$ 1.89 & \textbf{172.99  $\pm$ 3.46} &  163.01 $\pm$ 2.10 &   &  81.93 $\pm$ 0.82 & \textbf{130.09 $\pm$ 2.12} & 104.53 $\pm$ 1.94
  \end{tabular}
  \vspace{-2mm}
  \caption{One-step prediction errors and ESS on the test set of generative and proposal models learned by AESMC, MCFO-SMC, MCFO-PIMH with $K=10,20,50,100$, evaluated by SMC with 1000 samples averaged over last 1000 iterations.}\vspace{-2mm}
  \label{table:pendulum_evaluation}
\end{table*}
\fi

\begin{table*}[!htb]
  \centering
    \small
    \begin{tabular}{l c c c c c}
    Methods & Nottingham & JSB chorales & MuseData & Piano-midi.de\\
    \hline
    MCFO-SMC-10 & 2.23 $\pm$0.16 & 3.87 $\pm$0.09 & 3.79 $\pm$0.10 & 6.24$\pm$0.14 \\
    MCFO-SMC-20 & 2.14 $\pm$0.13 & 3.69 $\pm$0.12 & 3.65 $\pm$0.11 & 6.11$\pm$0.15 \\
    MCFO-PIMH-10 & 2.12 $\pm$0.10 & 3.63 $\pm$0.07 & 3.59 $\pm$0.08 & 6.08$\pm$0.09 \\
    MCFO-PIMH-20 & \textbf{2.06 $\pm$0.08} & \textbf{3.54 $\pm$0.08} & \textbf{3.48 $\pm$0.10} & \textbf{6.03 $\pm$0.12} \\
    \hline
    FIVO & 2.58 $\dagger$ (2.60 $\pm$ 0.18)& 4.08 $\dagger$ (3.90 $\pm$0.14) & 5.80 $\dagger$ (5.85$\pm$0.15) & 6.41 $\dagger$ (6.37$\pm$0.19) \\
    IWAE & 2.52 $\dagger$ (2.50 $\pm$ 0.25)& 5.77 $\dagger$ (5.43$\pm$0.20) & 6.54 $\dagger$ (6.28$\pm$0.23) & 6.74 $\dagger$ (6.54$\pm$0.21)\\
    NASMC \citep{gu2015neural} $\dagger$ & 2.72 & 3.99 & 6.89 & 7.61 \\
    SRNN \citep{fraccaro2016sequential} $\dagger$ & 2.94 & 4.74 & 6.28 & 8.20 \\
    STONE \citep{bayer2014learning} $\dagger$ & 2.85 & 6.91 & 6.16 & 7.13 \\
  \end{tabular}
  \vspace{-1mm}
  \caption{Estimated NLL per time on polyphonic test sets by SMC with 500 particles. MCFOs, FIVO and IWAE are evaluated by 10 runs, and both FIVO and IWAE, trained the same as MCFO-SMC-10, are reported in parenthesis. $\dagger$ is originally reported in \citep{maddison2017filtering}. 
  }\vspace{-3mm}
  \label{table:music_evaluation}
\end{table*}

\subsection{Video Sequences}
\label{sec:ex_pendulum}

To assess MCFOs in more general cases
, we simulate 1000 video sequences of a single pendulum system in \texttt{gym} \citep{1606.01540}, out of which 500 are used for testing. Each sequence contains 20 $32\times32$ pixel grayscale images representing factorized Bernoulli distributions of high-dimensional observations; see examples in Figure \ref{fig:evaluation_Pendulum}. The transition and proposal distributions, $p_{\boldtheta}(\mathbf{z}_t|\mathbf{z}_{t-1})$ and $q_{\boldphi}(\mathbf{z}_t|\mathbf{x}_t, \mathbf{z}_{t-1})$, are parameteric Gaussian MLPs, while observation models, $p_{\boldtheta}(\mathbf{x}_t|\mathbf{z}_{t})$, are parameteric Bernoulli MLPs. The latent dimension is set to 3, and optimizers and model definitions are the same for all methods; see Appendix \ref{sec:appendix_pendulum} for a detailed description of experiment setups.


Figure \ref{fig:evaluation_Pendulum} shows the commonly used one-step prediction errors in observations and ESSs on the test set, evaluated by SMC with 1000 particles on the models trained by AESMC, MCFO-SMC and MCFO-PIMH with $K=10, 20, 50, 100$. Additionally, Table \ref{table:pendulum_evaluation} reports both metrics averaged over the last 1000 iterations of trainings. Note that \textit{NASMC fails to converge in this task regardless of $K$}. Compared to AESMC, both MCFO-SMC and MCFO-PIMH, show quicker convergences, lower prediction errors and higher ESSs that indicate more sample efficient proposal models
, especially at smaller $K$. Furthermore, Figure \ref{fig:latent_filtering_prediction} in Appendix \ref{sec:appendix_pendulum} shows that MCFOs implicitly regularize for simpler generative models. Although MCFO-PIMH converges faster than MCFO-SMC and AESMC because of better Monte Carlo approximations, the improvement at convergence is marginally small considering that it requires more computations for each sweep in PIMH. Increasing $K$ does improve generative model learning, but slightly impairs the sample efficiency of proposal models. No statistically significant gain is observed to increase $K$ over 200. 
Therefore, the sweet spot of $K$ needs to balance the performance of generation and inference. 





\subsection{Polyphonic Music}
To demonstrate the performance of MCFOs for non-Markovian high dimensional data with complex temporal dependencies, we train VRNN models with MCFO-SMC and MCFO-PIMH on four polyphonic music datasets \citep{boulanger2012modeling}. We preprocess all musical notes to 88-dimensional binary sequences and configure generative and proposal models as in \citep{maddison2017filtering}; see Appendix \ref{sec:appendix_polyphonic_music} for experiment details. Table \ref{table:music_evaluation} reports the estimated NLLs using 500 samples, as with the other benchmarked methods, 
on the models trained by MCFO-SMC and MCFO-PIMH with 10, 20 samples. As can be seen for all four datasets, MCFO-SMC and MCFO-PIMH are either superior or comparable to the other state-of-the-art algorithms. 

\section{Conclusion}
We introduce Monte Carlo filtering objectives (MCFOs), a new family of variational filtering objectives for learning generative and amortized importance proposal models of time series. MCFOs extend to a wider choice of estimators and accommodate important theoretical properties to achieve tighter objectives. We show empirically that MCFOs and the proposed gradient estimators facilitate more stable and efficient learning on parametric generative and importance proposal models, compared to a number of state-of-the-art methods in various tasks. In future works, we would like to extend MCFOs to smoothing problems and explore tractable MCFOs by flow-based methods.

\section*{Acknowledgements}
This work is supported by the Wallenberg AI, Autonomous Systems and Software Program.

\bibliographystyle{named}
\bibliography{references}

\begin{thebibliography}{}

\bibitem[\protect\citeauthoryear{Andrieu \bgroup \em et al.\egroup
  }{2010}]{andrieu2010particle}
Christophe Andrieu, Arnaud Doucet, and Roman Holenstein.
\newblock Particle markov chain monte carlo methods.
\newblock {\em Journal of the Royal Statistical Society: Series B (Statistical
  Methodology)}, 72(3):269--342, 2010.

\bibitem[\protect\citeauthoryear{Baydin \bgroup \em et al.\egroup
  }{2017}]{baydin2017automatic}
At{\i}l{\i}m~G{\"u}nes Baydin, Barak~A Pearlmutter, Alexey~Andreyevich Radul,
  and Jeffrey~Mark Siskind.
\newblock Automatic differentiation in machine learning: a survey.
\newblock {\em The Journal of Machine Learning Research}, 18(1):5595--5637,
  2017.

\bibitem[\protect\citeauthoryear{Bayer and
  Osendorfer}{2014}]{bayer2014learning}
Justin Bayer and Christian Osendorfer.
\newblock Learning stochastic recurrent networks.
\newblock In {\em NIPS 2014 Workshop on Advances in Variational Inference},
  2014.

\bibitem[\protect\citeauthoryear{Bornschein and
  Bengio}{2015}]{bornschein2014reweighted}
J{\"{o}}rg Bornschein and Yoshua Bengio.
\newblock Reweighted wake-sleep.
\newblock In {\em 3rd International Conference on Learning Representations,
  {ICLR}}, 2015.

\bibitem[\protect\citeauthoryear{Boulanger \bgroup \em et al.\egroup
  }{2012}]{boulanger2012modeling}
Lewandowski~Nicolas Boulanger, Yoshua Bengio, and Pascal Vincent.
\newblock Modeling temporal dependencies in high-dimensional sequences:
  application to polyphonic music generation and transcription.
\newblock In {\em Proceedings of the 29th International Conference on
  International Conference on Machine Learning}, pages 1881--1888, 2012.

\bibitem[\protect\citeauthoryear{Brockman \bgroup \em et al.\egroup
  }{2016}]{1606.01540}
Greg Brockman, Vicki Cheung, Ludwig Pettersson, Jonas Schneider, John Schulman,
  Jie Tang, and Wojciech Zaremba.
\newblock Openai gym, 2016.

\bibitem[\protect\citeauthoryear{Burda \bgroup \em et al.\egroup
  }{2016}]{burda2015importance}
Yuri Burda, Roger~B. Grosse, and Ruslan Salakhutdinov.
\newblock Importance weighted autoencoders.
\newblock In {\em 4th International Conference on Learning Representations,
  {ICLR} 2016}, 2016.

\bibitem[\protect\citeauthoryear{Chung \bgroup \em et al.\egroup
  }{2015}]{chung2015recurrent}
Junyoung Chung, Kyle Kastner, Laurent Dinh, Kratarth Goel, Aaron~C Courville,
  and Yoshua Bengio.
\newblock A recurrent latent variable model for sequential data.
\newblock In {\em Advances in neural information processing systems}, pages
  2980--2988, 2015.

\bibitem[\protect\citeauthoryear{Domke and Sheldon}{2018}]{domke2018importance}
Justin Domke and Daniel~R Sheldon.
\newblock Importance weighting and variational inference.
\newblock In {\em Advances in neural information processing systems}, pages
  4470--4479, 2018.

\bibitem[\protect\citeauthoryear{Doucet and
  Johansen}{2009}]{doucet2009tutorial}
Arnaud Doucet and Adam~M Johansen.
\newblock A tutorial on particle filtering and smoothing: Fifteen years later.
\newblock {\em Handbook of nonlinear filtering}, 12(656-704):3, 2009.

\bibitem[\protect\citeauthoryear{Doucet \bgroup \em et al.\egroup
  }{2000}]{doucet2000sequential}
Arnaud Doucet, Simon Godsill, and Christophe Andrieu.
\newblock On sequential monte carlo sampling methods for bayesian filtering.
\newblock {\em Statistics and computing}, 10(3):197--208, 2000.

\bibitem[\protect\citeauthoryear{Fraccaro \bgroup \em et al.\egroup
  }{2016}]{fraccaro2016sequential}
Marco Fraccaro, S{\o}ren~Kaae S{\o}nderby, Ulrich Paquet, and Ole Winther.
\newblock Sequential neural models with stochastic layers.
\newblock In {\em Advances in neural information processing systems}, pages
  2199--2207, 2016.

\bibitem[\protect\citeauthoryear{Fraccaro \bgroup \em et al.\egroup
  }{2017}]{fraccaro2017disentangled}
Marco Fraccaro, Simon Kamronn, Ulrich Paquet, and Ole Winther.
\newblock A disentangled recognition and nonlinear dynamics model for
  unsupervised learning.
\newblock In {\em Advances in Neural Information Processing Systems}, pages
  3601--3610, 2017.

\bibitem[\protect\citeauthoryear{Goodfellow \bgroup \em et al.\egroup
  }{2014}]{Ian2014gan}
Ian Goodfellow, Jean Pouget-Abadie, Mehdi Mirza, Bing Xu, David Warde-Farley,
  Sherjil Ozair, Aaron Courville, and Yoshua Bengio.
\newblock Generative adversarial nets.
\newblock In {\em Advances in Neural Information Processing Systems}, pages
  2672--2680. 2014.

\bibitem[\protect\citeauthoryear{Gu \bgroup \em et al.\egroup
  }{2015}]{gu2015neural}
Shixiang~Shane Gu, Zoubin Ghahramani, and Richard~E Turner.
\newblock Neural adaptive sequential monte carlo.
\newblock In {\em Advances in neural information processing systems}, pages
  2629--2637, 2015.

\bibitem[\protect\citeauthoryear{Jacob \bgroup \em et al.\egroup
  }{2017}]{jacob2017unbiased}
Pierre~E Jacob, John O'Leary, and Yves~F Atchad{\'e}.
\newblock Unbiased markov chain monte carlo with couplings.
\newblock {\em arXiv preprint arXiv:1708.03625}, 2017.

\bibitem[\protect\citeauthoryear{Kingma and Welling}{2014}]{kingma2014auto}
Diederik~P. Kingma and Max Welling.
\newblock Auto-encoding variational bayes.
\newblock In {\em 2nd International Conference on Learning Representations,
  {ICLR} 2014}, 2014.

\bibitem[\protect\citeauthoryear{Krishnan \bgroup \em et al.\egroup
  }{2017}]{krishnan2017structured}
Rahul~G Krishnan, Uri Shalit, and David Sontag.
\newblock Structured inference networks for nonlinear state space models.
\newblock In {\em Thirty-First AAAI Conference on Artificial Intelligence},
  2017.

\bibitem[\protect\citeauthoryear{Le \bgroup \em et al.\egroup
  }{2018}]{le2017auto}
Tuan~Anh Le, Maximilian Igl, Tom Rainforth, Tom Jin, and Frank Wood.
\newblock Auto-encoding sequential monte carlo.
\newblock In {\em 6th International Conference on Learning Representations,
  {ICLR}}, 2018.

\bibitem[\protect\citeauthoryear{Maddison \bgroup \em et al.\egroup
  }{2017}]{maddison2017filtering}
Chris~J Maddison, John Lawson, George Tucker, Nicolas Heess, Mohammad Norouzi,
  Andriy Mnih, Arnaud Doucet, and Yee Teh.
\newblock Filtering variational objectives.
\newblock In {\em Advances in Neural Information Processing Systems}, pages
  6573--6583, 2017.

\bibitem[\protect\citeauthoryear{Mnih and Gregor}{2014}]{mnih2014neural}
Andriy Mnih and Karol Gregor.
\newblock Neural variational inference and learning in belief networks.
\newblock In {\em Proceedings of the 31th International Conference on Machine
  Learning, {ICML}}, volume~32, pages 1791--1799, 2014.

\bibitem[\protect\citeauthoryear{Mnih and Rezende}{2016}]{mnih2016variational}
Andriy Mnih and Danilo Rezende.
\newblock Variational inference for monte carlo objectives.
\newblock In {\em International Conference on Machine Learning}, pages
  2188--2196, 2016.

\bibitem[\protect\citeauthoryear{Mohamed \bgroup \em et al.\egroup
  }{2020}]{mohamed2019monte}
Shakir Mohamed, Mihaela Rosca, Michael Figurnov, and Andriy Mnih.
\newblock Monte carlo gradient estimation in machine learning.
\newblock {\em J. Mach. Learn. Res.}, 21:132:1--132:62, 2020.

\bibitem[\protect\citeauthoryear{Naesseth \bgroup \em et al.\egroup
  }{2018}]{naesseth2018variational}
Christian Naesseth, Scott Linderman, Rajesh Ranganath, and David Blei.
\newblock Variational sequential monte carlo.
\newblock In {\em International Conference on Artificial Intelligence and
  Statistics}, pages 968--977, 2018.

\bibitem[\protect\citeauthoryear{Rainforth \bgroup \em et al.\egroup
  }{2018}]{rainforth2018tighter}
Tom Rainforth, Adam~R. Kosiorek, Tuan~Anh Le, Chris~J. Maddison, Maximilian
  Igl, Frank Wood, and Yee~Whye Teh.
\newblock Tighter variational bounds are not necessarily better.
\newblock In {\em Proceedings of the 35th International Conference on Machine
  Learning, {ICML}}, pages 4274--4282, 2018.

\bibitem[\protect\citeauthoryear{Rezende and
  Mohamed}{2015}]{rezende2015variational}
Danilo Rezende and Shakir Mohamed.
\newblock Variational inference with normalizing flows.
\newblock In {\em International Conference on Machine Learning}, pages
  1530--1538, 2015.

\bibitem[\protect\citeauthoryear{Roeder \bgroup \em et al.\egroup
  }{2017}]{roeder2017sticking}
Geoffrey Roeder, Yuhuai Wu, and David~K Duvenaud.
\newblock Sticking the landing: Simple, lower-variance gradient estimators for
  variational inference.
\newblock In {\em Advances in Neural Information Processing Systems}, pages
  6925--6934, 2017.

\bibitem[\protect\citeauthoryear{S{\"a}rkk{\"a}}{2013}]{sarkka2013bayesian}
Simo S{\"a}rkk{\"a}.
\newblock {\em Bayesian filtering and smoothing}, volume~3.
\newblock Cambridge University Press, 2013.

\end{thebibliography}

\onecolumn

\newpage

\clearpage

\setcounter{secnumdepth}{3}

\newpage
\newpage

\pagestyle{empty}

\newpage

\newpage

\setcounter{subsection}{0}
\renewcommand{\thesubsection}{\Alph{subsection}}
\section*{Appendix}
\subsection{Sampling Methods of Sequences}
\subsubsection{Sequential Monte Carlo}
\label{sec:appendix_smc}
\begin{algorithm}[!h]
\caption{Sequential Monte Carlo}
\label{algo:SMC}
\begin{algorithmic}[1]
 \REQUIRE observation sequence $\mathbf{x}_{1:T}$, the number of particles $K$, proposal distributions $\{ q(\mathbf{z}_t|\mathbf{z}_{1:t-1}, \mathbf{x}_{1:t}) \}_{t=1:T}$, joint likelihoods $\{ p(\mathbf{x}_{1:t}, \mathbf{z}_{1:t}) \}_{t=1:T}$ 

  \STATE Sample $\hat{\mathbf{z}}_{1}^{i}$ from $q_1(\mathbf{z}_1)$, for $i=1:K$
  \STATE Compute weights $w_1^{i}=\frac{p(\mathbf{x}_1, \hat{\mathbf{z}}_1^i)}{q(\hat{\mathbf{z}}_1^{i}|\mathbf{x}_1)}$ \\and normalize weights $\Tilde{w}_1^{i}=\frac{w_1^{i}}{\sum_{j=1}^K w_1^{j}}$
  \STATE Resample $\bar{\mathbf{z}}_{1}^{i}$ from $\{ \Tilde{w}_1^{i}, \hat{\mathbf{z}}_{1}^{i} \}$
  \FOR{$t=2:T$}
  \STATE Sample $\hat{\mathbf{z}}_{t}^{i}$ from $q(\mathbf{z}_t|\bar{\mathbf{z}}_{1:t-1}^{i}, \mathbf{x}_{1:t})$
  \STATE Set $\hat{\mathbf{z}}^{i}_{1:t} = (\bar{\mathbf{z}}_{1:t-1}^{i}, \hat{\mathbf{z}}_t^{i})$
  \STATE Compute weights\\
  \hspace{0.3cm} $w_t^{i} = 
  \frac{p(\mathbf{x}_{1:t}, \hat{\mathbf{z}}_{1:t}^{i})}{p(\bar{\mathbf{z}}^{i}_{1:t-1},\mathbf{x}_{1:t-1})q(\hat{\mathbf{z}}^{i}_t|\bar{\mathbf{z}}^{i}_{1:t-1},\mathbf{x}_{1:t})}$, \\and normalize weights $\Tilde{w}_t^{i} = \frac{w_t^{i}}{\sum_{j=1}^Kw_t^{j}}$
  \STATE Resample $\bar{\mathbf{z}}_{1:t}^{i} = \hat{\mathbf{z}}_{1:t}^{A_{t-1}^i}$ from $\{ \Tilde{w}_t^{i}, \hat{\mathbf{z}}_{1:t}^{i} \}$, where $A_{t-1}^i$ is the anstral index
  \ENDFOR
  
  \RETURN weighted trajectories $\{w_{1:T}^{i}, \hat{\mathbf{z}}^{i}_{1:T}\}_{i=1:K}$, marginal likelihood estimates $\hat{p}(\mathbf{x}_{1:T}) = \prod_{t=1}^T \frac{1}{K} \sum_{i=1}^K w_t^{i}$
\end{algorithmic}
\end{algorithm}

AESMC, VSMC and FIVO, define three closely related MCOs, exploit the estimators by SMC as in (\ref{eq:smc_estimator}). The main differences between these objectives lie in their implementations of SMCs: AESMC implements the classic SMC as \citep[Section~3.5]{doucet2009tutorial}; VSMC assumes Markovian latent variables and conditional independence of observation so that proposal models are simplified to the form of $q(\mathbf{z}_t|\mathbf{z}_{t-1}, \mathbf{x}_t)$ and observation conditionals to $p(\mathbf{x}_t|\mathbf{z}_t)$ in the generative models; FIVO defines objectives using the adaptive resampling SMC \citep[Section~3.5]{doucet2009tutorial}, but it uses a fixed resampling schedule as AESMC during training.

\subsubsection{Particle Independence Metropolis-Hastings}
\label{sec:pimh}

Although resampling in SMC substantially decreases the variance in estimates of marginal likelihoods compared to Sequential Importance Sampling (SIS), the approximations of SMC deteriorate when the sample components are not rejuvenated at subsequent time steps \citep{andrieu2010particle}. To give a more reliable approximation of the posterior, particle independence Metropolis-Hastings (PIMH) \citep{andrieu2010particle} uses SMC approximations as proposal distributions in Metropolis-Hastings (MH) iterations, and compute the acceptance ratio using the estimates of marginal distributions by SMC. Algorithm \ref{algo:PIMH} provides pseudo-code for sampling with PIMH. 

To be noted, the PIMH sampler itself might not be a serious competitor to standard SMC and the potential improvement comes along with increased computational cost. Combining with other MCMC transitions e.g. PIMH implemented in particle marginal Metropolis-Hastings is found to better target to the true posterior of latent variables and model parameters. We use PIMH in our experiments as a complement to standard SMC to investigate whether better approximations of posteriors lead to better learning in generative and proposal models. To be noted, the number of SMC sweeps per iteration, $M$, needs to be specified by the user, thus PIMH requires more computations compared to standard SMC. 






\begin{algorithm}
\caption{Particle independent Metropolis-Hastings}
\label{algo:PIMH}
\begin{algorithmic}[1]
\STATE Run SMC to obtain approximate $\hat{p}(\mathbf{z}_{1:T}|\mathbf{x}_{1:T})$, and sample $\mathbf{z}_{1:T}(0)$ from $\hat{p}(\mathbf{z}_{1:T}|\mathbf{x}_{1:T})$ and compute marginal likelihood estimates $\hat{p}(\mathbf{x}_{1:T})(0)$.
\FOR{$m=1:M$}
\STATE run SMC to sample $\mathbf{z}^*_{1:T}$ from $\hat{p}(\mathbf{z}_{1:T}|\mathbf{x}_{1:T})$ and compute $\hat{p}(\mathbf{x}_{1:T})^*$ 
\STATE with probability $\min (1, \frac{\hat{p}(\mathbf{x}_{1:T})^*}{\hat{p}(\mathbf{x}_{1:T})(m-1)})$,
update $\mathbf{z}_{1:T}(m) = \mathbf{z}^*_{1:T}$ and $\hat{p}(\mathbf{x}_{1:T})(m)=\hat{p}(\mathbf{x}_{1:T})^*$
\STATE otherwise $\mathbf{z}_{1:T}(m) = \mathbf{z}_{1:T}(m-1)$ and $\hat{p}(\mathbf{x}_{1:T})(m)=\hat{p}(\mathbf{x}_{1:T})(m-1)$
\ENDFOR 
\end{algorithmic}
\end{algorithm}

\subsection{Monte Carlo Filtering Objectives (MCFOs)} 

\subsubsection{Derivation of MCFOs}
\label{sec:appendix_lower_bound}
Using Jensen's inequality, we could derive an MCO, $\mathcal{L}_t^K$, for each conditional marginal log-likelihood $\log p(\mathbf{x}_t|\mathbf{x}_{1:t-1})$:
\begin{equation}
\begin{aligned}
    \log p(\mathbf{x}_t|\mathbf{x}_{1:t-1}) & = \log \int p(\mathbf{z}_{1:t-1}|\mathbf{x}_{1:t-1}) q(\mathbf{z}_{t}|\mathbf{z}_{1:t-1},\mathbf{x}_{1:t})  \frac{p(\mathbf{z}_{1:t},\mathbf{x}_{1:t})}{p(\mathbf{z}_{1:t-1},\mathbf{x}_{1:t-1})q(\mathbf{z}_{t}|\mathbf{z}_{1:t-1},\mathbf{x}_{1:t})}   \mathrm{d} \mathbf{z}_{1:t}\\
    & \geq \int p(\mathbf{z}_{1:t-1}^{1:K}|\mathbf{x}_{1:t-1}) \prod_{i=1}^K q(\mathbf{z}_{t}^i|\mathbf{z}_{1:t-1}^i,\mathbf{x}_{1:t}) \\
    & \quad \underbrace{\quad \quad \quad \quad \quad \log \left( \frac{1}{K} \sum_{i=1}^K     
    \frac{p(\mathbf{z}_{1:t}^i,\mathbf{x}_{1:t})}{p(\mathbf{z}_{1:t-1}^i,\mathbf{x}_{1:t-1})q(\mathbf{z}_{t}^i|\mathbf{z}_{1:t-1}^i,\mathbf{x}_{1:t})} \right)  \mathrm{d} \mathbf{z}_{1:t}^{1:K} }_{\mathcal{L}_t^K}
    \\
    & = \mathbb{E}_{Q_t^K(\mathbf{z}_{1:t}^{1:K} \mid \mathbf{x}_{1:t})} \left[ \log  R_t^K(\mathbf{z}_{1:t}^{1:K}, \mathbf{x}_{1:t}) \right],\\
    &  \quad \quad \quad \quad Q_t^K(\mathbf{z}_{1:t}^{1:K} \mid \mathbf{x}_{1:t}) = p(\mathbf{z}_{1:t-1}^{1:K}|\mathbf{x}_{1:t-1})\prod_{i=1}^K q(\mathbf{z}_{t}^i|\mathbf{z}_{1:t-1}^i,\mathbf{x}_{1:t}),\\
    &  \quad \quad \quad \quad R_t^K(\mathbf{z}_{1:t}^{1:K}, \mathbf{x}_{1:t}) = \frac{1}{K} \sum_{i=1}^K \frac{p(\mathbf{z}_{1:t}^i,\mathbf{x}_{1:t})}{p(\mathbf{z}_{1:t-1}^i,\mathbf{x}_{1:t-1})q(\mathbf{z}_{t}^i|\mathbf{z}^i_{1:t-1},\mathbf{x}_{1:t})}.
\end{aligned}
\label{eq:mco_t}
\end{equation}
Specifically for $\log p(\mathbf{x}_1)$:
\begin{equation}
\begin{aligned}
    \log p(\mathbf{x}_1) & = \log \int \frac{p(\mathbf{z}_{1},\mathbf{x}_{1})}{q(\mathbf{z}_{1}|\mathbf{x}_{1})} q(\mathbf{z}_{1}|\mathbf{x}_{1}) \mathrm{d} \mathbf{z}_{1} \\
    & \geq \underbrace{\int \prod_{i=1}^K q(\mathbf{z}_{1}^i|\mathbf{x}_{1}) \log \left( \frac{1}{K} \sum_{i=1}^K \frac{p(\mathbf{z}_{1}^i,\mathbf{x}_{1})}{q(\mathbf{z}_{1}^i|\mathbf{x}_{1})} \right) \mathrm{d} \mathbf{z}_{1}^{1:K}}_{\mathcal{L}_1^K}\\
    & = \mathbb{E}_{Q_1^K(\mathbf{z}_{1}^{1:K} \mid \mathbf{x}_{1})} \left[ \log R_1^K(\mathbf{z}_{1}^{1:K}, \mathbf{x}_{1}) \right],\\
    & \quad \quad \quad \quad Q_1^K(\mathbf{z}_{1}^{1:K} \mid \mathbf{x}_{1}) = \prod_{i=1}^K q(\mathbf{z}_{1}^{i}|\mathbf{x}_{1}),\\
    & \quad \quad \quad \quad  R_1^K(\mathbf{z}_{1}^{1:K}, \mathbf{x}_{1}) = \frac{1}{K} \sum_{i=1}^K \frac{p(\mathbf{z}_{1}^i,\mathbf{x}_{1})}{q(\mathbf{z}_{1}^i|\mathbf{x}_{1})}.
\end{aligned}
\label{eq:mco_1}
\end{equation}
We define $\mathcal{L}_{\text{MCFO}}^K$ as the summation of $\mathcal{L}_t^K$:
\begin{equation*}
    \begin{aligned}
        \log p(\mathbf{x}_{1:T}) & = \log p(\mathbf{x}_1) + \sum_{t=2}^T \log p(\mathbf{x}_t|\mathbf{x}_{1:t-1})\\
        & \geq \sum_{t=1}^T \mathcal{L}_t^K = \mathcal{L}_{\text{MCFO}}^K.
    \end{aligned}
\label{eq:mco_bound}
\end{equation*}

\subsubsection{Relation between MCFOs and FIVOs}
\label{sec:fivo_and_mcfo}

Considering the filtering problem that assumes that future observations have no impact on the posterior, FIVO defined by the SMC estimator in (\ref{eq:smc_estimator}) can be decomposed as:
\begin{equation*}
    \begin{aligned}
        & \mathbb{E}_{\hat{p}(\mathbf{z}_{1:T}|\mathbf{x}_{1:T})} \left[ \log \hat{p}(\mathbf{x}_{1:T}) \right] \\
        = & \mathbb{E}_{\hat{p}(\mathbf{z}_{1:T}|\mathbf{x}_{1:T})} \left[ \log \left( \prod_{t=1}^T \left( \frac{1}{K} \sum_{i=1}^K w_t^i \right) \right) \right] \\
        = & \sum_{t=1}^T \mathbb{E}_{\hat{p}(\mathbf{z}_{1:t-1}^{1:K}|\mathbf{x}_{1:t-1}) \cdot \prod_{i=1}^K  q(\mathbf{z}_{t}^{i}|\mathbf{z}_{1:t-1}^{i},\mathbf{x}_{1:t})} \left[ \log \left( \frac{1}{K} \sum_{i=1}^K w_t^i \right) \right]\\
        = &\sum_{t=1}^T \mathbb{E}_{\hat{p}(\mathbf{z}_{1:t-1}^{1:K}|\mathbf{x}_{1:t-1}) \cdot \prod_{i=1}^K  q(\mathbf{z}_{t}^{i}|\mathbf{z}_{1:t-1}^{i},\mathbf{x}_{1:t})} \left[ \log \left( \frac{1}{K} \sum_{i=1}^K \frac{p(\mathbf{z}_{1:t}^{i},\mathbf{x}_{1:t})}{p(\mathbf{z}_{1:t-1}^{i},\mathbf{x}_{1:t-1})q(\mathbf{z}_{t}^{i}|\mathbf{z}_{1:t-1}^{i},\mathbf{x}_{1:t})} \right)   \right].
    \end{aligned}
\end{equation*}
If replacing $p(\mathbf{z}_{1:t-1}^{1:K}|\mathbf{x}_{1:t-1})$ in $Q_t^K(\mathbf{z}_{1:t}^{1:K}|\mathbf{x}_{1:t})$ of MCFO by K sample approximation $\hat{p}(\mathbf{z}_{1:t-1}^{1:K}|\mathbf{x}_{1:t-1})$, the bound leads to the definition of FIVOs. 

By defining $\psi(\mathbf{z}_{1:t-1}^{1:K})$ as a test function
\begin{equation*}
    \psi(\mathbf{z}_{1:t-1}^{1:K}) =  \int \prod_{i=1}^K  q(\mathbf{z}_{t}^{i}|\mathbf{z}_{1:t-1}^{i},\mathbf{x}_{1:t}) \left[ \log \left( \frac{1}{K} \sum_{i=1}^K \frac{p(\mathbf{z}_{1:t}^{i},\mathbf{x}_{1:t})}{p(\mathbf{z}_{1:t-1}^{i},\mathbf{x}_{1:t-1})q(\mathbf{z}_{t}^{i}|\mathbf{z}_{1:t-1}^{i},\mathbf{x}_{1:t})} \right)   \right] \mathrm{d} \mathbf{z}_{t}^{1:K} , 
\end{equation*}
each term in MCFO is the expectation of the test function over $p(\mathbf{z}_{1:t-1}^{1:K}|\mathbf{x}_{1:t-1})$:
\begin{equation*}
    \mathcal{L}_t^K := \int p(\mathbf{z}_{1:t-1}^{1:K}|\mathbf{x}_{1:t-1}) \psi(\mathbf{z}_{1:t-1}^{1:K}) \mathrm{d} \mathbf{z}_{1:t-1}^{1:K},
\end{equation*}
while each term in FIVO is the expectation over $\hat{p}(\mathbf{z}_{1:t-1}^{1:K}|\mathbf{x}_{1:t-1})$
\begin{equation*}
    \hat{\mathcal{L}}_t^K := \int \hat{p}(\mathbf{z}_{1:t-1}^{1:K}|\mathbf{x}_{1:t-1}) \psi(\mathbf{z}_{1:t-1}^{1:K}) \mathrm{d} \mathbf{z}_{1:t-1}^{1:K} = \sum_{i=1}^K \Tilde{w}_{t-1}^i \psi(\hat{\mathbf{z}}_{1:t-1}^{1:K}).
\end{equation*}
Therefore, each $\hat{\mathcal{L}}_t^K$ of FIVOs could be considered as an estimate on $\mathcal{L}_t^K$ of MCFOs using approximations $\hat{p}(\mathbf{z}_{1:t-1}^{1:K}|\mathbf{x}_{1:t-1}) = \sum_{i=1}^K \Tilde{w}_{t-1}^{i} \delta(\mathbf{z}_{1:t-1}^{1:K}-\hat{\mathbf{z}}_{1:t-1}^{1:K})$. Although this estimate is biased for finite number of samples $K$, it is consistent and its asymptotic bias is given by
\begin{equation*}
    \begin{aligned}
        \lim_{K \rightarrow \infty} K(\mathcal{L}_t^K - \hat{\mathcal{L}}_t^K) & = \int \frac{p^2(\mathbf{z}_{1:t-1}^{1:K}|\mathbf{x}_{1:t-1})}{q(\mathbf{z}_{1:t-1}^{1:K})} (\psi(\mathbf{z}_{1:t-1}^{1:K}) - \mathcal{L}_t^K) \mathrm{d} \mathbf{z}_{1:t-1}^{1:K}, \\
    \end{aligned}
\end{equation*}
where $q(\mathbf{z}_{1:t-1}^{1:K})$ is proposal distribution of $p(\mathbf{z}_{1:t-1}^{1:K}|\mathbf{x}_{1:t-1})$, marginalized over resampling ancestral indexes. It also shows that the asymptotic bias is at $\mathcal{O}(1/K)$.

\subsubsection{Properties of MCFOs}

\label{sec:appendix_lower_bound_property}
\begin{proposition*}
\label{properties}
(\textnormal{Properties of MCFOs}). Let $\mathcal{L}_{\text{MCFO}}^K$ be an MCFO of $\log p(\mathbf{x}_{1:T}) $ by a series of unbiased estimators $R_t$ of $p(\mathbf{x}_t|\mathbf{x}_{1:t-1})$ using $K$ samples. Then, 
\begin{enumerate}[label=\alph*)] 
\item \label{prop:properties_1} \textnormal{(Bound)} $\log p(\mathbf{x}_{1:T}) \geq \mathcal{L}_{\text{MCFO}}^K$.
\item \label{prop:properties_2} \textnormal{(Monotonic convergence)} $\mathcal{L}_{\text{MCFO}}^{K+1} \geq \mathcal{L}_{\text{MCFO}}^{K} \geq  \hdots \geq  \mathcal{L}_{\text{MCFO}}^{1}$.
\item \label{prop:properties_3} \textnormal{(Consistency)} If $p(\mathbf{z}_{1},\mathbf{x}_{1}) /q(\mathbf{z}_{1}|\mathbf{x}_{1})$ and ${p(\mathbf{z}_{1:t},\mathbf{x}_{1:t})}/{(p(\mathbf{z}_{1:t-1},\mathbf{x}_{1:t-1})q(\mathbf{z}_{t}|\mathbf{z}_{1:t-1},\mathbf{x}_{1:t}))}$ for all $t\in[2,T]$ are bounded, then $\mathcal{L}_{\text{MCFO}}^K \rightarrow \log p(\mathbf{x}_{1:T})$ as K $\rightarrow \infty$.
\item \label{prop:properties_4} \textnormal{(Asymptotic Bias)} For a large K, the bias of bound is related to the variance of estimator $R_t$, $\mathbb{V}[R_t]$, 
\[\lim_{K\rightarrow \infty} K(\log p(\mathbf{x}_{1:T}) - \mathcal{L}_{\text{MCFO}}^K) = \sum_{t=1}^T \frac{\mathbb{V}[R_t]}{2 p(\mathbf{x}_t|\mathbf{x}_{1:t-1})^2}. \]
\end{enumerate}
\end{proposition*}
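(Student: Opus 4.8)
The key structural fact is that $\mathcal{L}_{\text{MCFO}}^K = \sum_{t=1}^T \mathcal{L}_t^K$ splits into a sum of single-time Monte Carlo objectives, one for each conditional $\log p(\mathbf{x}_t \mid \mathbf{x}_{1:t-1})$ (with the convention that the $t=1$ term reads $\log p(\mathbf{x}_1)$), so all four claims can be proved term by term and then added, using the telescoping identity (\ref{eq:recursion}). For each $t$ I write $R_t^{(i)}$ for the single-sample ratio appearing inside $R_t^K$ in (\ref{eq:L_t_k}); because $Q_t^K$ factorizes over $i$, these are i.i.d.\ copies of an unbiased estimator $R_t$ with mean $\mu_t := p(\mathbf{x}_t \mid \mathbf{x}_{1:t-1})$, and $R_t^K = \frac{1}{K}\sum_{i=1}^K R_t^{(i)}$ is their sample mean. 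The whole argument then reduces to applying the known properties of importance-weighted ELBOs \citep{burda2015importance,mnih2016variational,domke2018importance} to each $\mathcal{L}_t^K$ and summing over the finitely many $t$.

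For \ref{prop:properties_1}, Jensen's inequality gives $\mathcal{L}_t^K = \mathbb{E}_{Q_t^K}[\log R_t^K] \le \log \mathbb{E}_{Q_t^K}[R_t^K] = \log \mu_t$; summing over $t$ recovers the bound (this is precisely the derivation already in Appendix \ref{sec:appendix_lower_bound}). For \ref{prop:properties_2}, I would use the IWAE monotonicity argument: since $R_t^{K+1} = \mathbb{E}_{j\sim\mathrm{Unif}\{1,\dots,K+1\}}\bigl[\frac{1}{K}\sum_{i\ne j} R_t^{(i)}\bigr]$, concavity of $\log$ together with exchangeability of the $R_t^{(i)}$ yields $\mathbb{E}[\log R_t^{K+1}] \ge \mathbb{E}[\log R_t^{K}]$, i.e.\ $\mathcal{L}_t^{K+1} \ge \mathcal{L}_t^{K}$; summing over $t$ gives the chain of inequalities.

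For \ref{prop:properties_3}, the hypotheses say exactly that each $R_t^{(i)}$ is a.s.\ bounded, hence integrable, so the strong law of large numbers gives $R_t^K \to \mu_t > 0$ a.s.\ and thus $\log R_t^K \to \log \mu_t$ a.s.; the a.s.\ upper bound on $R_t^K$ together with a concentration bound on its lower tail supplies an integrable dominating function, so dominated convergence yields $\mathcal{L}_t^K \to \log \mu_t$; summing gives consistency. For \ref{prop:properties_4}, I would Taylor-expand $\log R_t^K$ about $\mu_t$: with $\varepsilon_K := (R_t^K - \mu_t)/\mu_t$ we have $\log R_t^K = \log \mu_t + \varepsilon_K - \frac{1}{2}\varepsilon_K^2 + O(\varepsilon_K^3)$, and since $\mathbb{E}[\varepsilon_K] = 0$, $\mathbb{E}[\varepsilon_K^2] = \mathbb{V}[R_t]/(K\mu_t^2)$, and $\mathbb{E}[\varepsilon_K^3] = O(1/K^2)$, taking expectations gives $\mathcal{L}_t^K = \log \mu_t - \mathbb{V}[R_t]/(2K\mu_t^2) + o(1/K)$, hence $K(\log \mu_t - \mathcal{L}_t^K) \to \mathbb{V}[R_t]/(2\mu_t^2)$. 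Summing the $T$ terms and using (\ref{eq:recursion}) gives the claimed limit, which matches the per-term asymptotic bias derived for importance-weighted and filtering bounds in \citep{burda2015importance,maddison2017filtering,domke2018importance}.

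\textbf{Main obstacle.} The one place needing real care is the remainder estimate in \ref{prop:properties_4}: controlling $\mathbb{E}[O(\varepsilon_K^3)]$ and the contribution of events where $R_t^K$ is atypically small, uniformly in $K$. The a.s.\ boundedness assumed in \ref{prop:properties_3} is exactly what makes this work — it bounds $\log R_t^K$ from above and, combined with a large-deviation bound keeping the sample mean away from $0$, controls the lower tail — so in the write-up I would lean on the established precise asymptotics for importance-weighted ELBOs and only spell out the reduction to per-time conditional terms and their summation.
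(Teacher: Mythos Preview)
Your proposal is correct and follows essentially the same route as the paper: reduce to per-time terms $\mathcal{L}_t^K$, apply Jensen for (a), the Burda-style subset/leave-one-out averaging for (b), the SLLN plus a convergence theorem for (c), and the second-order expansion underlying \citep{domke2018importance} for (d), then sum. The only cosmetic differences are that the paper invokes uniform integrability and Vitali's theorem in (c) where you reach for dominated convergence, and in (d) it cites \citep[Theorem~3]{domke2018importance} directly rather than spelling out the Taylor expansion you sketch.
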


\begin{proof}

\begin{enumerate}[label=\alph*)]

\item Following Jensen's inequality for each $\log p(\mathbf{x}_t|\mathbf{x}_{1:t-1})$ and $\log p(\mathbf{x}_1)$ in (\ref{eq:mco_t}) and (\ref{eq:mco_1}) that
\begin{equation*}
    \begin{aligned}
        \log p(\mathbf{x}_{1:T}) & = \log p(\mathbf{x}_1) + \sum_{t=2}^T \log p(\mathbf{x}_t|\mathbf{x}_{1:t-1})\\
        & \geq \sum_{t=1}^T \mathcal{L}_t^K = \mathcal{L}_{\text{MCFO}}^K.
    \end{aligned}
\end{equation*}
\item
Following the similar logic as \citep[Theorem~1]{burda2015importance}, let $l$ and $k$ be positive integers and $k>l$, set $I \subset \{ 1, \hdots, k  \}$ with $|I| = l$ are uniformly distributed subset of unique indices of $\{1, \hdots, k\}$. Using
\[ \mathbb{E}_{I=\{i_1, \hdots, i_l \}} \left[ \frac{1}{l} \sum_{j=1}^l  a_{i_j} \right] = \frac{1}{k} \sum_{i=1}^k a_i,  \]
and Jensen's inequality, we obtain
\begin{equation*}
    \begin{aligned}
        \mathcal{L}_{t}^k & = \int p(\mathbf{z}_{1:t-1}^{1:k}|\mathbf{x}_{1:t-1})\prod_{i=1}^k q(\mathbf{z}_{t}^i|\mathbf{z}_{1:t-1}^i,\mathbf{x}_{1:t}) \\
        & \quad \quad \quad \quad \left[ \log \left( \frac{1}{k} \sum_{i=1}^k \frac{p(\mathbf{z}_{1:t}^i,\mathbf{x}_{1:t})}{p(\mathbf{z}_{1:t-1}^i,\mathbf{x}_{1:t-1})q(\mathbf{z}_{t}^i|\mathbf{z}^i_{1:t-1},\mathbf{x}_{1:t})} \right) \right] \mathrm{d} \mathbf{z}_{1:t}^{1:k}\\
        & = \mathbb{E}_{\mathbf{z}_{1:t}^{1}, \hdots, \mathbf{z}_{1:t}^{k}} \left[ \log  \left( \frac{1}{k} \sum_{i=1}^k \frac{p(\mathbf{z}_{1:t}^i,\mathbf{x}_{1:t})}{p(\mathbf{z}_{1:t-1}^i,\mathbf{x}_{1:t-1})q(\mathbf{z}_{t}^i|\mathbf{z}^i_{1:t-1},\mathbf{x}_{1:t})} \right) \right]\\
        & = \mathbb{E}_{\mathbf{z}_{1:t}^{1}, \hdots, \mathbf{z}_{1:t}^{k}} \left[ \log \mathbb{E}_{I=\{ i_1, \hdots, i_l \}} \left[  \frac{1}{l} \sum_{j=1}^l \frac{p(\mathbf{z}_{1:t}^{i_j},\mathbf{x}_{1:t})}{p(\mathbf{z}_{1:t-1}^{i_j},\mathbf{x}_{1:t-1})q(\mathbf{z}_{t}^{i_j}|\mathbf{z}^{i_j}_{1:t-1},\mathbf{x}_{1:t})} \right] \right]\\
        & \geq \mathbb{E}_{\mathbf{z}_{1:t}^{1}, \hdots, \mathbf{z}_{1:t}^{k}} \left[ \mathbb{E}_{I=\{ i_1, \hdots, i_l \}} \left[ \log \left( \frac{1}{l} \sum_{j=1}^l \frac{p(\mathbf{z}_{1:t}^{i_j},\mathbf{x}_{1:t})}{p(\mathbf{z}_{1:t-1}^{i_j},\mathbf{x}_{1:t-1})q(\mathbf{z}_{t}^{i_j}|\mathbf{z}^{i_j}_{1:t-1},\mathbf{x}_{1:t})} \right) \right] \right]\\
        & \geq \mathbb{E}_{\mathbf{z}_{1:t}^{1}, \hdots, \mathbf{z}_{1:t}^{l}} \left[ \log \left( \frac{1}{l} \sum_{i=1}^l \frac{p(\mathbf{z}_{1:t}^{i},\mathbf{x}_{1:t})}{p(\mathbf{z}_{1:t-1}^{i},\mathbf{x}_{1:t-1})q(\mathbf{z}_{t}^{i}|\mathbf{z}^{i}_{1:t-1},\mathbf{x}_{1:t})} \right) \right] = \mathcal{L}_{t}^l.
    \end{aligned}
\end{equation*}
Specifically, 
\begin{equation*}
    \begin{aligned}
        \mathcal{L}_1^k & = \int \prod_{i=1}^k q(\mathbf{z}_{1}^i|\mathbf{x}_{1}) \log \left( \frac{1}{k} \sum_{i=1}^k \frac{p(\mathbf{z}_{1}^i,\mathbf{x}_{1})}{q(\mathbf{z}_{1}^i|\mathbf{x}_{1})} \right) \mathrm{d} \mathbf{z}_{1}^{1:k}\\
        & = \mathbb{E}_{\mathbf{z}_{1}^{1}, \hdots, \mathbf{z}_{1}^{k}} \left[ \log \mathbb{E}_{I=\{ i_1, \hdots, i_l \}} \left[  \frac{1}{l} \sum_{j=1}^l \frac{p(\mathbf{z}_{1}^{i_j},\mathbf{x}_{1})}{q(\mathbf{z}_{1}^{i_j}|\mathbf{x}_{1})} \right] \right]\\
        & \geq \mathbb{E}_{\mathbf{z}_{1}^{1}, \hdots, \mathbf{z}_{1}^{l}} \left[ \log  \frac{1}{l} \sum_{i=1}^l \frac{p(\mathbf{z}_{1}^{i},\mathbf{x}_{1})}{q(\mathbf{z}_{1}^{i}|\mathbf{x}_{1})} \right] = \mathcal{L}_1^l.
    \end{aligned}
\end{equation*}
\vspace*{-\abovedisplayskip}
Therefore,
\begin{equation*}
    \begin{aligned}
        \mathcal{L}_{\text{MCFO}}^k & = \sum_{t=1}^T \mathcal{L}_t^k \geq \sum_{t=1}^T \mathcal{L}_t^l = \mathcal{L}_{\text{MCFO}}^l.
    \end{aligned}
\end{equation*}
\item
For $t=1$, consider estimator $R_1^K$ as a random variable. If $p(\mathbf{z}_{1},\mathbf{x}_{1}) /q(\mathbf{z}_{1}|\mathbf{x}_{1})$ is bounded, following strong law of large number that $R_1^K$ converges to $p(\mathbf{x}_1)$ almost surely. Since the logarithmic function is continuous, $\log R_1^K$ converges to $\log p(\mathbf{x}_1)$ almost surely. Similarly, if $p(\mathbf{z}_{1:t},\mathbf{x}_{1:t}) /p(\mathbf{z}_{1:t-1},\mathbf{x}_{1:t-1})q(\mathbf{z}_{t}|\mathbf{z}_{1:t-1},\mathbf{x}_{1:t})$ for all $t=2:T$, $\log R_t^K$ converges to $\log p(\mathbf{x}_t|\mathbf{x}_{1:t-1})$ almost surely. And $R_t^K$ for all $t$ is uniformly integrable. By Vitali's convergence theorem, $\mathcal{L}_{\text{MCFO}}^K = \sum_{t=1}^T \mathbb{E}[\log R_t^K] \rightarrow \log p(\mathbf{x}_{1:T})$ as $K \rightarrow \infty$.
\vspace*{-\abovedisplayskip}
\item
\citep[Theorem~3]{domke2018importance} proves that asymptotic bias for an MCO for unbiased estimator $R_t^K$ relates the variance of $R_t$ defined by single sample, $\mathbb{V}[R_t]$,
\begin{equation*}
    \lim_{K \rightarrow \infty} K(\log p(\mathbf{x}_{t}|\mathbf{x}_{1:t-1}) - \mathbb{E}_{Q_t^K} \left[ \log R_t^K \right]) = \frac{\mathbb{V}[R_t]}{2 \log p(\mathbf{x}_{t}|\mathbf{x}_{1:t-1})^2 },
\end{equation*}
where $R_t = \dfrac{p(\mathbf{z}_{1:t},\mathbf{x}_{1:t})}{p(\mathbf{z}_{1:t-1},\mathbf{x}_{1:t-1})q(\mathbf{z}_{t}|\mathbf{z}_{1:t-1},\mathbf{x}_{1:t})}$, $R_1 = \dfrac{p(\mathbf{z}_{1},\mathbf{x}_{1})}{q(\mathbf{z}_{1}|\mathbf{x}_{1})}$.
\vspace*{-\abovedisplayskip}
When $T$ is finite, 
\begin{equation*}
    \begin{aligned}
        & \lim_{K \rightarrow \infty} K(\log p(\mathbf{x}_{1:T}) - \mathcal{L}_{\text{MCFO}}^K) \\
        = & \lim_{K \rightarrow \infty} K\left( \sum_{t=1}^T \log p(\mathbf{x}_{t}|\mathbf{x}_{1:t-1}) - \sum_{t=1}^T \mathbb{E}_{Q_t^K}\left[\log R_t^K \right]\right) \\
        = &  \sum_{t=1}^T\lim_{K \rightarrow \infty}  K  \left(  \log p(\mathbf{x}_{t}|\mathbf{x}_{1:t-1}) - \mathbb{E}_{Q_t^K} \left[\log R_t^K \right] \right) \\
        = & \sum_{t=1}^T \frac{\mathbb{V}[R_t]}{2 \log p(\mathbf{x}_{t}|\mathbf{x}_{1:t-1})^2 }.
    \end{aligned}
\end{equation*}

Considering $T \rightarrow \infty$, using (\textbf{Tannery's Theorem}) that if $b_t = \dfrac{\mathbb{V}[R_t]}{2 \log p(\mathbf{x}_{t}|\mathbf{x}_{1:t-1})^2}$ remains bounded that $b_t \leq M_t$ for all $t \geq 1$ and $\sum_{t=1}^{\infty} M_t < \infty$, then
\begin{equation*}
    \begin{aligned}
        & \lim_{K \rightarrow \infty} K(\log p(\mathbf{x}_{1:T}) - \mathcal{L}_{\text{MCFO}}^K) \\
        = & \lim_{K \rightarrow \infty}  \sum_{t=1}^{\infty}  K \left(\log p(\mathbf{x}_{t}|\mathbf{x}_{1:t-1}) - \mathbb{E}_{Q_t^K} \left[ \log R_t^K \right]\right) \\
        = & \sum_{t=1}^{\infty} \lim_{K \rightarrow \infty}    K \left(\log p(\mathbf{x}_{t}|\mathbf{x}_{1:t-1}) - \mathbb{E}_{Q_t^K} \left[ \log R_t^K \right] \right) \\
        = & \sum_{t=1}^{\infty} \frac{\mathbb{V}[R_t]}{2 \log p(\mathbf{x}_{t}|\mathbf{x}_{1:t-1})^2 }.
    \end{aligned}
\end{equation*}
Therefore, asymptotic bias is valid for sequences of any length.

\end{enumerate}
\end{proof}

\subsubsection{Optimal Importance Proposals}
\label{sec:optimal_importance_proposal}
\begin{proposition*}
\textnormal{Optimal importance proposal $q^*$ for an MCFO}. The bound is maximized and exact to $\log p(\mathbf{x}_{1:T})$ when the optimal importance proposals are
\begin{equation*}
    \begin{aligned}
        & q^*(\mathbf{z}_{1}|\mathbf{x}_{1}) = p(\mathbf{z}_{1}|\mathbf{x}_{1}), \\
        & q^*(\mathbf{z}_{t}|\mathbf{z}_{1:t-1},\mathbf{x}_{1:t}) = p(\mathbf{z}_{t}|\mathbf{z}_{1:t-1}, \mathbf{x}_{1:t}) = \frac{p(\mathbf{z}_{1:t}|\mathbf{x}_{1:t})}{p(\mathbf{z}_{1:t-1}|\mathbf{x}_{1:t-1})},  & \text{for all } t=2:T. 
    \end{aligned}
\end{equation*}
\end{proposition*}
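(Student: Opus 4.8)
The plan is to exploit the additive structure $\mathcal{L}_{\text{MCFO}}^K = \sum_{t=1}^{T} \mathcal{L}_t^K$ together with the term-wise Jensen bound. Each $\mathcal{L}_t^K = \mathbb{E}_{Q_t^K}[\log R_t^K] \le \log \mathbb{E}_{Q_t^K}[R_t^K] = \log p(\mathbf{x}_t \mid \mathbf{x}_{1:t-1})$ (and $\mathcal{L}_1^K \le \log p(\mathbf{x}_1)$), since $R_t^K$ is by construction an unbiased estimator of $p(\mathbf{x}_t\mid\mathbf{x}_{1:t-1})$; summing and invoking the recursion~(\ref{eq:recursion}) recovers the global bound $\mathcal{L}_{\text{MCFO}}^K \le \log p(\mathbf{x}_{1:T})$. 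Because the conditional proposal $q(\mathbf{z}_t\mid\cdot)$ enters only $\mathcal{L}_t^K$, maximising $\mathcal{L}_{\text{MCFO}}^K$ over the family of proposals decouples into maximising each $\mathcal{L}_t^K$ separately, and by the equality case of Jensen this happens precisely when $R_t^K$ is $Q_t^K$-almost surely constant, hence equal to its mean $p(\mathbf{x}_t\mid\mathbf{x}_{1:t-1})$.

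For sufficiency I would substitute $q^{*}$ directly. For $t=1$, with $q^{*}(\mathbf{z}_1\mid\mathbf{x}_1)=p(\mathbf{z}_1\mid\mathbf{x}_1)=p(\mathbf{z}_1,\mathbf{x}_1)/p(\mathbf{x}_1)$, every summand of $R_1^K$ equals $p(\mathbf{z}_1^{i},\mathbf{x}_1)/q^{*}(\mathbf{z}_1^{i}\mid\mathbf{x}_1)=p(\mathbf{x}_1)$, so $R_1^K\equiv p(\mathbf{x}_1)$ and $\mathcal{L}_1^K=\log p(\mathbf{x}_1)$. For $t\ge 2$, inserting $q^{*}(\mathbf{z}_t\mid\mathbf{z}_{1:t-1},\mathbf{x}_{1:t})=p(\mathbf{z}_{1:t}\mid\mathbf{x}_{1:t})/p(\mathbf{z}_{1:t-1}\mid\mathbf{x}_{1:t-1})$ into a single summand of $R_t^K$ and expanding $p(\mathbf{z}_{1:t},\mathbf{x}_{1:t})=p(\mathbf{z}_{1:t}\mid\mathbf{x}_{1:t})p(\mathbf{x}_{1:t})$ and $p(\mathbf{z}_{1:t-1},\mathbf{x}_{1:t-1})=p(\mathbf{z}_{1:t-1}\mid\mathbf{x}_{1:t-1})p(\mathbf{x}_{1:t-1})$, the two posterior factors cancel pairwise and the summand collapses to $p(\mathbf{x}_{1:t})/p(\mathbf{x}_{1:t-1})=p(\mathbf{x}_t\mid\mathbf{x}_{1:t-1})$ --- crucially this is constant in \emph{both} the inherited prefix $\mathbf{z}_{1:t-1}^{i}\sim p(\cdot\mid\mathbf{x}_{1:t-1})$ and the fresh draw $\mathbf{z}_t^{i}\sim q^{*}$. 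Hence $R_t^K$ is identically constant, $\mathcal{L}_t^K=\log p(\mathbf{x}_t\mid\mathbf{x}_{1:t-1})$, and summation gives $\mathcal{L}_{\text{MCFO}}^K=\log p(\mathbf{x}_{1:T})$, which by the bound property is the largest attainable value; so $q^{*}$ maximises the objective and makes it exact.

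For necessity I would invert the equality condition: $R_t^K$ being $Q_t^K$-a.s.\ equal to $p(\mathbf{x}_t\mid\mathbf{x}_{1:t-1})$ already forces, at $K=1$, the single ratio $p(\mathbf{z}_{1:t},\mathbf{x}_{1:t})/\bigl(p(\mathbf{z}_{1:t-1},\mathbf{x}_{1:t-1})\,q(\mathbf{z}_t\mid\mathbf{z}_{1:t-1},\mathbf{x}_{1:t})\bigr)=p(\mathbf{x}_t\mid\mathbf{x}_{1:t-1})$ for $p(\cdot\mid\mathbf{x}_{1:t-1})$-a.e.\ prefix and $q$-a.e.\ $\mathbf{z}_t$; solving for $q$ and using Bayes' rule as above returns exactly $q^{*}$ (and similarly $q^{*}(\mathbf{z}_1\mid\mathbf{x}_1)=p(\mathbf{z}_1\mid\mathbf{x}_1)$), so the maximiser is essentially unique. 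I would also add the short remark explaining the identity $p(\mathbf{z}_{1:t}\mid\mathbf{x}_{1:t})/p(\mathbf{z}_{1:t-1}\mid\mathbf{x}_{1:t-1})=p(\mathbf{z}_t\mid\mathbf{z}_{1:t-1},\mathbf{x}_{1:t})$ stated in the proposition: it, and the fact that $q^{*}$ is a properly normalised density in $\mathbf{z}_t$, hold exactly under the filtering assumption $p(\mathbf{z}_{1:t-1}\mid\mathbf{x}_{1:t})=p(\mathbf{z}_{1:t-1}\mid\mathbf{x}_{1:t-1})$, i.e.\ the past belief is not revised by the present observation.

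The step I expect to be the main obstacle is the bookkeeping of which distribution each density factor is integrated against: $Q_t^K$ samples the prefix $\mathbf{z}_{1:t-1}^{1:K}$ from the \emph{true} filtering posterior rather than from the proposal, so to upgrade Jensen to an equality one must verify that, after substituting $q^{*}$, the importance ratio is constant as a function of its arguments --- not merely that its $Q_t^K$-mean matches. The remaining pieces (the term-wise decoupling, since the $t$-th conditional appears only in $\mathcal{L}_t^K$ which is bounded above by a $q$-independent constant; and the two elementary Bayes manipulations) are routine once this point is handled.
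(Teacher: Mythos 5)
Your proof is correct and follows essentially the same route as the paper's: a per-time-step decomposition of $\log p(\mathbf{x}_{1:T})$, with the observation that substituting $q^{*}$ makes every summand of $R_t^K$ collapse via Bayes' rule to the constant $p(\mathbf{x}_t\mid\mathbf{x}_{1:t-1})$, so the Jensen gap (which the paper writes out explicitly as a ``tightness gap'' term) vanishes and each $\mathcal{L}_t^K$ is exact. Your added necessity/uniqueness sketch and the remark on the filtering assumption go slightly beyond what the paper argues, but they are consistent with it and do not change the core argument.
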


\begin{proof}
$\mathcal{L}_{\text{MCFO}}^K$ becomes exact \textit{if and only if} $\mathcal{L}_t^K$ is exact for all $t$. $\log p(\mathbf{x}_{t}|\mathbf{x}_{1:t-1})$ is composed of lower bound $\mathcal{L}_t^K$ and tightness gap:
\begin{equation*}
    \begin{aligned}
        \log p(\mathbf{x}_{t}|\mathbf{x}_{1:t-1}) & =  \int p(\mathbf{z}_{1:t-1}^{1:K}|\mathbf{x}_{1:t-1}) \prod_{i=1}^K q(\mathbf{z}_{t}^i|\mathbf{z}_{1:t-1}^i,\mathbf{x}_{1:t}) \\
        & \quad \underbrace{\quad \quad \quad \quad \quad \log \left( \frac{1}{K} \sum_{i=1}^K     
        \frac{p(\mathbf{z}_{1:t}^i,\mathbf{x}_{1:t})}{p(\mathbf{z}_{1:t-1}^i,\mathbf{x}_{1:t-1})q(\mathbf{z}_{t}^i|\mathbf{z}_{1:t-1}^i,\mathbf{x}_{1:t})} \right)  \mathrm{d} \mathbf{z}_{1:t}^{1:K} }_{\mathcal{L}_t^K}\\
        & + \int p(\mathbf{z}_{1:t-1}^{1:K}|\mathbf{x}_{1:t-1}) \prod_{i=1}^K q(\mathbf{z}_{t}^i|\mathbf{z}_{1:t-1}^i,\mathbf{x}_{1:t})\\
        & \quad \quad  \quad \quad   \log  \dfrac{p(\mathbf{x}_t|\mathbf{x}_{1:t-1})}{\frac{1}{K} \sum_{i=1}^K     
        \frac{p(\mathbf{z}_{1:t}^i,\mathbf{x}_{1:t})}{p(\mathbf{z}_{1:t-1}^i,\mathbf{x}_{1:t-1})q(\mathbf{z}_{t}^i|\mathbf{z}_{1:t-1}^i,\mathbf{x}_{1:t})}} \mathrm{d} \mathbf{z}_{1:t}^{1:K},
    \end{aligned}
\end{equation*}
where
\begin{equation*}
    \begin{aligned}
        & \int p(\mathbf{z}_{1:t-1}^{1:K}|\mathbf{x}_{1:t-1}) \prod_{i=1}^K q(\mathbf{z}_{t}^i|\mathbf{z}_{1:t-1}^i,\mathbf{x}_{1:t}) \log  \dfrac{p(\mathbf{x}_t|\mathbf{x}_{1:t-1})}{\frac{1}{K} \sum_{i=1}^K     
        \frac{p(\mathbf{z}_{1:t}^i,\mathbf{x}_{1:t})}{p(\mathbf{z}_{1:t-1}^i,\mathbf{x}_{1:t-1})q(\mathbf{z}_{t}^i|\mathbf{z}_{1:t-1}^i,\mathbf{x}_{1:t})}} \mathrm{d} \mathbf{z}_{1:t}^{1:K}\\
        = & -\int p(\mathbf{z}_{1:t-1}^{1:K}|\mathbf{x}_{1:t-1}) \prod_{i=1}^K q(\mathbf{z}_{t}^i|\mathbf{z}_{1:t-1}^i,\mathbf{x}_{1:t}) \log 
        \frac{1}{K} \sum_{i=1}^K     
        \frac{p(\mathbf{z}_{1:t}^i|\mathbf{x}_{1:t})}{p(\mathbf{z}_{1:t-1}^i|\mathbf{x}_{1:t-1}) q(\mathbf{z}_{t}^i|\mathbf{z}_{1:t-1}^i,\mathbf{x}_{1:t})}
        \mathrm{d} \mathbf{z}_{1:t}^{1:K}.
    \end{aligned}
\end{equation*}
For any $K$ and $\mathbf{z}_{1:t}^i$ $\forall i$, the lower bound $\mathcal{L}_t^K$ is exact when the gap becomes zero that
\begin{equation*}
    q^{*}(\mathbf{z}_{t}|\mathbf{z}_{1:t-1},\mathbf{x}_{1:t}) = \frac{p(\mathbf{z}_{1:t}|\mathbf{x}_{1:t})}{p(\mathbf{z}_{1:t-1}|\mathbf{x}_{1:t-1})}.
\end{equation*}
Specifically, $\log p(\mathbf{x}_1)$ is decomposed to lower bound $\mathcal{L}_1^K$ and the gap:
\begin{equation*}
    \begin{aligned}
        \log p(\mathbf{x}_1) & = \underbrace{\int \prod_{i=1}^K q(\mathbf{z}_{1}^i|\mathbf{x}_{1}) \log \left( \frac{1}{K} \sum_{i=1}^K \frac{p(\mathbf{z}_{1}^i,\mathbf{x}_{1})}{q(\mathbf{z}_{1}^i|\mathbf{x}_{1})} \right) \mathrm{d} \mathbf{z}_{1}^{1:K}}_{\mathcal{L}_1^K}\\
        & \quad \quad + \int \prod_{i=1}^K q(\mathbf{z}_{1}^i|\mathbf{x}_{1}) \log \dfrac{p(\mathbf{x}_1)}{\frac{1}{K} \sum_{i=1}^K \frac{p(\mathbf{z}_{1}^i,\mathbf{x}_{1})}{q(\mathbf{z}_{1}^i|\mathbf{x}_{1})}} \mathrm{d} \mathbf{z}_{1}^{1:K},
    \end{aligned}
\end{equation*}
where
\begin{equation*}
    \int \prod_{i=1}^K q(\mathbf{z}_{1}^i|\mathbf{x}_{1}) \log \dfrac{p(\mathbf{x}_1)}{\frac{1}{K} \sum_{i=1}^K \frac{p(\mathbf{z}_{1}^i,\mathbf{x}_{1})}{q(\mathbf{z}_{1}^i|\mathbf{x}_{1})}} \mathrm{d} \mathbf{z}_{1}^{1:K} = -\int \prod_{i=1}^K q(\mathbf{z}_{1}^i|\mathbf{x}_{1}) \log \frac{1}{K} \sum_{i=1}^K \frac{p(\mathbf{z}_{1}^i|\mathbf{x}_{1})}{q(\mathbf{z}_{1}^i|\mathbf{x}_{1})} \mathrm{d} \mathbf{z}_{1}^{1:K}.
\end{equation*}
For any $K$ and $\mathbf{z}_{1}^i$ $\forall i$, the lower bound $\mathcal{L}_1^K$ is exact when the gap becomes zero that
\begin{equation*}
    q^{*}(\mathbf{z}_{1}|\mathbf{x}_{1}) = p(\mathbf{z}_{1}|\mathbf{x}_{1}).
\end{equation*}
\end{proof}

\subsubsection{Gradient $\triangledown_{{\boldtheta, \boldphi}} \mathbb{E}_{Q_t^K}[\log R_t^K]$}
\label{sec:appendix_gradient_theta}

\vspace*{-\abovedisplayskip}

\begin{equation*}
    \begin{aligned}
        \triangledown_{\boldtheta, \boldphi} \mathcal{L}_t^K = \triangledown_{{\boldtheta, \boldphi}} \mathbb{E}_{Q_t^K}[\log R_t^K] & =  \underbrace{\int \triangledown_{\boldtheta, \boldphi} \left( p_{\boldtheta}(\mathbf{z}_{1:t-1}^{1:K}|\mathbf{x}_{1:t-1}) \prod_{i=1}^K q_{\boldphi}(\mathbf{z}_{t}^{i}|\mathbf{z}_{1:t-1}^{i},\mathbf{x}_{1:t}) \right) \log R_t^K \mathrm{d} \mathbf{z}_{1:t}^{1:K}}_{\circled{1}}\\
        & \quad \quad  + \underbrace{\int p_{\boldtheta}(\mathbf{z}_{1:t-1}^{1:K}|\mathbf{x}_{1:t-1}) \prod_{i=1}^K q_{\boldphi}(\mathbf{z}_{t}^{i}|\mathbf{z}_{1:t-1}^{i},\mathbf{x}_{1:t}) \triangledown_{\boldtheta, \boldphi} \log R_t^K \mathrm{d} \mathbf{z}_{1:t}^{1:K} }_{\circled{2}},
    \end{aligned}
\end{equation*}
\vspace*{-\abovedisplayskip}
where
\begin{equation*}
    \begin{aligned}
        \triangledown_{\boldtheta} \log R_t^K & = \dfrac{ \dfrac{1}{K} \displaystyle \sum_{i=1}^K \triangledown_{\boldtheta} \left( \dfrac{p_{\boldtheta}(\mathbf{z}_{1:t}^{i},\mathbf{x}_{1:t})}{p_{\boldtheta}(\mathbf{z}_{1:t-1}^{i},\mathbf{x}_{1:t-1})q_{\boldphi}(\mathbf{z}_{t}^{i}|\mathbf{z}_{1:t-1}^{i},\mathbf{x}_{1:t})}\right)}{\dfrac{1}{K} \displaystyle \sum_{i=1}^K \dfrac{p_{\boldtheta}(\mathbf{z}_{1:t}^{i},\mathbf{x}_{1:t})}{p_{\boldtheta}(\mathbf{z}_{1:t-1}^{i},\mathbf{x}_{1:t-1})q_{\boldphi}(\mathbf{z}_{t}^{i}|\mathbf{z}_{1:t-1}^{i},\mathbf{x}_{1:t})}} \\
        & = \dfrac{\displaystyle \sum_{i=1}^K \triangledown_{\boldtheta}\log p_{\boldtheta}(\mathbf{z}_{t}^{i},\mathbf{x}_{t}|\mathbf{z}_{1:t-1}^{i},\mathbf{x}_{1:t-1}) \cdot \dfrac{p_{\boldtheta}(\mathbf{z}_{t}^{i},\mathbf{x}_{t}|\mathbf{z}_{1:t-1}^{i},\mathbf{x}_{1:t-1})}{q_{\boldphi}(\mathbf{z}_{t}^{i}|\mathbf{z}_{1:t-1}^{i},\mathbf{x}_{1:t})}}{\displaystyle \sum_{i=1}^K \dfrac{p_{\boldtheta}(\mathbf{z}_{t}^{i},\mathbf{x}_{t}|\mathbf{z}_{1:t-1}^{i},\mathbf{x}_{1:t-1})}{q_{\boldphi}(\mathbf{z}_{t}^{i}|\mathbf{z}_{1:t-1}^{i},\mathbf{x}_{1:t})}}\\
        & = \sum_{i=1}^K \Tilde{w}_{t, \boldtheta, \boldphi}^i \triangledown_{\boldtheta}\log p_{\boldtheta}(\mathbf{z}_{t}^{i},\mathbf{x}_{t}|\mathbf{z}_{1:t-1}^{i},\mathbf{x}_{1:t-1}),
    \end{aligned}
\end{equation*}
\vspace*{-\abovedisplayskip}
\begin{equation*}
    \begin{aligned}
        \triangledown_{\boldphi} \log R_t^K & = \dfrac{ \dfrac{1}{K} \displaystyle \sum_{i=1}^K \triangledown_{\boldphi} \left( \dfrac{p_{\boldtheta}(\mathbf{z}_{1:t}^{i},\mathbf{x}_{1:t})}{p_{\boldtheta}(\mathbf{z}_{1:t-1}^{i},\mathbf{x}_{1:t-1})q_{\boldphi}(\mathbf{z}_{t}^{i}|\mathbf{z}_{1:t-1}^{i},\mathbf{x}_{1:t})}\right)}{\dfrac{1}{K} \displaystyle \sum_{i=1}^K \dfrac{p_{\boldtheta}(\mathbf{z}_{1:t}^{i},\mathbf{x}_{1:t})}{p_{\boldtheta}(\mathbf{z}_{1:t-1}^{i},\mathbf{x}_{1:t-1})q_{\boldphi}(\mathbf{z}_{t}^{i}|\mathbf{z}_{1:t-1}^{i},\mathbf{x}_{1:t})}} \\
        & = - \sum_{i=1}^K \Tilde{w}_{t, \boldtheta, \boldphi}^i \triangledown_{\boldtheta}\log q_{\boldphi}(\mathbf{z}_{t}^{i}|\mathbf{z}_{1:t-1}^{i},\mathbf{x}_{1:t}),\\
        & \triangledown_{\boldtheta} \left( p_{\boldtheta}(\mathbf{z}_{1:t-1}^{1:K}|\mathbf{x}_{1:t-1}) \prod_{i=1}^K q_{\boldphi}(\mathbf{z}_{t}^{i}|\mathbf{z}_{1:t-1}^{i},\mathbf{x}_{1:t}) \right) \\
        = & \triangledown_{\boldtheta} \log p_{\boldtheta}(\mathbf{z}_{1:t-1}^{1:K}|\mathbf{x}_{1:t-1}) \left( p_{\boldtheta}(\mathbf{z}_{1:t-1}^{1:K}|\mathbf{x}_{1:t-1}) \prod_{i=1}^K q_{\boldphi}(\mathbf{z}_{t}^{i}|\mathbf{z}_{1:t-1}^{i},\mathbf{x}_{1:t}) \right), \\
        & \triangledown_{\boldphi} \left( p_{\boldtheta}(\mathbf{z}_{1:t-1}^{1:K}|\mathbf{x}_{1:t-1}) \prod_{i=1}^K q_{\boldphi}(\mathbf{z}_{t}^{i}|\mathbf{z}_{1:t-1}^{i},\mathbf{x}_{1:t}) \right)\\
        = & \sum_{i=1}^K \triangledown_{\boldphi} \log q_{\boldphi}(\mathbf{z}_{t}^{i}|\mathbf{z}_{1:t-1}^{i},\mathbf{x}_{1:t})  \left( p_{\boldtheta}(\mathbf{z}_{1:t-1}^{1:K}|\mathbf{x}_{1:t-1}) \prod_{i=1}^K q_{\boldphi}(\mathbf{z}_{t}^{i}|\mathbf{z}_{1:t-1}^{i},\mathbf{x}_{1:t}) \right).
    \end{aligned}
\end{equation*}

Therefore, the gradient with respect to $\boldtheta$ is estimated by:
\vspace*{-\abovedisplayskip}
\begin{equation*}
    \begin{aligned}
        \circled{1} & \simeq \log R_t^K(\hat{\mathbf{z}}_{1:t}^{1:K},\mathbf{x}_{1:t}) \triangledown_{\boldtheta} \log p_{\boldtheta}(\hat{\mathbf{z}}_{1:t-1}^{1:K}|\mathbf{x}_{1:t-1}),   \\
        \circled{2} & \simeq \sum_{i=1}^K \Tilde{w}_{t, \boldtheta, \boldphi}^i \triangledown_{\boldtheta}\log p_{\boldtheta}(\hat{\mathbf{z}}_{t}^{i},\mathbf{x}_{t}|\hat{\mathbf{z}}_{1:t-1}^{i},\mathbf{x}_{1:t-1}),\\
        & \quad \quad \quad \hat{\mathbf{z}}_{1:t-1}^{i} \sim p_{\boldtheta}(\mathbf{z}_{1:t-1}|\mathbf{x}_{1:t-1}), \quad \quad \hat{\mathbf{z}}_{t}^{i} \sim q_{\boldphi}(\mathbf{z}_{t}|\hat{\mathbf{z}}_{1:t-1}^{i},\mathbf{x}_{1:t}),
    \end{aligned}
\end{equation*}
and the gradient with respect to $\boldphi$ is estimated by:
\vspace*{-\abovedisplayskip}
\begin{equation*}
    \begin{aligned}
        \circled{1} & \simeq \log R_t^K(\hat{\mathbf{z}}_{1:t}^{1:K},\mathbf{x}_{1:t}) \sum_{i=1}^K  \triangledown_{\boldphi} \log q_{\boldphi}(\hat{\mathbf{z}}_{t}^{i}|\hat{\mathbf{z}}_{1:t-1}^{i},\mathbf{x}_{1:t}), \\
        \circled{2} & \simeq - \sum_{i=1}^K \Tilde{w}_{t, \boldtheta, \boldphi}^i \triangledown_{\boldtheta}\log q_{\boldphi}(\hat{\mathbf{z}}_{t}^{i}|\hat{\mathbf{z}}_{1:t-1}^{i},\mathbf{x}_{1:t}),\\
        & \quad \quad \quad \hat{\mathbf{z}}_{1:t-1}^{i} \sim p_{\boldtheta}(\mathbf{z}_{1:t-1}|\mathbf{x}_{1:t-1}), \quad \quad \hat{\mathbf{z}}_{t}^{i} \sim q_{\boldphi}(\mathbf{z}_{t}|\hat{\mathbf{z}}_{1:t-1}^{i},\mathbf{x}_{1:t}).
    \end{aligned}
\end{equation*}

\subsubsection{Score Function of $ p_{\boldtheta}(\mathbf{x}_{t}|\mathbf{x}_{1:t-1})$}
\label{sec:appendix_gradient_theta_rws}
Fisher's identity gives the score function of $ p_{\boldtheta}(\mathbf{x}_{t}|\mathbf{x}_{1:t-1})$ related to the distribution and derivative with respect to parameters: 
\iftrue
\begin{equation*}
    \begin{aligned}
        \triangledown_{\boldtheta} \log p_{\boldtheta}(\mathbf{x}_{1:t}) 
        & = \dfrac{\triangledown_{\boldtheta} p_{\boldtheta}(\mathbf{x}_{1:t})}{p_{\boldtheta}(\mathbf{x}_{1:t})}\\
        & = \int \dfrac{\triangledown_{\boldtheta} p_{\boldtheta}(\mathbf{x}_{1:t}, \mathbf{z}_{1:t})}{p_{\boldtheta}(\mathbf{x}_{1:t})} d \mathbf{z}_{1:t}\\
        & = \int \dfrac{\triangledown_{\boldtheta} \log p_{\boldtheta}(\mathbf{x}_{1:t}, \mathbf{z}_{1:t}) p_{\boldtheta}(\mathbf{x}_{1:t}, \mathbf{z}_{1:t})}{p_{\boldtheta}(\mathbf{x}_{1:t})} d \mathbf{z}_{1:t}\\
        & = \int p_{\boldtheta}(\mathbf{z}_{1:t}|\mathbf{x}_{1:t}) \triangledown_{\boldtheta} \log p_{\boldtheta}(\mathbf{x}_{1:t}, \mathbf{z}_{1:t})  d \mathbf{z}_{1:t}.\\
    \end{aligned}
    \label{eq:gradient_theta_rws_fisher}
\end{equation*}
\fi

Assuming the filtering problem that the latent variable does not condition on future observations, the score function of $p_{\boldtheta}(\mathbf{x}_{t}|\mathbf{x}_{1:t-1})$ using SMC estimates becomes:
\begin{equation*}
    \begin{aligned}
        \triangledown_{\boldtheta} \log p_{\boldtheta}(\mathbf{x}_{t}|\mathbf{x}_{1:t-1}) 
        & = \triangledown_{\boldtheta} \log p_{\boldtheta}(\mathbf{x}_{1:t}) - \triangledown_{\boldtheta} \log p_{\boldtheta}(\mathbf{x}_{1:t-1}) \\
        & = \int p_{\boldtheta}(\mathbf{z}_{1:t}|\mathbf{x}_{1:t}) \triangledown_{\boldtheta} \log p_{\boldtheta}(\mathbf{x}_{1:t}, \mathbf{z}_{1:t})  d \mathbf{z}_{1:t} \\
        & \quad \quad \quad \quad  - \int p_{\boldtheta}(\mathbf{z}_{1:t-1}|\mathbf{x}_{1:t-1}) \triangledown_{\boldtheta} \log p_{\boldtheta}(\mathbf{x}_{1:t-1}, \mathbf{z}_{1:t-1})  d \mathbf{z}_{1:t-1}\\
        & = \int p_{\boldtheta}(\mathbf{z}_{1:t}|\mathbf{x}_{1:t}) \triangledown_{\boldtheta} \log p_{\boldtheta}(\mathbf{x}_{t}, \mathbf{z}_{t}|\mathbf{x}_{1:t-1}, \mathbf{z}_{1:t-1})  d \mathbf{z}_{1:t} \\
        & \simeq \sum_{i=1}^K \Tilde{w}_{t, \boldtheta, \boldphi}^i \triangledown_{\boldtheta} p_{\boldtheta}(\mathbf{x}_{t},\hat{\mathbf{z}}_{t}^{i}|\mathbf{x}_{1:t-1}, \hat{\mathbf{z}}_{1:t-1}^{i}),\\
        & \quad \quad \quad \Tilde{w}_{t, \boldtheta, \boldphi}^i = \dfrac{w_{t, \boldtheta, \boldphi}^i}{\sum_{j=1}^K w_{t, \boldtheta, \boldphi}^j}, w_{t, \boldtheta, \boldphi}^i = \dfrac{p_{\boldtheta}(\mathbf{z}_{t}^{i},\mathbf{x}_{t}|\mathbf{z}_{1:t-1}^{i},\mathbf{x}_{1:t-1})}{q_{\boldphi}(\mathbf{z}_{t}^{i}|\mathbf{z}_{1:t-1}^{i},\mathbf{x}_{1:t})},\\
        & \quad \quad \quad \hat{\mathbf{z}}_{1:t-1}^{i} \sim p_{\boldtheta}(\mathbf{z}_{1:t-1}|\mathbf{x}_{1:t-1}), \quad \quad \hat{\mathbf{z}}_{t}^{i} \sim q_{\boldphi}(\mathbf{z}_{t}|\hat{\mathbf{z}}_{1:t-1}^{i},\mathbf{x}_{1:t}).
    \end{aligned}
    \label{eq:gradient_theta_rws_derive}
\end{equation*}

\subsection{Linear Gaussian State Space Model (LGSSM)}
\subsubsection{Optimal Proposals of LGSSM}
\label{sec:appendix_optimal_proposal_lgssm}
The optimal proposal $q(z_t|x_{1:t}, z_{1:t-1})$ for LGSSM defined in (\ref{eq:lgssm}) to have zero variance of important weights is:
\begin{equation*}
    \begin{aligned}
        & q^*(z_1|x_{1}) = p(z_1|x_{1}), \\
        & q^*(z_t|z_{1:t-1},x_{1:t}) = p(z_t|x_t, z_{t-1}),
    \end{aligned}
\end{equation*}
where 
\begin{equation*}
    \begin{aligned}
        & p(z_1|x_{1}) = \dfrac{p(z_1,x_1)}{p(x_1)} = \dfrac{p(x_1|z_1) p(z_1)}{\int p(x_1|z_1) p(z_1) \mathrm{d} z_1} = \dfrac{\mathcal{N}(x_1; \theta_2 z_1, \Sigma_R)\mathcal{N}(z_1; \mu_0, \sigma_0^2)}{\int \mathcal{N}(x_1; \theta_2 z_1, \Sigma_R)\mathcal{N}(z_1; \mu_0, \sigma_0^2) \mathrm{d} z_1}\\
        = & \mathcal{N}(z_1; \frac{\Sigma_R}{\Sigma_R+\sigma_0^2 \theta_2^2}\mu_0 + \frac{\sigma_0^2 \theta_2}{\Sigma_R +\sigma_0^2 \theta_2^2}x_1, \frac{\sigma_0^2 \Sigma_R}{\Sigma_R+\sigma_0^2 \theta_2^2}),\\
        & p(z_t|x_t, z_{t-1}) =  \dfrac{p(x_t|z_t) p(z_t|z_{t-1})}{p(x_t|z_{t-1})} = \dfrac{\mathcal{N}(x_t; \theta_2 z_t, \Sigma_R)\mathcal{N}(z_t; \theta_1 z_{t-1}, \Sigma_Q)}{\int \mathcal{N}(x_t; \theta_2 z_t, \Sigma_R)\mathcal{N}(z_t; \theta_1 z_{t-1}, \Sigma_Q) \mathrm{d} z_t}\\
        = & \mathcal{N}(z_t; \frac{\Sigma_R \theta_1}{\Sigma_R+\Sigma_Q \theta_2^2}z_{t-1} + \frac{\Sigma_Q \theta_2}{\Sigma_R+\Sigma_Q \theta_2^2} x_t, \frac{\Sigma_Q \Sigma_R}{\Sigma_R+\Sigma_Q \theta_2^2}).
    \end{aligned}
\end{equation*}

\subsubsection{Gradient $\triangledown_{\boldtheta} \log p(x_{1:T})$}
\label{sec:appendix_lgssm_gradient}
$\log p(x_{1:T})$ for LGSSM defined in (\ref{eq:lgssm}) is tractable:
\begin{equation*}
    \begin{aligned}
        & \log p(x_{1:T}) = \log p(x_1) + \sum_{t=2}^T \log p(x_t|x_{1:t-1}),\\
        & \log p(x_1) \sim \mathcal{N}(x_1; \theta_2 m_1^{-}(\boldtheta), S_1(\boldtheta)),\\
        & \log p(x_t|x_{1:t-1}) \sim \mathcal{N}(x_t;\theta_2 m_t^{-}(\boldtheta), S_t(\boldtheta)),
    \end{aligned}
\end{equation*}   
where
\begin{equation*}
    \begin{aligned}
        & m_t^{-}(\boldtheta)=\theta_1 m_{t-1}(\boldtheta), \quad \quad S_t(\boldtheta) = \theta_2^2 P_t^{-}(\boldtheta) + \Sigma_R, \quad \quad P_t^{-}(\boldtheta) =  \theta_1^2 P_{t-1}(\boldtheta) + \Sigma_Q, \\
        & m_1^{-}(\boldtheta) = \mu_0, \quad \quad S_1(\boldtheta) = \theta_2^2 \sigma_0^2 + \Sigma_R,\\
        & v_t(\boldtheta) = x_t - \theta_2 m_t^{-}(\boldtheta), \quad \quad 
        m_t(\boldtheta) = m_t^{-}(\boldtheta) + K_t(\boldtheta) v_t(\boldtheta), \\
        & K_t(\boldtheta) = \dfrac{\theta_2 P_t^{-}(\boldtheta)}{S_t(\boldtheta)}, \quad \quad 
        P_t(\boldtheta) = P_t^{-}(\boldtheta) - K_t(\boldtheta)^2 S_t(\boldtheta).
    \end{aligned}
\end{equation*}
Gradient of joint marginal log-likelihood w.r.t. $\boldtheta$, $\triangledown_{\boldtheta} \log p(x_{1:T})$:
\begin{equation*}
    \allowdisplaybreaks
    \begin{aligned}
        & \triangledown_{\boldtheta} \log p(x_{1:T}) = \triangledown_{\boldtheta} \log p(x_1) + \sum_{t=2}^T \triangledown_{\boldtheta} \log p(x_t|x_{1:t-1}),\\
        & \triangledown_{\boldtheta} \log p(x_t|x_{1:t-1}) = -\frac{1}{2} \triangledown_{\boldtheta} \log S_t(\boldtheta) + \frac{1}{2} \dfrac{\triangledown_{\boldtheta} S_t(\boldtheta) v_t(\boldtheta)^2}{S_t(\boldtheta)^2}  - \dfrac{v_t(\boldtheta)}{S_t(\boldtheta)},\\
        & \triangledown_{\boldtheta} \log p(x_1) = -\frac{1}{2} \triangledown_{\boldtheta} \log S_1(\boldtheta) + \frac{1}{2} \dfrac{\triangledown_{\boldtheta} S_1(\boldtheta) v_1(\boldtheta)^2}{S_1(\boldtheta)^2}  - \dfrac{v_1(\boldtheta)}{S_1(\boldtheta)},
    \end{aligned}
\end{equation*}
where
\begin{equation*}
    \allowdisplaybreaks
    \begin{aligned}
        & \triangledown_{\theta_1}S_t(\boldtheta) = \triangledown_{\theta_1}P_t^{-}(\boldtheta) \theta_2^2,
        \quad \quad \triangledown_{\theta_2}S_t(\boldtheta) = \triangledown_{\theta_2}P_t^{-}(\boldtheta) \theta_2^2 + 2 P_t^{-}(\boldtheta) \theta_2,\\     
        & \triangledown_{\theta_1}P_t^{-}(\boldtheta) = 2\theta_1 P_{t-1}(\boldtheta) + \theta_1^2 \triangledown_{\theta_1} P_{t-1}(\boldtheta),
        \quad \quad \triangledown_{\theta_2}P_t^{-}(\boldtheta) = \theta_1^2 \triangledown_{\theta_2} P_{t-1}(\boldtheta),\\
        & \triangledown_{\boldtheta}P_t(\boldtheta) = 2\triangledown_{\boldtheta} P_t^{-}(\boldtheta) - 2 K_t(\boldtheta) S_t(\boldtheta) \triangledown_{\boldtheta} K_t(\boldtheta) - K_t(\boldtheta)^2 \triangledown_{\boldtheta} S_t(\boldtheta),\\
        & \triangledown_{\theta_1} K_t(\boldtheta) =  \frac{\theta_2 \triangledown_{\theta_1} P_t^{-}(\boldtheta)}{S_t(\boldtheta)} - \frac{\theta_2 P_t^{-}(\boldtheta) \triangledown_{\theta_1} S_t(\boldtheta)}{S_t(\boldtheta)^2},\\
        & \triangledown_{\theta_2} K_t(\boldtheta) =  \dfrac{\theta_2 \triangledown_{\theta_2} P_t^{-}(\boldtheta) + P_t^{-}(\boldtheta)}{S_t(\boldtheta)} - \dfrac{\theta_2 P_t^{-}(\boldtheta) \triangledown_{\theta_2} S_t(\boldtheta)}{S_t(\boldtheta)^2},\\
        & \triangledown_{\theta_1}v_t(\boldtheta) = -\theta_2 \triangledown_{\theta_1} m_t^{-}(\boldtheta),
        \quad \quad \triangledown_{\theta_2}v_t(\boldtheta) = - m_t^{-}(\boldtheta) - \theta_2 \triangledown_{\theta_2} m_t^{-}(\boldtheta),\\
        & \triangledown_{\theta_1} m_t^{-}(\boldtheta) = m_{t-1}(\boldtheta) + \theta_1 \triangledown_{\theta_1} m_{t-1}(\boldtheta),
        \quad \quad \triangledown_{\theta_2} m_t^{-}(\boldtheta) = \theta_1 \triangledown_{\theta_2} m_{t-1}(\boldtheta)\\
        & \triangledown_{\boldtheta} m_{t}(\boldtheta) = \triangledown_{\boldtheta} m_t^{-}(\boldtheta) + v_t(\boldtheta) \triangledown_{\boldtheta} K_t(\boldtheta)  + K_t(\boldtheta) \triangledown_{\boldtheta} v_t(\boldtheta), \\
        & \triangledown_{\boldtheta} P_1^{-}(\boldtheta) = 0, 
        \quad \quad \triangledown_{\boldtheta} m_1^{-}(\boldtheta) = 0.
    \end{aligned}
\end{equation*}

\begin{figure}[!htb]
    \centering
    \begin{subfigure}[b]{0.305\textwidth}
        \centering
        \includegraphics[width=\textwidth]{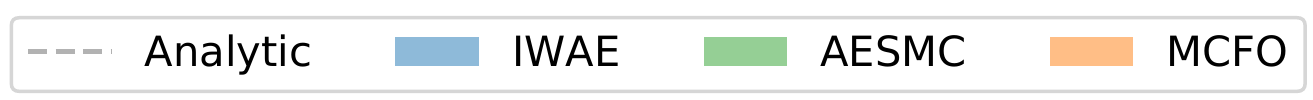}
    \end{subfigure}%
    \vfill
    \begin{subfigure}[b]{0.705\textwidth}
        \centering
        \includegraphics[width=\textwidth]{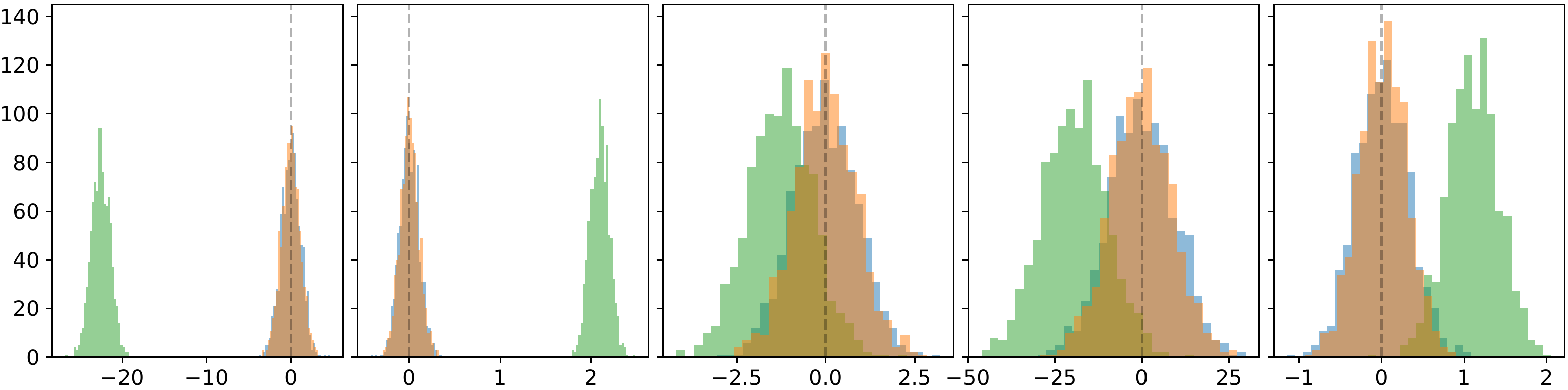}
    \end{subfigure}%
    \hfill
    \begin{subfigure}[b]{0.295\textwidth}
        \centering
        \includegraphics[width=\textwidth]{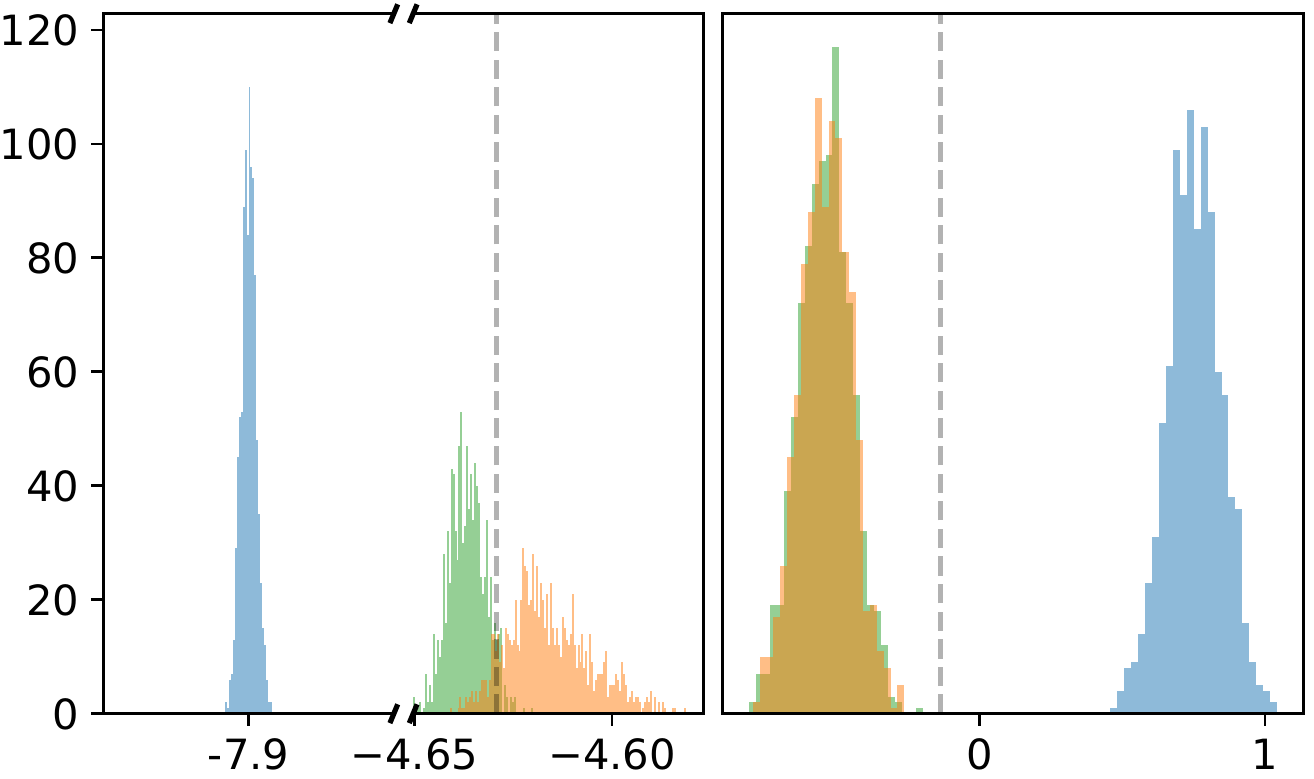}
    \end{subfigure}%
    \vfill
    \begin{subfigure}[b]{0.707\textwidth}
        \centering
        \includegraphics[width=\textwidth]{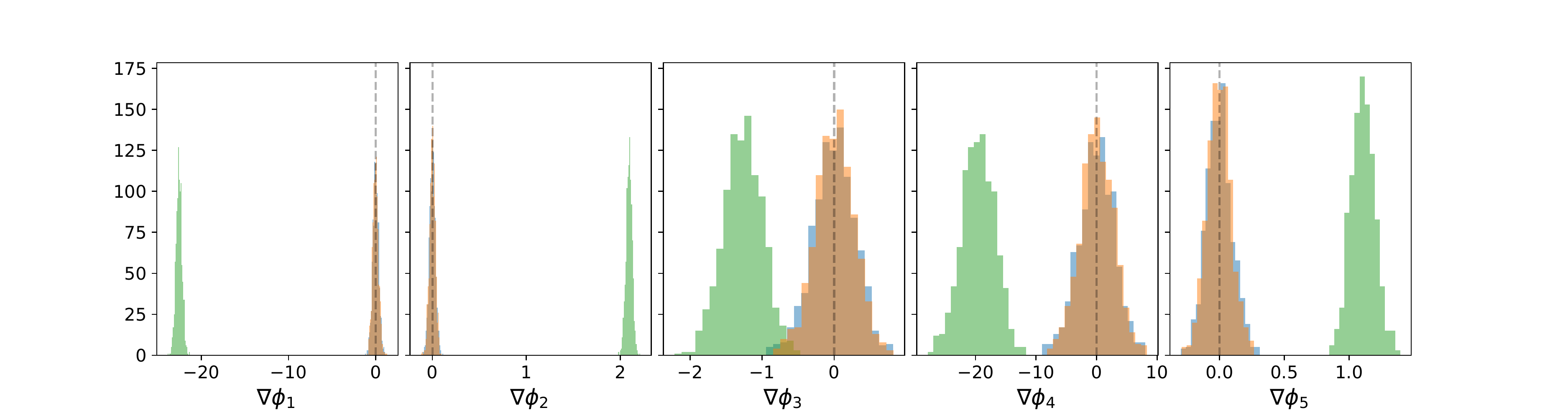}
    \end{subfigure}%
    \hfill
    \begin{subfigure}[b]{0.293\textwidth}
        \centering
        \includegraphics[width=\textwidth]{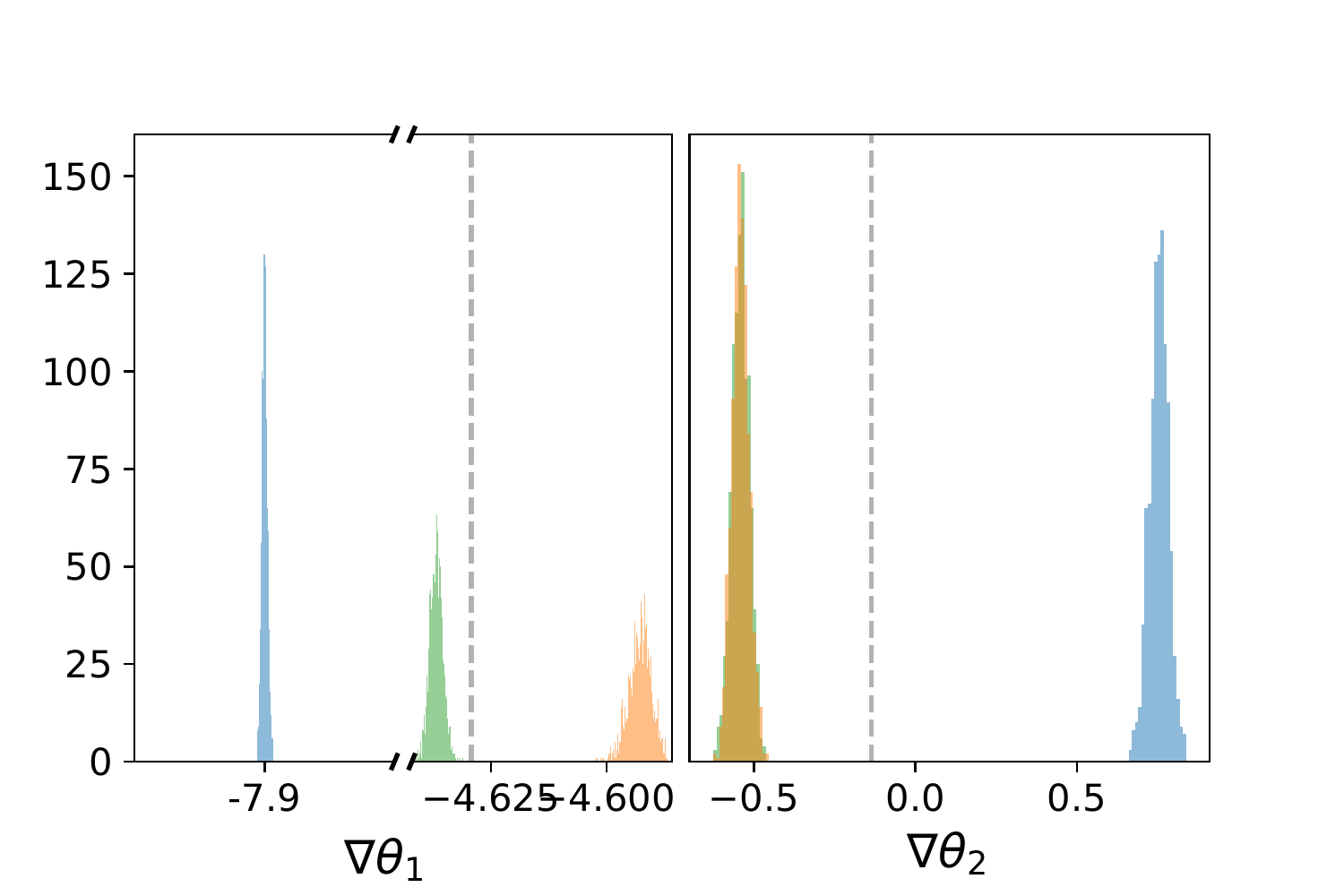}
    \end{subfigure}%
    \vspace{-1mm}
    \caption{Gradient estimates of IWAE, AESMC, MCFO by Automatic Differentiation and analytic gradient of marginal log-likelihood w.r.t. generative and proposal parameters at the same optima as Figure 1; \textit{Top:}K = 10000, \textit{Low:} K = 100000. \vspace{-2mm}}
    \label{fig:gradient_phi_optimum_more}
\end{figure}

\begin{table}[!htb]
    \centering
    \footnotesize
    \begin{tabular}{rc|ccccc}
        Methods & K  & $\triangledown \phi_1$ & $\triangledown \phi_2$ & $\triangledown \phi_3$ &
        $\triangledown \phi_4$ & $\triangledown \phi_5$ \\
        \hline
        MCFO & \multirow{3}{*}{1}  & 0.45 $\pm$ 104.50 & -0.05 $\pm$ 10.01 & -0.97 $\pm$ 82.39 & 2.19 $\pm$ 870.35 & 0.07 $\pm$ 27.33\\
        AESMC &  & -29.34 $\pm$ 108.92 & 2.33 $\pm$ 10.11 & 1.75 $\pm$ 90.75 & -27.07 $\pm$ 904.10 & 1.35 $\pm$ 31.45 \\
        IWAE &  & 1.24 $\pm$ 106.79 & 0.09 $\pm$ 3.29 & -0.35 $\pm$ 89.99 & -12.45 $\pm$ 995.37 & 0.89 $\pm$ 30.07 \\
        \hline
        MCFO & \multirow{3}{*}{10}  & 0.29 $\pm$ 34.84 & -0.03 $\pm$ 3.24 & -0.57 $\pm$ 27.40 & 1.99 $\pm$ 284.15 & -0.05 $\pm$ 9.33\\
        AESMC &  & -22.81 $\pm$ 35.04 & 2.12 $\pm$ 3.25 & -0.92 $\pm$ 27.70 & -22.49 $\pm$ 294.92& 0.88 $\pm$ 9.49 \\
        IWAE &  & -0.96 $\pm$ 35.46 & 0.09 $\pm$ 3.29 & -0.35 $\pm$ 28.85 & -9.80 $\pm$ 284.94 & 0.65 $\pm$ 10.00 \\
        \hline
        MCFO & \multirow{3}{*}{100} & -0.37 $\pm$ 10.77 & 0.04 $\pm$ 1.00 & -0.25$\pm$ 8.52 & -3.21 $\pm$ 86.13 & 0.10 $\pm$ 2.87 \\
        AESMC & & -22.13 $\pm$ 10.59 & 2.05 $\pm$ 0.98 & -1.38 $\pm$ 8.99 &-19.99 $\pm$ 91.75 & 1.10$\pm$ 3.04  \\
        IWAE & & 0.22 $\pm$ 11.11 & -0.02 $\pm$ 1.03 & -0.27 $\pm$ 9.61 & -3.93 $\pm$ 98.78 & 0.08 $\pm$ 3.35\\
        \hline
        MCFO & \multirow{3}{*}{1000} & 0.11 $\pm$ 3.39 & 0.00 $\pm$ 0.31 & -0.04 $\pm$ 2.75& -0.19 $\pm$ 27.93 & 0.02 $\pm$ 0.93\\
        AESMC & & -22.39 $\pm$ 3.42 & 2.08 $\pm$ 0.32 & -1.38 $\pm$ 2.87 & -20.17$\pm$ 29.54 & 1.12 $\pm$ 0.99\\
        IWAE & & 0.13 $\pm$ 3.58 & -0.01 $\pm$ 0.33 & -0.01 $\pm$ 2.89 & -0.13$\pm$ 30.07 & 0.01 $\pm$ 1.01\\
        \hline
        MCFO & \multirow{3}{*}{10000} & -0.04 $\pm$ 1.10 & 0.00 $\pm$ 0.10 & -0.02 $\pm$ 0.85  & -0.31 $\pm$ 8.76 & 0.01 $\pm$ 0.29\\
        AESMC & & -22.55 $\pm$ 1.12 & 2.09 $\pm$ 0.10 & -1.27 $\pm$ 0.91 & -19.48 $\pm$ 9.18 & 1.10$\pm$ 0.31\\
        IWAE & & -0.01 $\pm$ 1.15 & 0.00 $\pm$ 0.11 & -0.03 $\pm$ 0.91 & -0.09 $\pm$ 9.49 & 0.01 $\pm$ 0.32\\
        \hline
        MCFO & \multirow{3}{*}{100000} & -0.01 $\pm$ 0.33 & 0.00 $\pm$ 0.031 & 0.01 $\pm$ 0.28 & 0.09 $\pm$ 2.79 & 0.00 $\pm$ 0.092\\
        AESMC & & -22.56$\pm$ 0.34& 2.09 $\pm$ 0.032 & -1.27 $\pm$ 0.27 & -19.57 $\pm$ 2.73 & 1.11 $\pm$ 0.092\\
        IWAE & & 0.00 $\pm$ 0.34 & 0.00 $\pm$ 0.032 & 0.00 $\pm$ 0.29 & -0.04 $\pm$ 2.97 & 0.00 $\pm$ 0.099
    \end{tabular}
    \begin{tabular}{rc|cc}
        Methods & K & $\triangledown \theta_1$ & $\triangledown \theta_2$\\
        \hline
        MCFO & \multirow{3}{*}{10}
        & -4.64 $\pm$ 0.14 & -0.52 $\pm$ 2.94  \\
        AESMC & & -4.63 $\pm$ 0.14 & -0.45 $\pm$ 2.81\\
        IWAE & & -7.93 $\pm$ 0.059 & 0.51 $\pm$ 2.94\\
        \hline
        MCFO & \multirow{3}{*}{100} & -4.64$\pm$0.046 &-0.51 $\pm$0.88\\
        AESMC & & -4.64 $\pm$ 0.046 & -0.54 $\pm$ 0.92 \\
        IWAE & & -7.93 $\pm$ 0.020& 0.71 $\pm$ 0.99 \\
        \hline
        MCFO & \multirow{3}{*}{1000} &-4.63 $\pm$0.018 & -0.55$\pm$0.28\\
        AESMC & &-4.64 $\pm$ 0.015&-0.55 $\pm$ 0.29\\
        IWAE & & -7.93$\pm$0.0060 &0.75 $\pm$0.30 \\
        \hline
        MCFO & \multirow{3}{*}{10000} & -4.62$\pm$0.0056 &-0.54 $\pm$0.089\\
        AESMC & & -4.64 $\pm$ 0.0046 & -0.54$\pm$0.092 \\
        IWAE & & -7.93$\pm$0.00189 & 0.75 $\pm$0.095 \\
        \hline
        MCFO & \multirow{3}{*}{100000} & -4.59 $\pm$ 0.0024 & -0.54 $\pm$ 0.026\\
        AESMC & & -4.64 $\pm$ 0.0015 & -0.54$\pm$ 0.027 \\
        IWAE & & -7.93$\pm$ 0.00059 & 0.75 $\pm$ 0.030\\
    \end{tabular}
    \caption{The mean and standard deviation of gradient estimates with respect to parameters $\boldphi$ and $\boldtheta$ reported in Figure 1 and \ref{fig:gradient_phi_optimum_more}.}\vspace{-4mm}
    \label{tab:gradient_stats_phi}
\end{table}

\begin{figure}[!htb]
    \centering
    \begin{subfigure}[b]{0.704\textwidth}
        \centering
        \includegraphics[width=\textwidth]{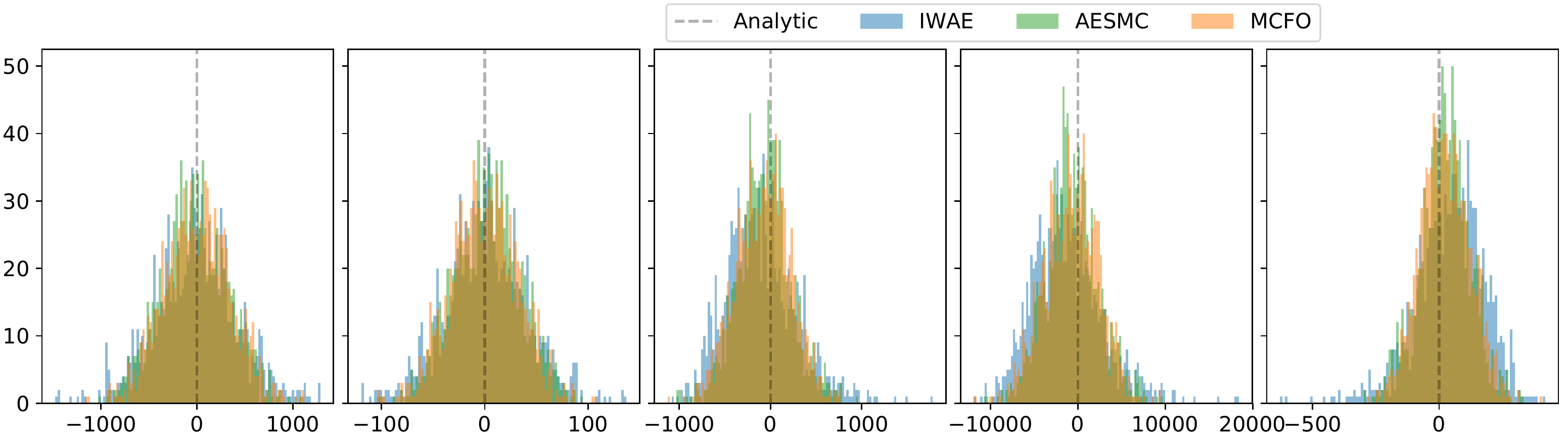}
    \end{subfigure}%
    \hfill
    \begin{subfigure}[b]{0.296\textwidth}
        \centering
        \includegraphics[width=\textwidth]{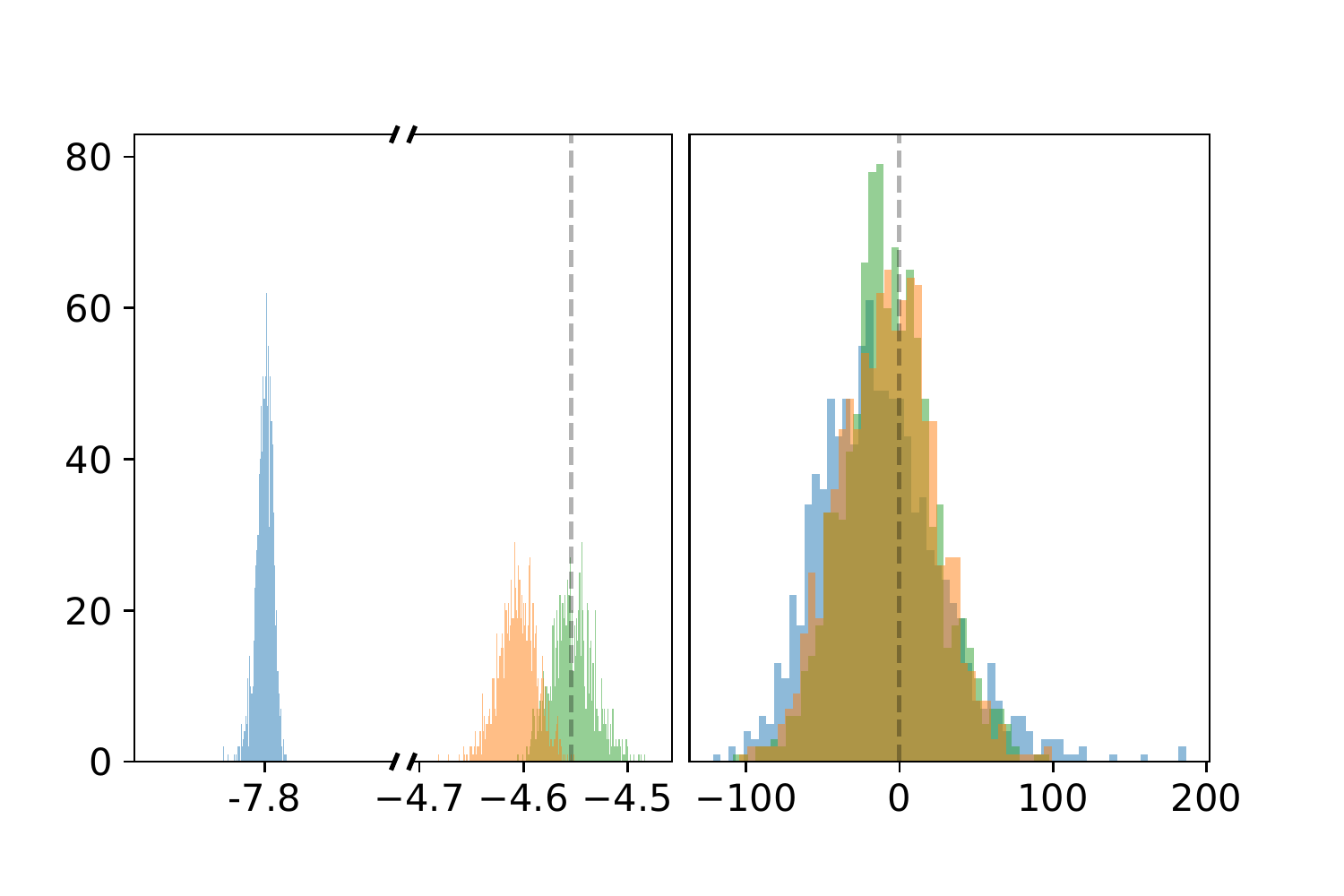}
    \end{subfigure}%
    \vfill
    \begin{subfigure}[b]{0.707\textwidth}
        \centering
        \includegraphics[width=\textwidth]{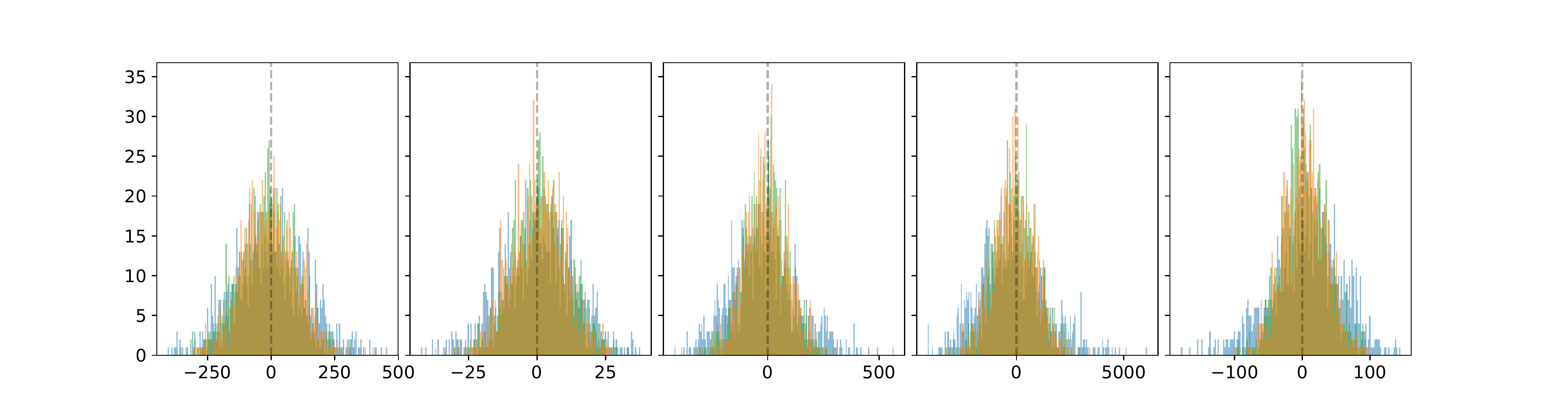}
    \end{subfigure}%
    \hfill
    \begin{subfigure}[b]{0.293\textwidth}
        \centering
        \includegraphics[width=\textwidth]{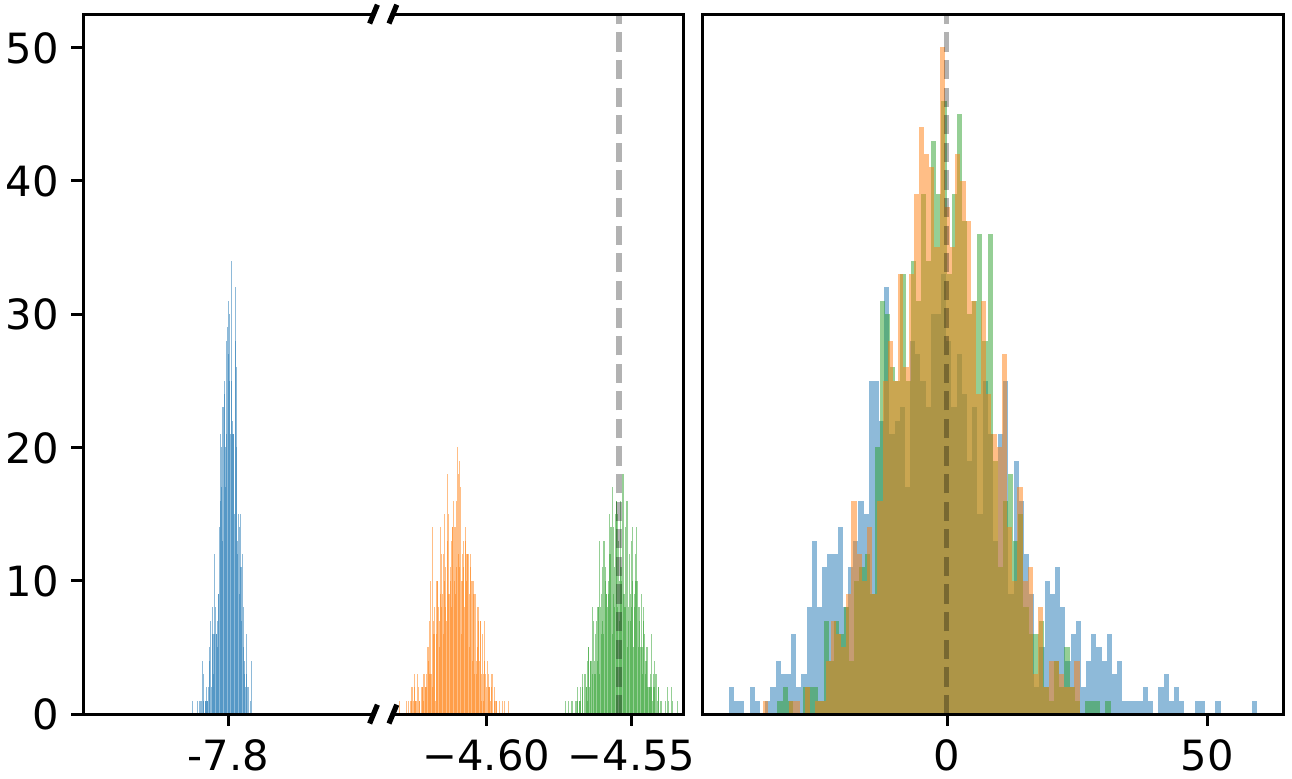}
    \end{subfigure}%
    \vfill
    \begin{subfigure}[b]{0.706\textwidth}
        \centering
        \includegraphics[width=\textwidth]{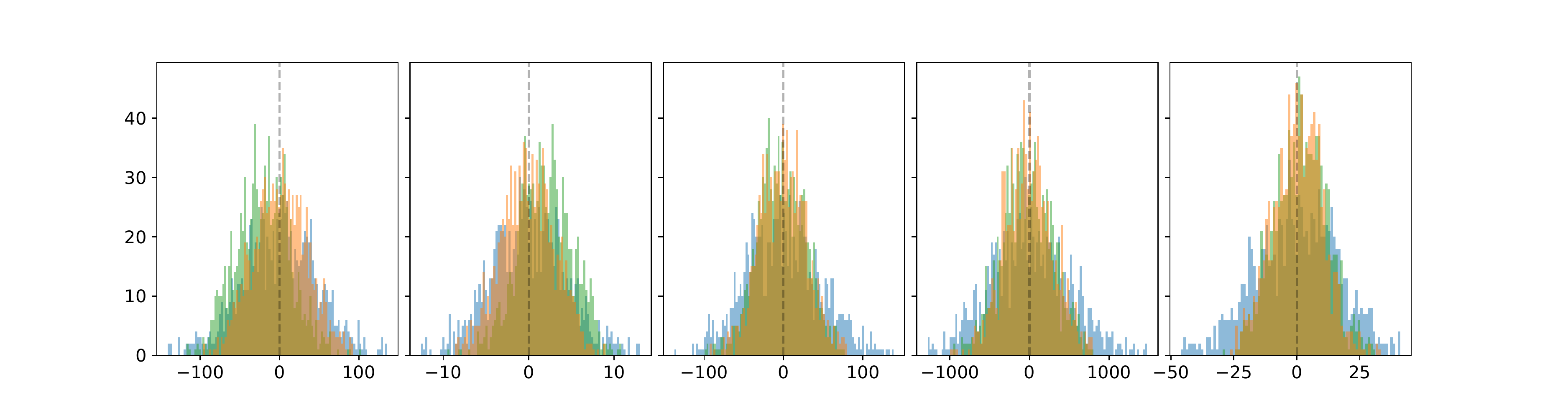}
    \end{subfigure}%
    \hfill
    \begin{subfigure}[b]{0.294\textwidth}
        \centering
        \includegraphics[width=\textwidth]{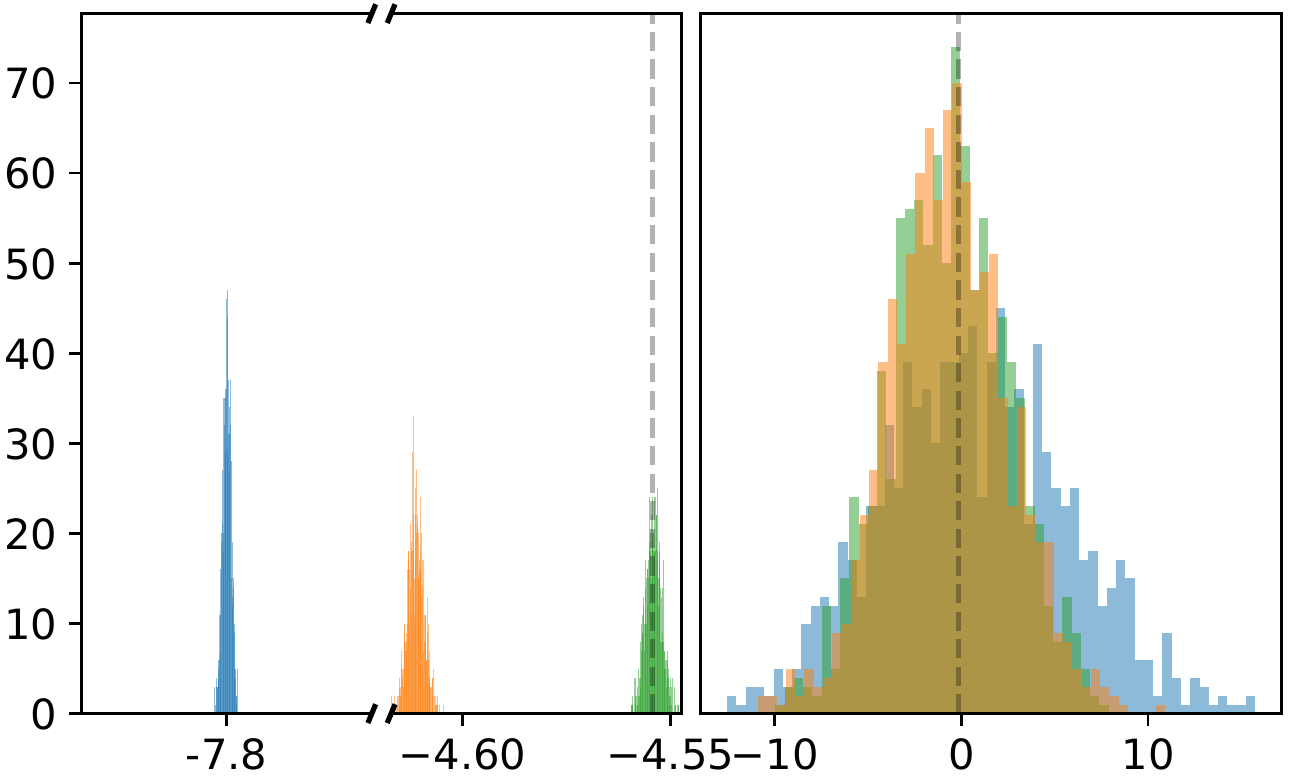}
    \end{subfigure}%
    \vfill
    \begin{subfigure}[b]{0.706\textwidth}
        \centering
        \includegraphics[width=\textwidth]{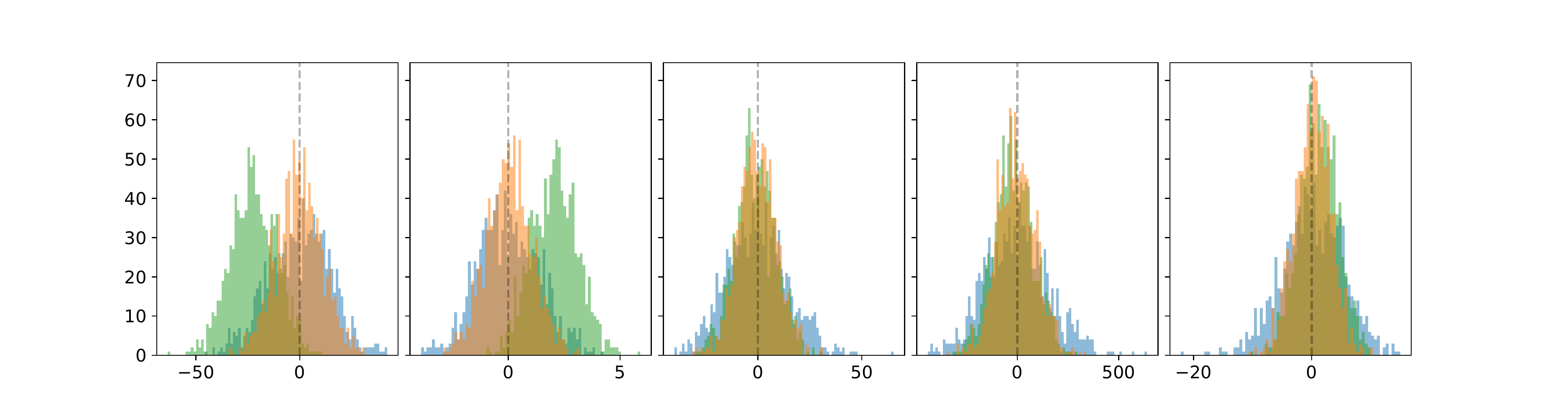}
    \end{subfigure}%
    \hfill
    \begin{subfigure}[b]{0.294\textwidth}
        \centering
        \includegraphics[width=\textwidth]{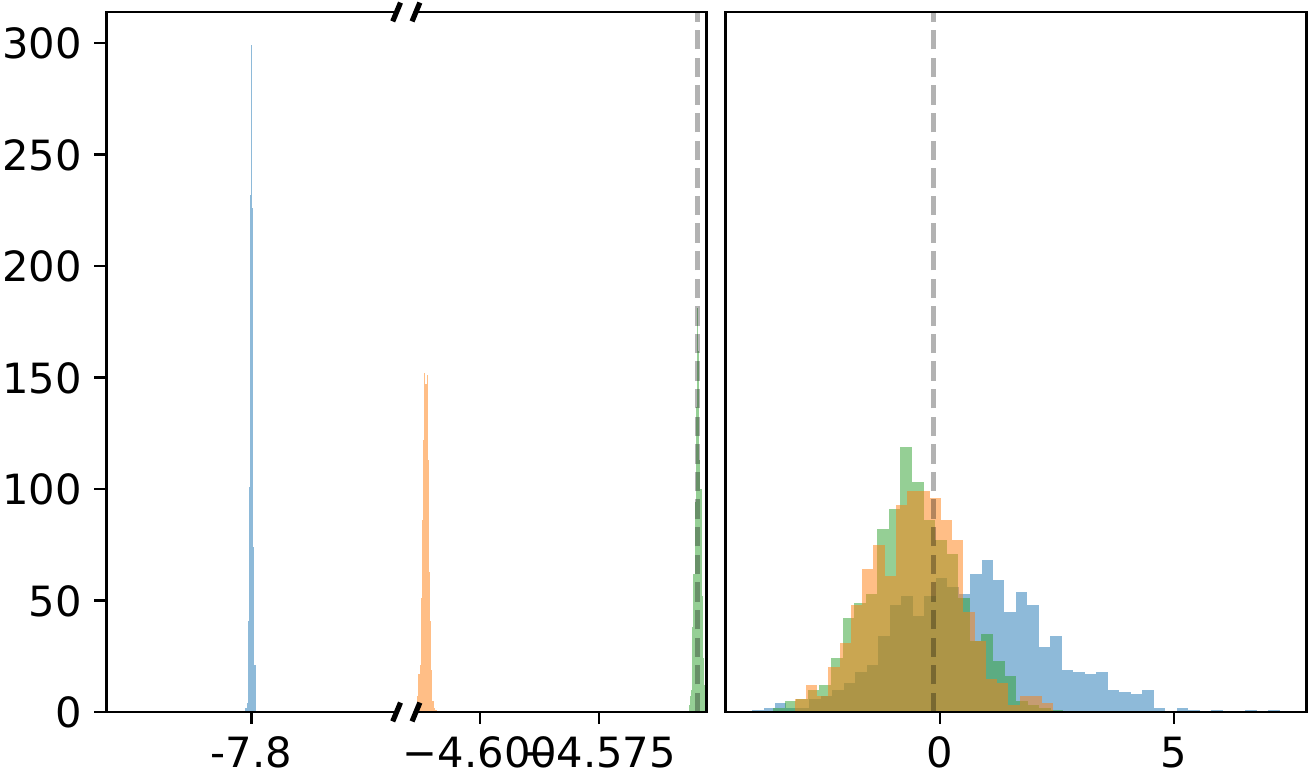}
    \end{subfigure}%
    \vfill
    \begin{subfigure}[b]{0.705\textwidth}
        \centering
        \includegraphics[width=\textwidth]{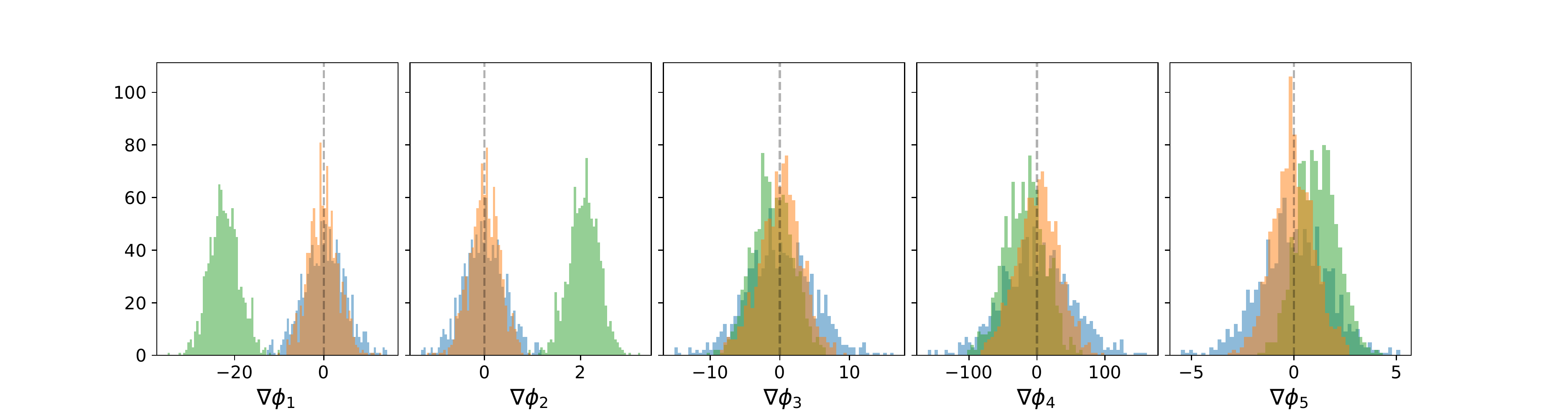}
    \end{subfigure}%
    \hfill
    \begin{subfigure}[b]{0.295\textwidth}
        \centering
        \includegraphics[width=\textwidth]{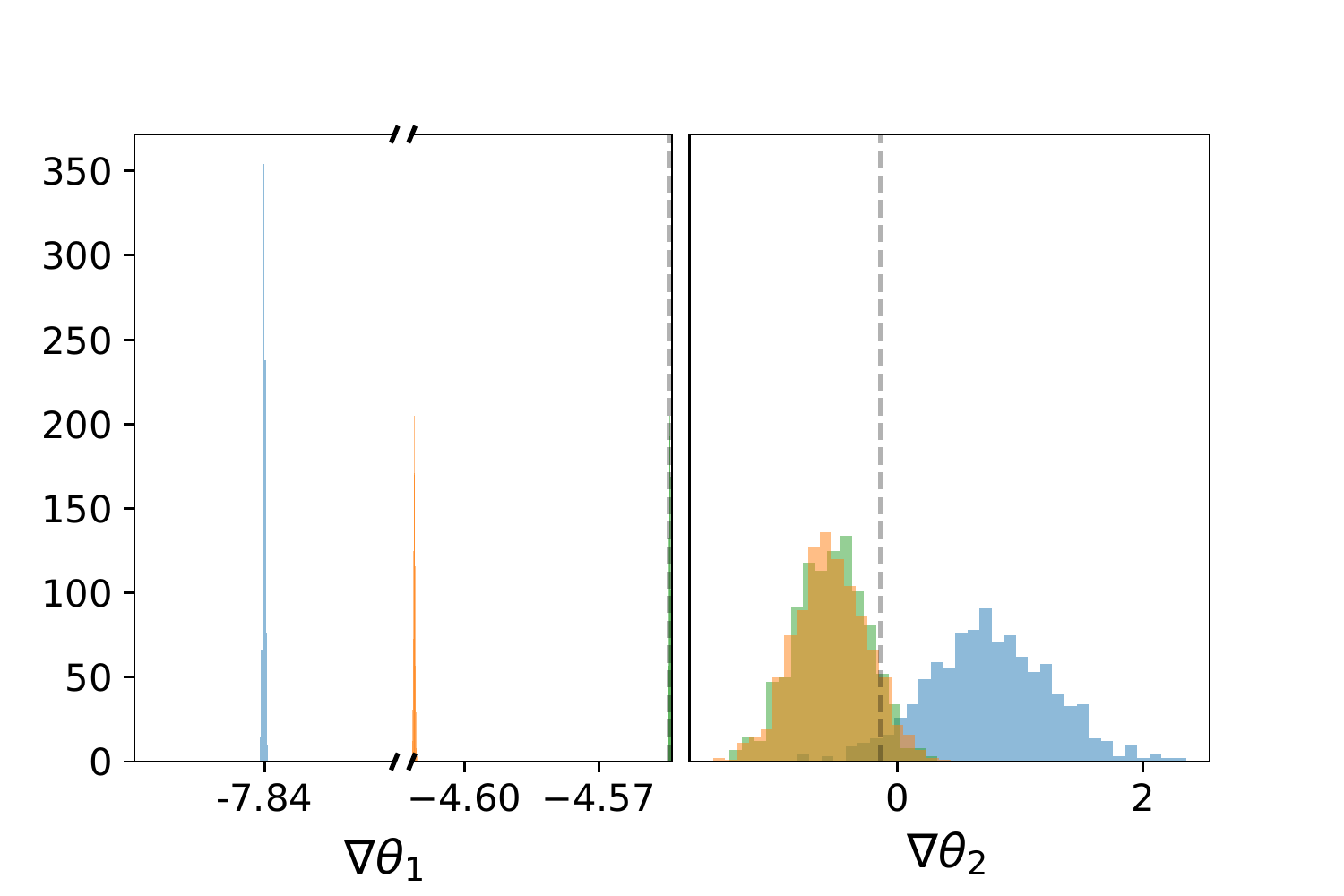}
    \end{subfigure}%
    \vspace{-2mm}
    \caption{Gradient estimates of IWAE, AESMC, MCFO with respect to generative and proposal parameters at their optima with smaller variance of observation model, $\Sigma_R$, under different $K = 10, 100, 1000, 10000, 100000$ from top to bottom respectively. }
    \label{fig:gradient_phi_optimum_2}\vspace{-2mm}
\end{figure}\vspace{-2mm}
\vspace{-2mm}
\begin{figure}[!htb]
    \vspace{-2mm}
    \centering
    \begin{subfigure}[b]{0.305\textwidth}
        \centering
        \includegraphics[width=\textwidth]{figures/gradient_estimate/nonopt_theta_opt_phi/0522_082927/legend.pdf}
    \end{subfigure}%
    \vfill
    \begin{subfigure}[u]{0.305\textwidth}
        \centering
        \includegraphics[width=\textwidth]{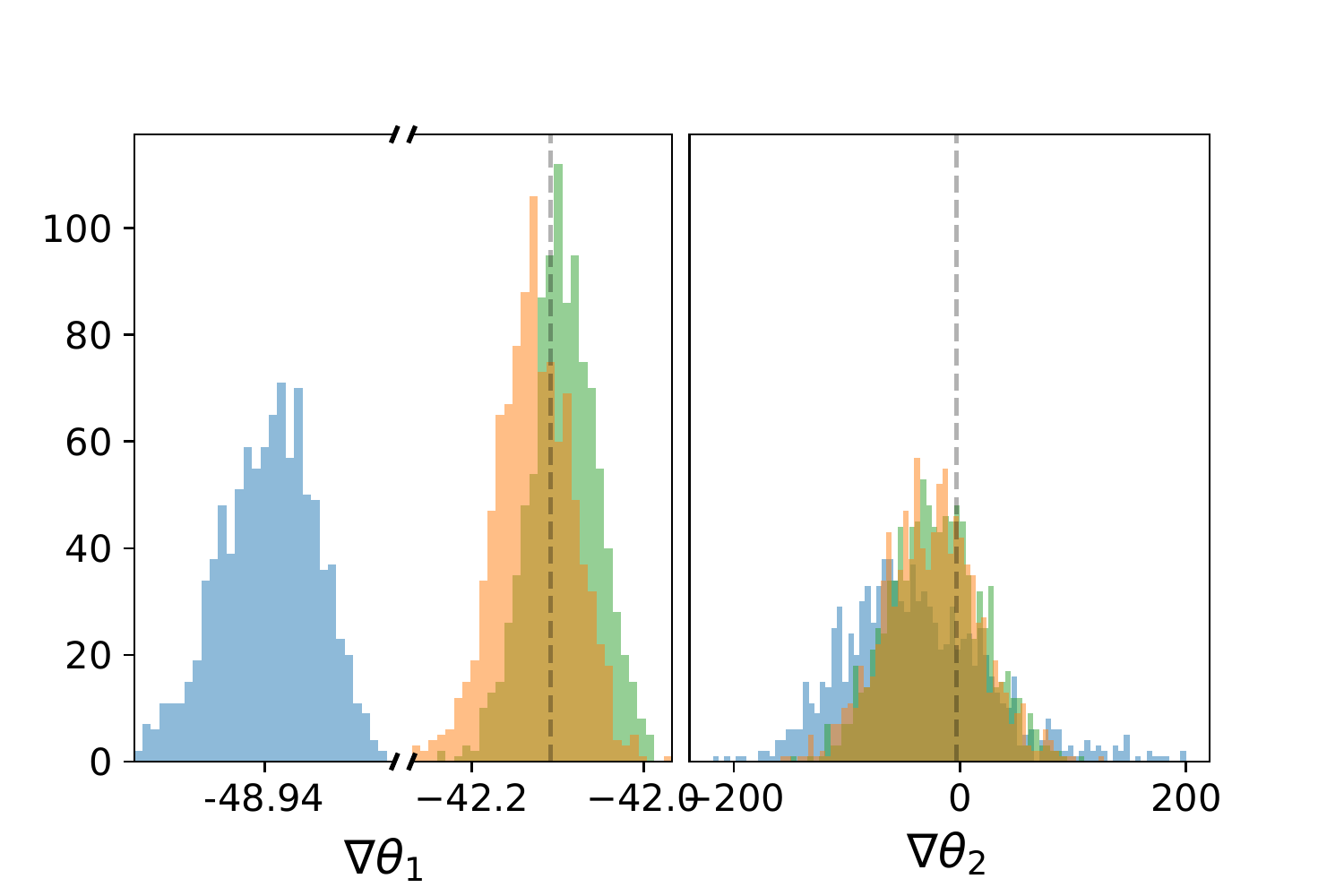}
        \vspace{-7mm}
        \caption{K=10}
    \end{subfigure}%
    \medskip
    \begin{subfigure}[u]{0.30\textwidth}
        \centering
        \includegraphics[width=\textwidth]{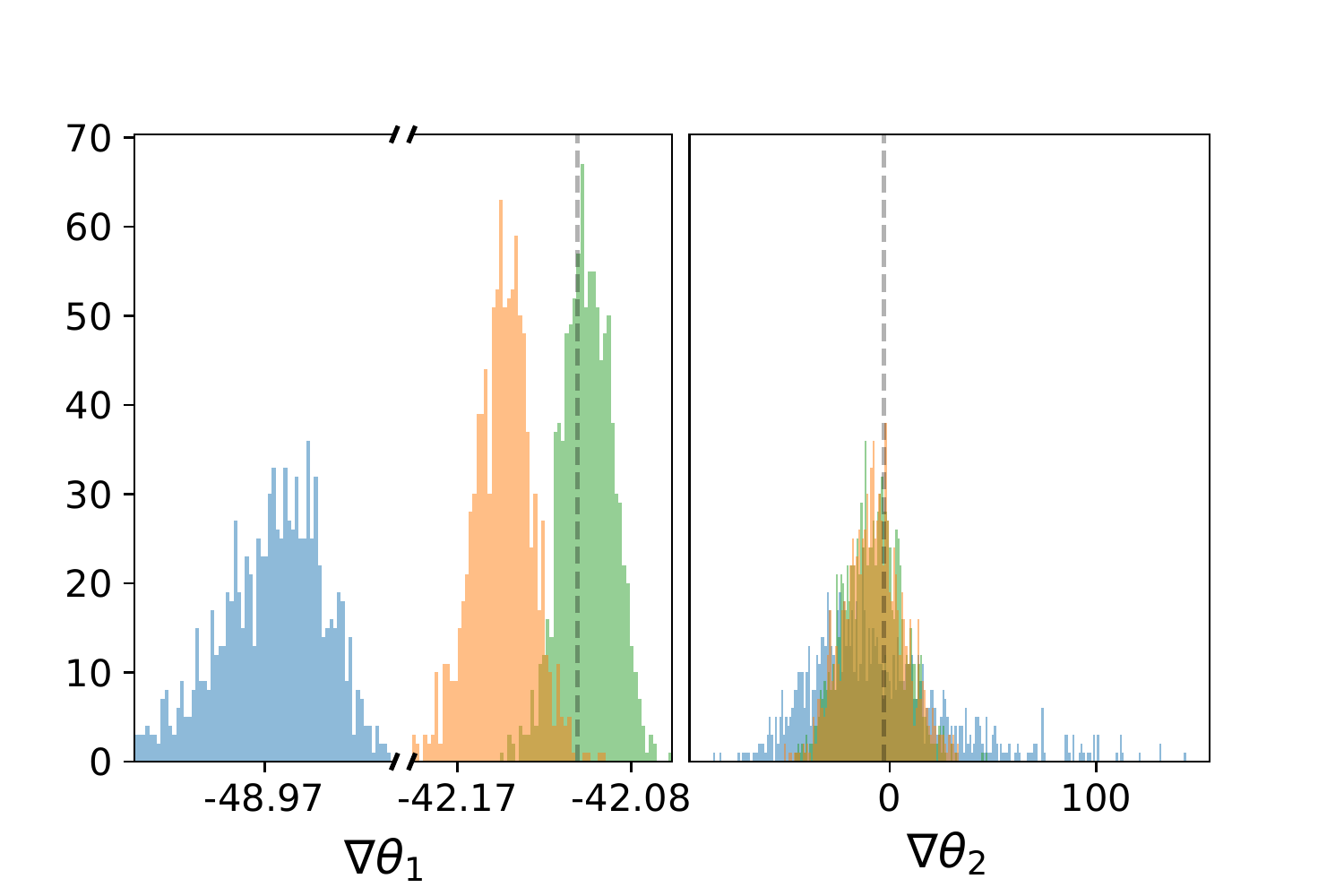}
        \vspace{-7mm}
        \caption{K=100}
    \end{subfigure}%
    \medskip
    \begin{subfigure}[u]{0.30\textwidth}
        \centering
        \includegraphics[width=\textwidth]{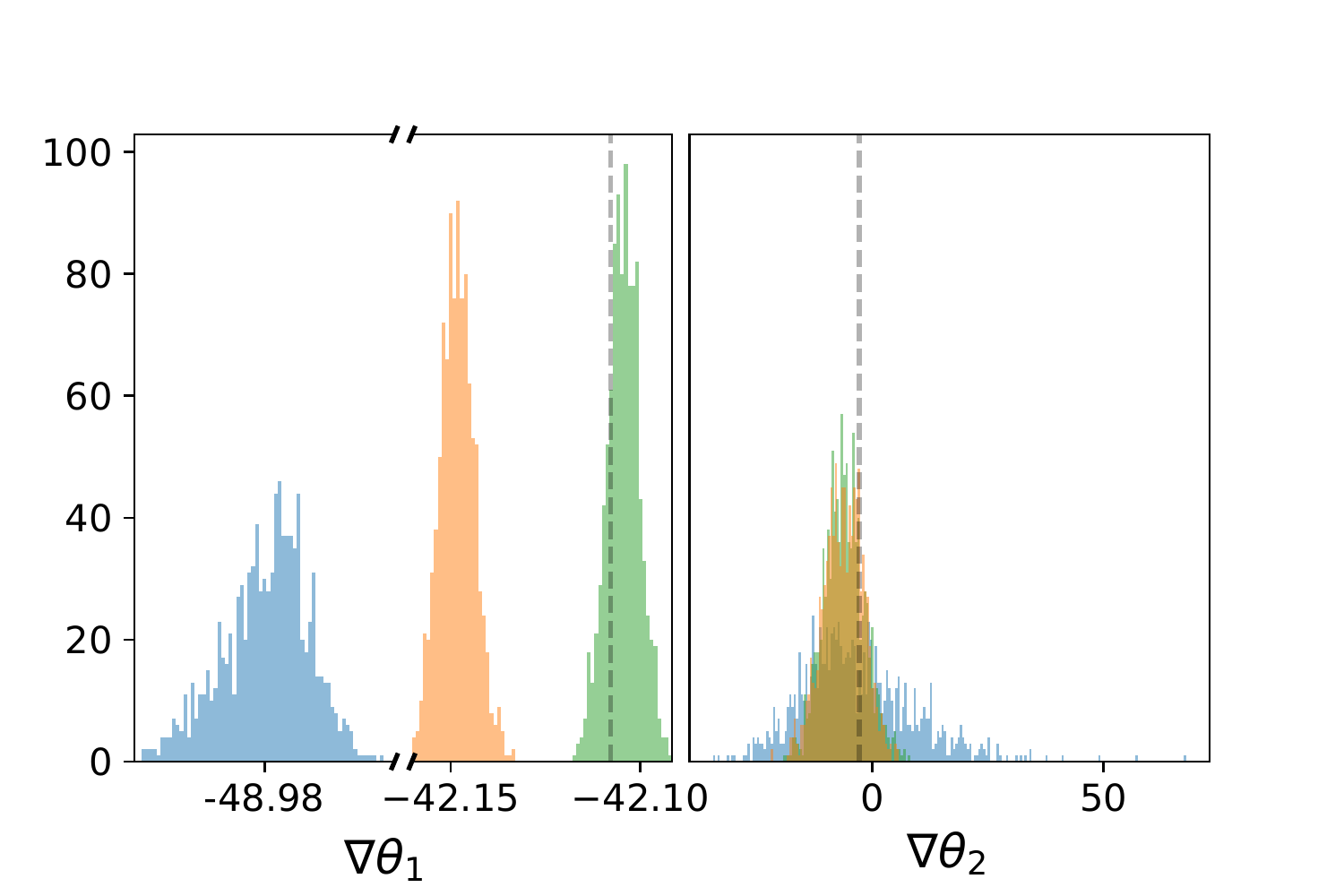}
        \vspace{-7mm}
        \caption{K=1000}
    \end{subfigure}%
    \vspace{-5mm}
    \vfill
    \begin{subfigure}[u]{0.315\textwidth}
        \centering
        \includegraphics[width=\textwidth]{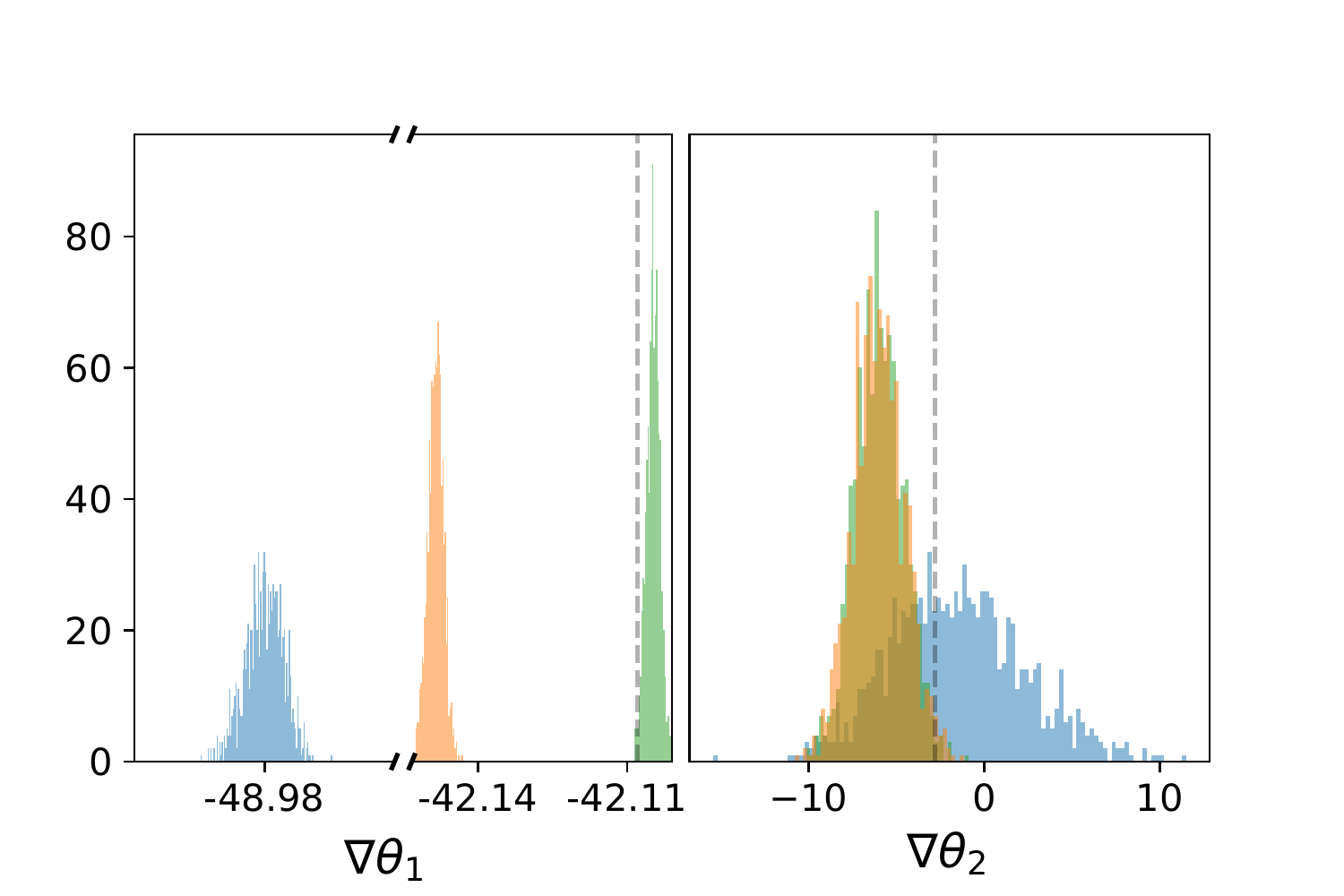}
        \vspace{-7mm}
        \caption{K=10000}
    \end{subfigure}%
    \medskip
    \centering
    \begin{subfigure}[u]{0.335\textwidth}
        \centering
        \includegraphics[width=\textwidth]{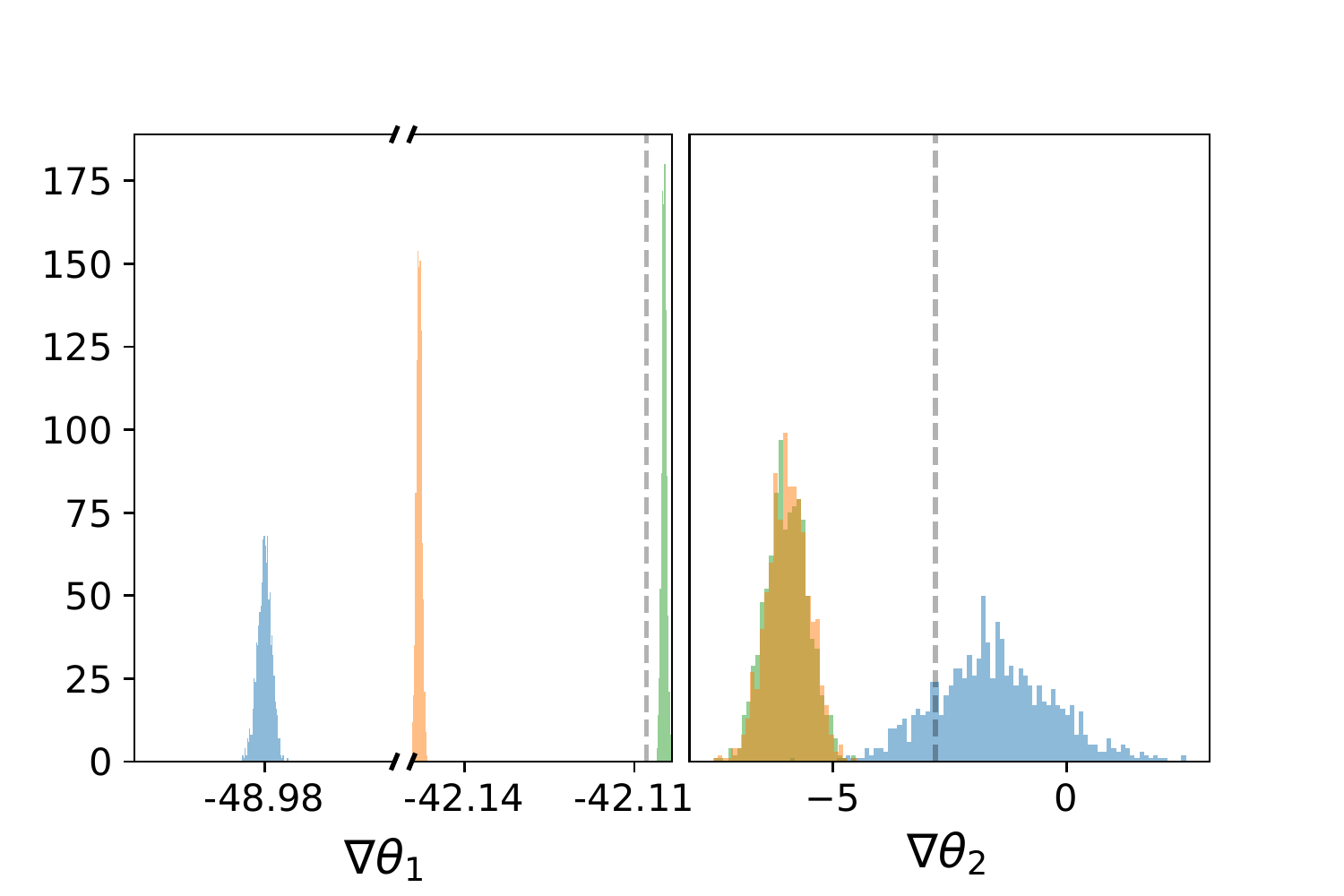}
        \vspace{-7mm}
        \caption{K=100000}
    \end{subfigure}%
    \vspace{-2mm}
    \caption{Gradient estimates of IWAE, AESMC, MCFO with respect to generative parameters, $\boldtheta$ at non-optima with different numbers of samples $K$ as 10, 100, 1000, 10000, 100000 respectively. }
    \label{fig:gradient_theta_nonoptimum}\vspace{-3mm}
\end{figure}

\subsection{Experiment on LGSSM}

\subsubsection{Gradient Estimate on LGSSM}
\label{sec:appendix_gradient_estimate_ex}
Figure \ref{fig:gradient_phi_optimum_more} shows gradient estimates with the numbers of samples $K=\{10000, 100000 \}$ at the same optima as Figure 1 when LGSSM parameters are set as: $\theta_1=0.9$, $\theta_2=10$, $\mu_0 = 0.5$, $\sigma_0 = 1.0$, $\Sigma_Q = 1.0 $, $\Sigma_R = 1.0$. The bias in the estimates of $\triangledown \boldphi$ by AESMC is distinguished and does not reduce with increasing $K$, while that by IWAE and MCFO is close to the true gradient. The variance of estimates on both $\boldtheta$ and $\boldphi$ decreases at the order of $\mathcal{O}(1/K)$ (the standard deviation decreases at $\mathcal{O}(1/\sqrt{K})$ in Table \ref{tab:gradient_stats_phi}) while the bias reduces substantially only when $K$ is small i.e. from 10 to 100. 

It is found that the bias and variance reduction of gradient estimates are not uniform in the parameter space. For instance, Figure \ref{fig:gradient_phi_optimum_2} shows the gradient estimates when observation variance $\Sigma_R$ is set as 0.01, which is lower than Figure 1 and Figure \ref{fig:gradient_phi_optimum_more}. The variance of gradient estimates in Figure \ref{fig:gradient_phi_optimum_2} have higher variance on $\triangledown \boldphi$ and $\triangledown \theta_2$ but lower on $\triangledown \theta_1$ for all three methods. Although the bias on $\triangledown \boldphi$ is less distinguishable than in Figure 1, it cannot be removed with increasing the number of samples.

Except for the gradient estimates at the parameters' optima, Figure \ref{fig:gradient_theta_nonoptimum} illustrates the estimates of gradient $\triangledown \boldtheta$ at non-optima, which of the generative parameters $\boldtheta$ are set to $\theta_1 = 0.6, \theta_2 = 8$. To compare to the analytic gradients, the local optima of proposal parameters are used for all surrogate objectives. The estimates by IWAE have higher variance and bias in $\theta_1$, but lower bias in $\theta_2$. Estimates by AESMC and MCFO have similar bias and variance on $\theta_2$ and $\theta_1$.

\begin{figure*}[!htb]
    \centering
    \begin{subfigure}[b]{0.33\textwidth}
        \centering
        \includegraphics[width=\textwidth]{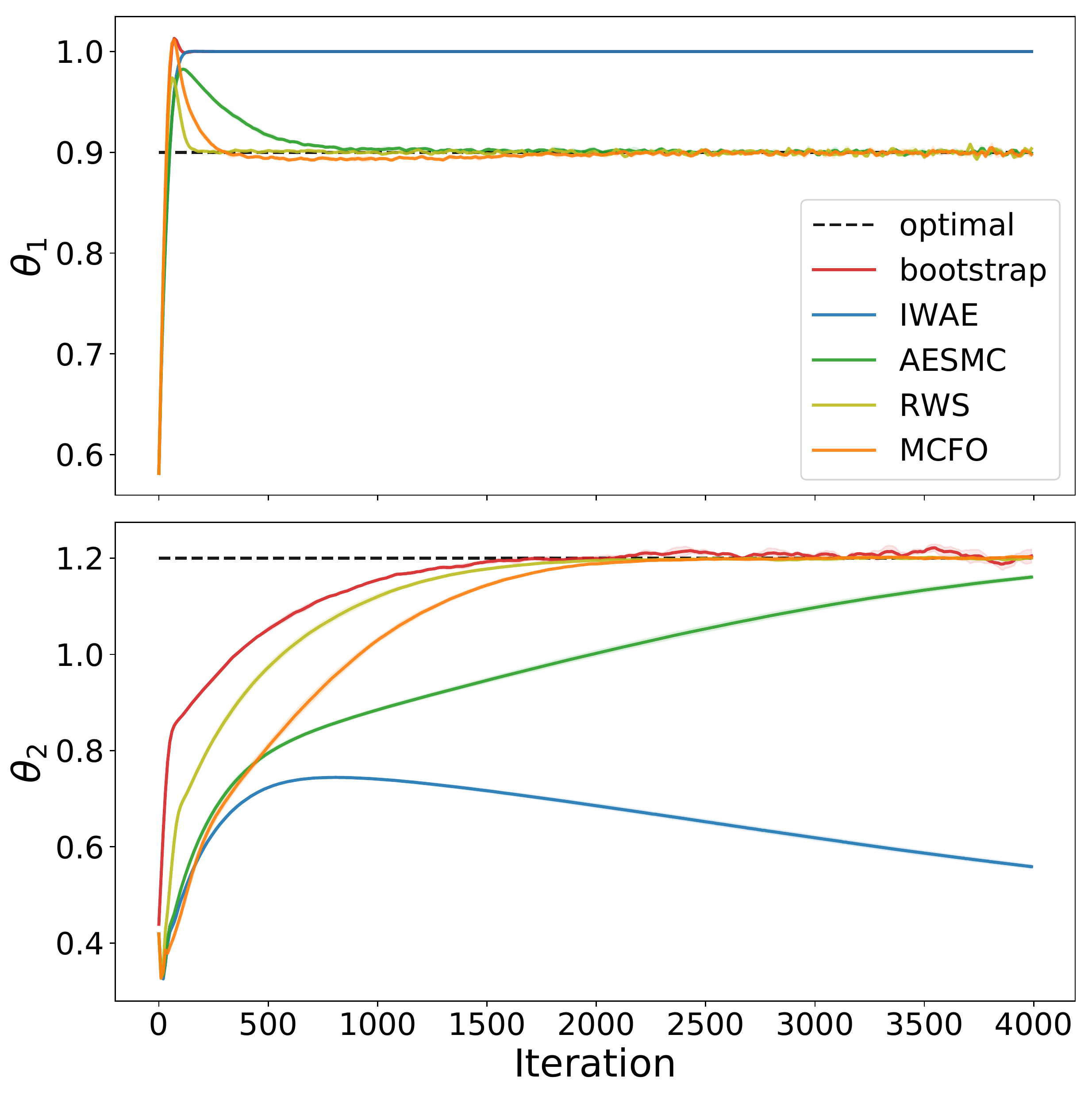}
    \end{subfigure}%
    \hfill
    \begin{subfigure}[b]{0.33\textwidth}
        \centering
        \includegraphics[width=\textwidth]{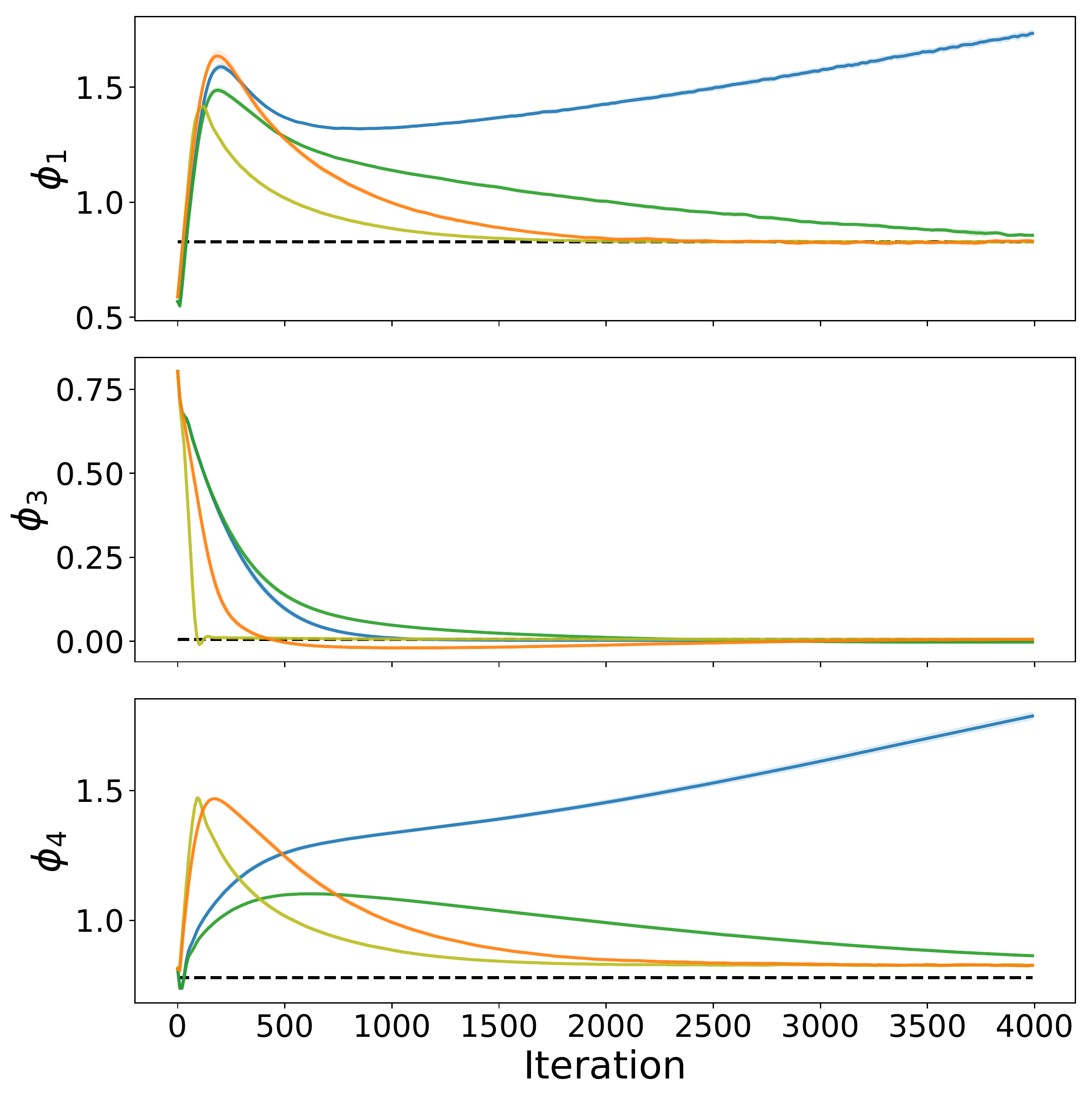}
    \end{subfigure}%
    \begin{subfigure}[b]{0.33\textwidth}
        \centering
        \includegraphics[width=\textwidth]{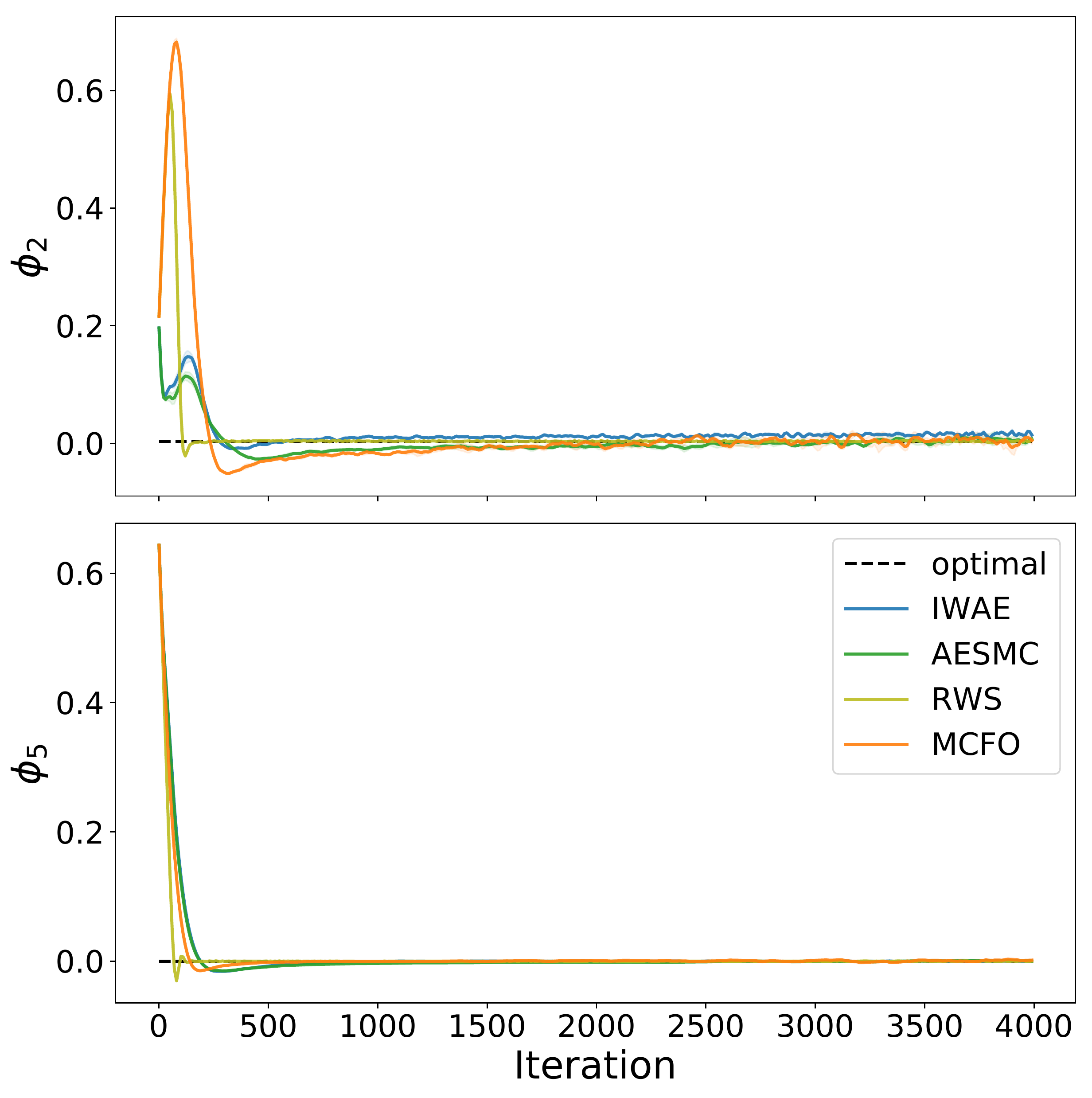}
    \end{subfigure}%
    \caption{Parameters of generative and proposal, $\boldtheta$ and $\boldphi$ during the training as Figure 2.}\vspace{-3mm}
    \label{fig:LGSSM_proposal_bias}
\end{figure*}

\begin{figure}[!htb]
    \centering
    \begin{subfigure}[b]{0.35\textwidth}
        \centering
        \includegraphics[width=\textwidth]{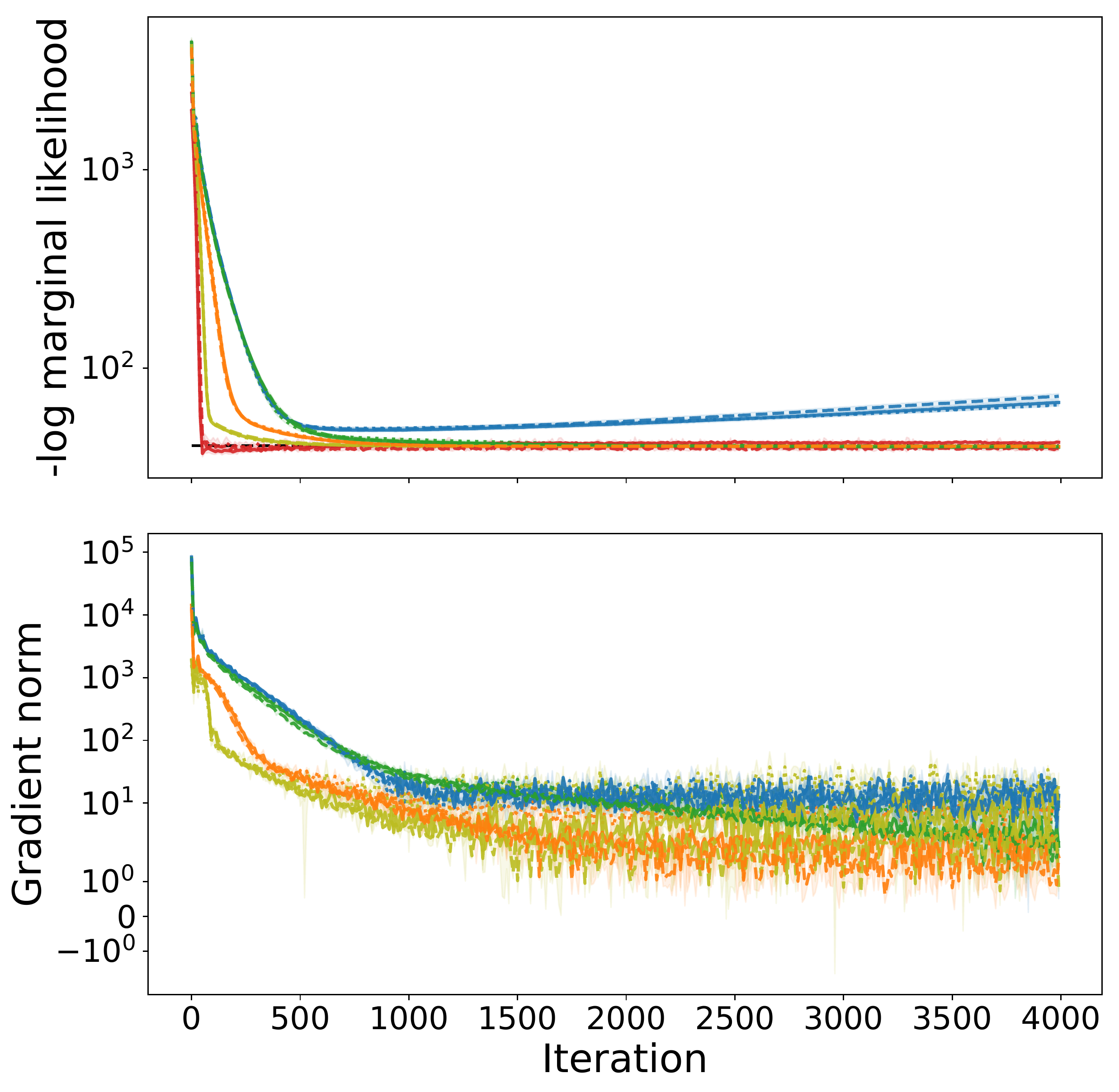}
    \end{subfigure}%
    \medskip
    \begin{subfigure}[b]{0.35\textwidth}
        \centering
        \includegraphics[width=\textwidth]{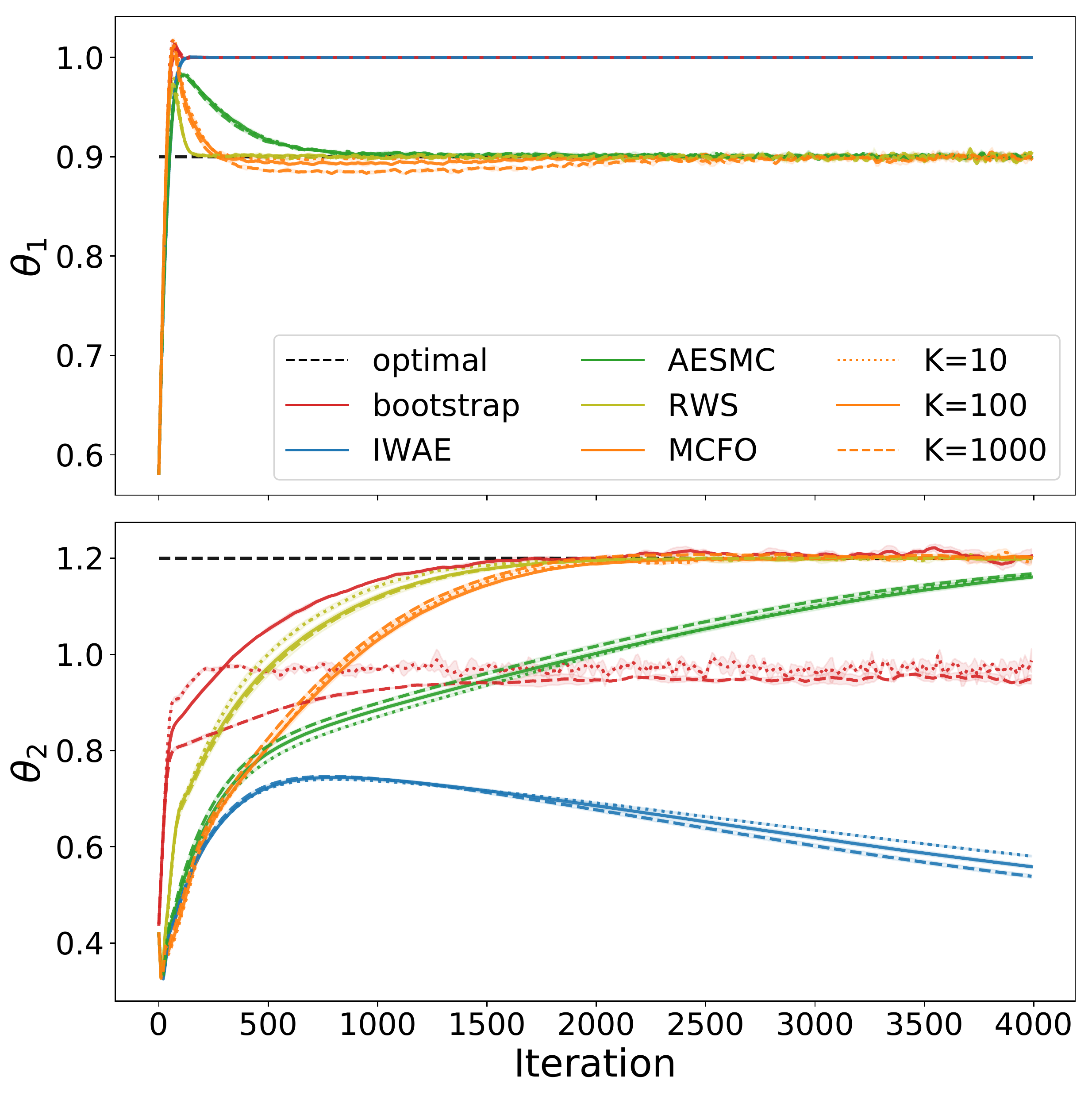}
    \end{subfigure}%
    \vfill
    \begin{subfigure}[b]{0.35\textwidth}
        \centering
        \includegraphics[width=\textwidth]{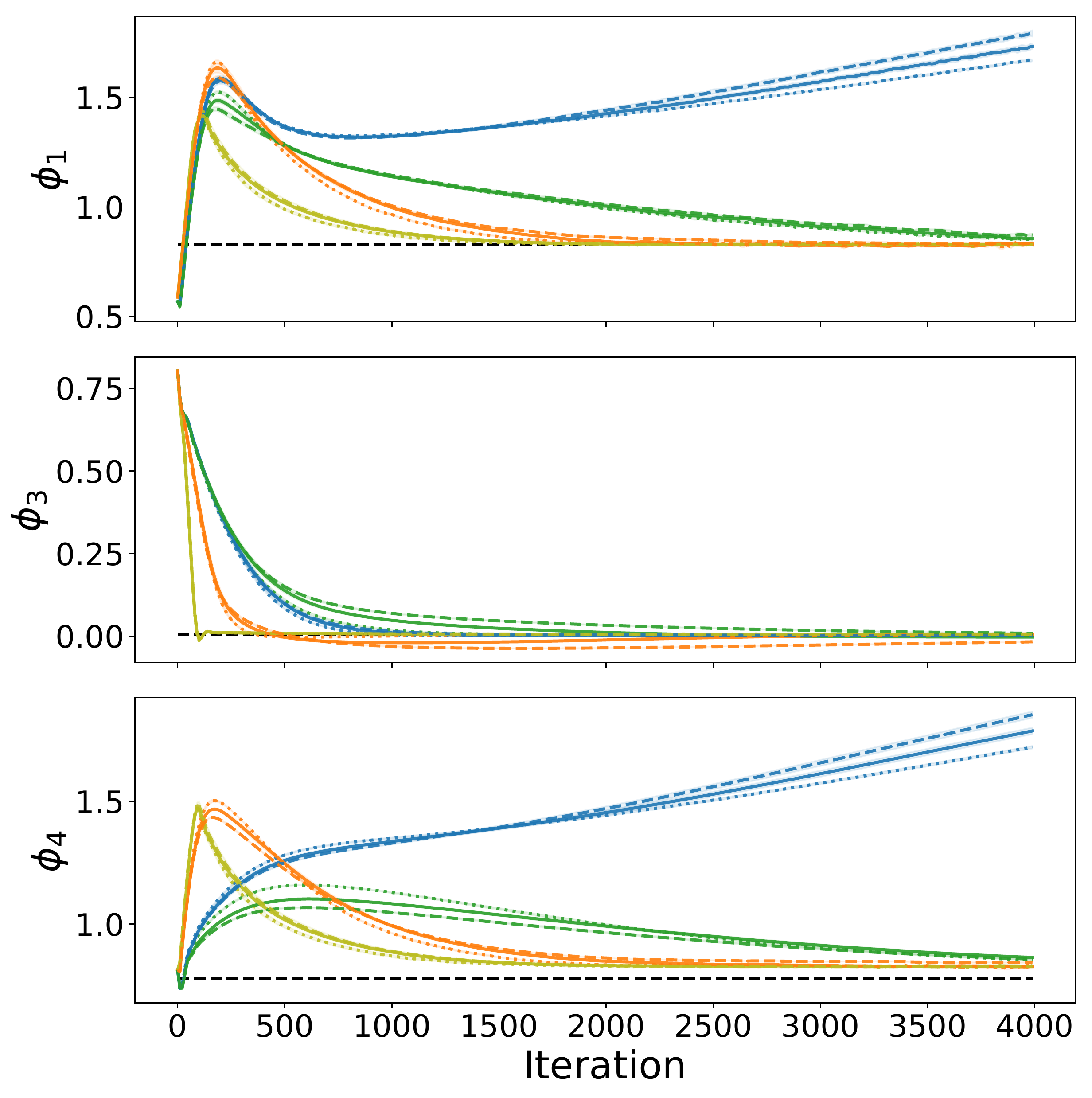}
    \end{subfigure}%
    \medskip
    \begin{subfigure}[b]{0.35\textwidth}
        \centering
        \includegraphics[width=\textwidth]{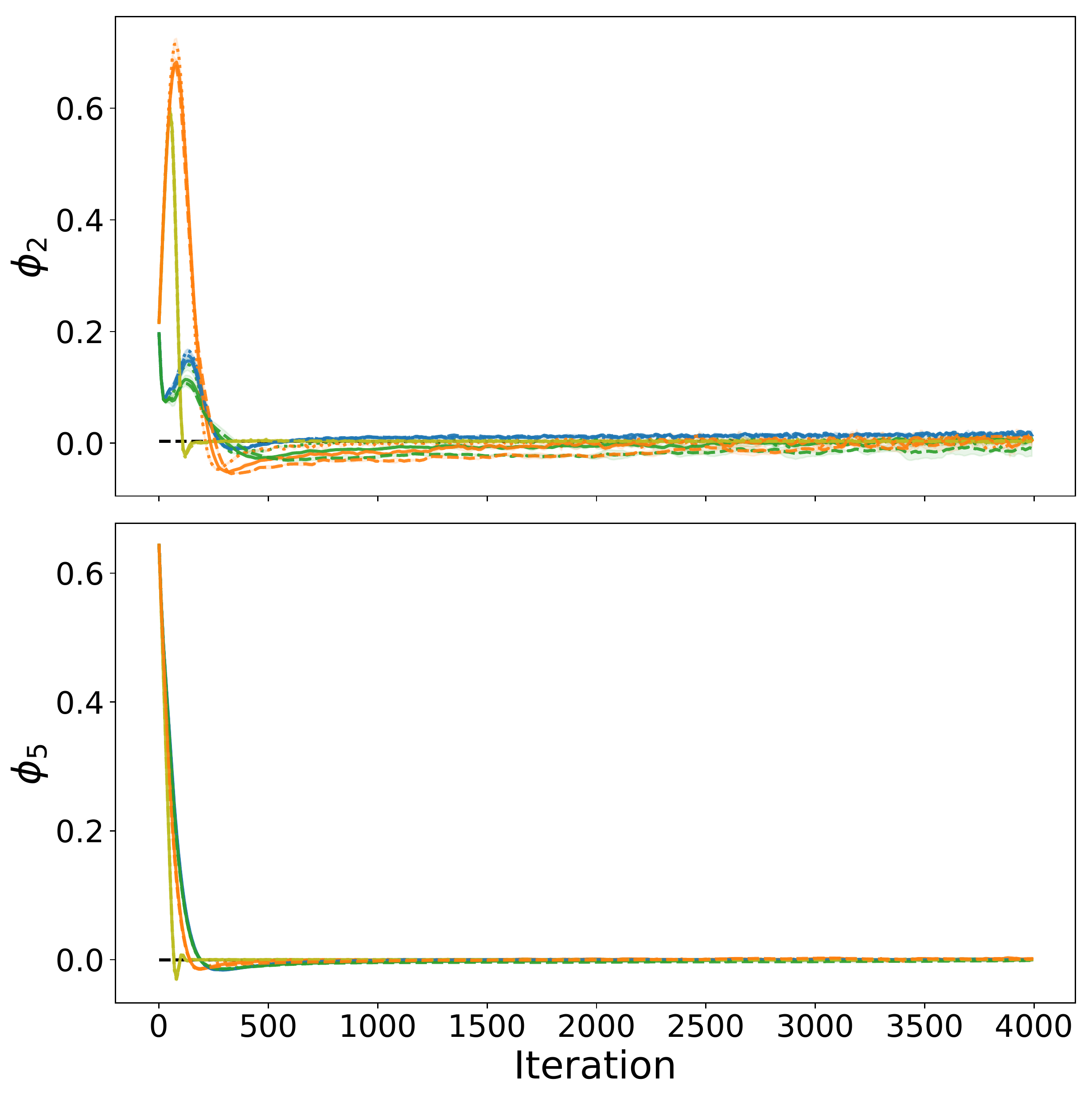}
    \end{subfigure}%
    \caption{Learning of generative and proposal parameters for LGSSM with different numbers of samples. \textit{Top left}: Negative marginal log-likelihoods on test data and gradient norms of all parameters. \textit{Top right}: Generative model parameters $\theta_1$ and $\theta_2$. \textit{Bottom left}: 3 proposal weights $\phi_1$, $\phi_3$ and $\phi_4$. \textit{Bottom right}: 2 bias parameters $\phi_2$ and $\phi_5$. Lines indicate the average of 3 random seed trainings and shaded areas for standard deviation.}\vspace{-3mm}
    \label{fig:learning_ssm_K_more}
\end{figure}

\begin{figure}[!htb]
    \centering
    \begin{subfigure}[b]{0.35\textwidth}
        \centering
        \includegraphics[width=\textwidth]{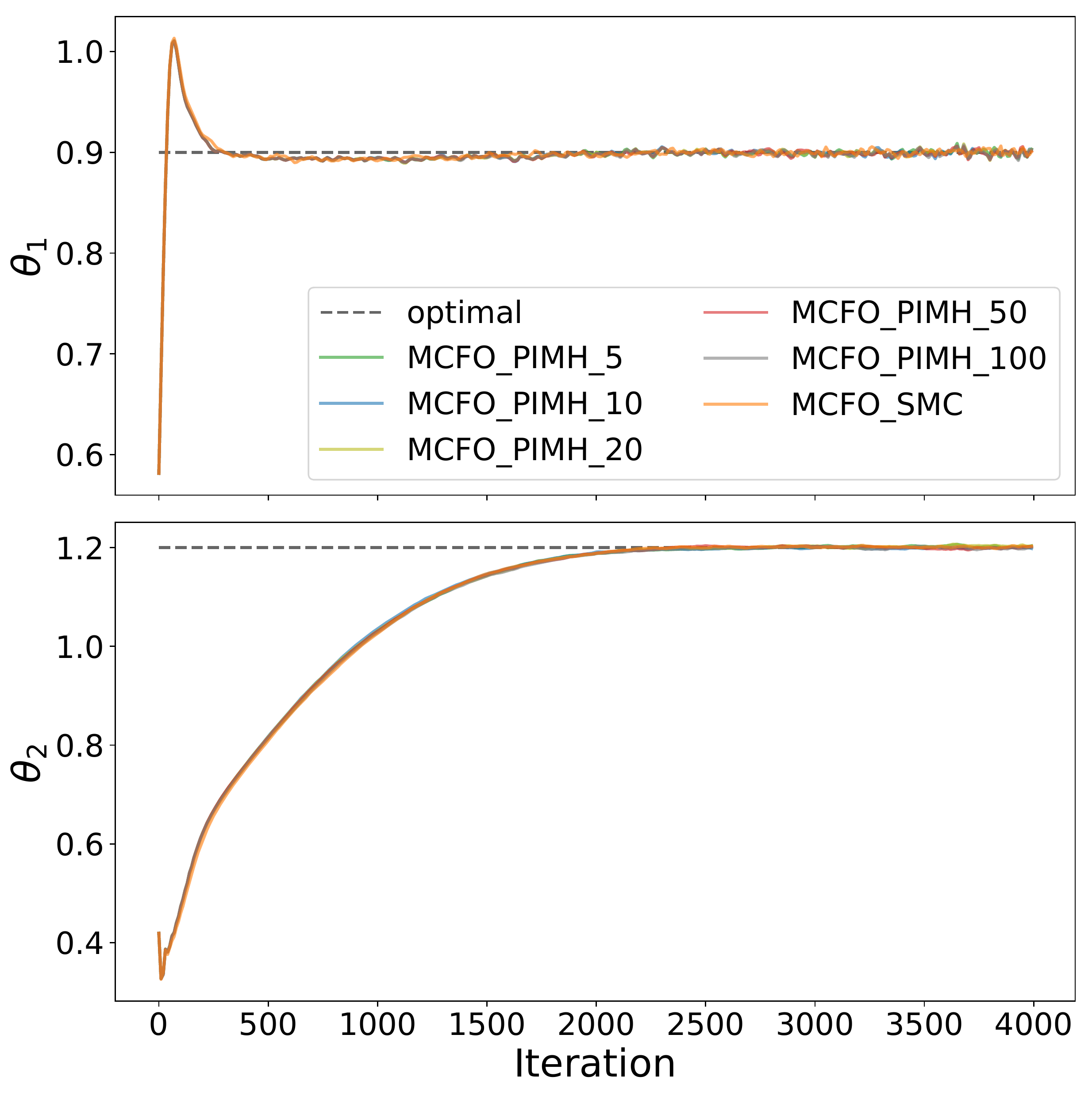}
    \end{subfigure}%
    \medskip
    \begin{subfigure}[b]{0.35\textwidth}
        \centering
        \includegraphics[width=\textwidth]{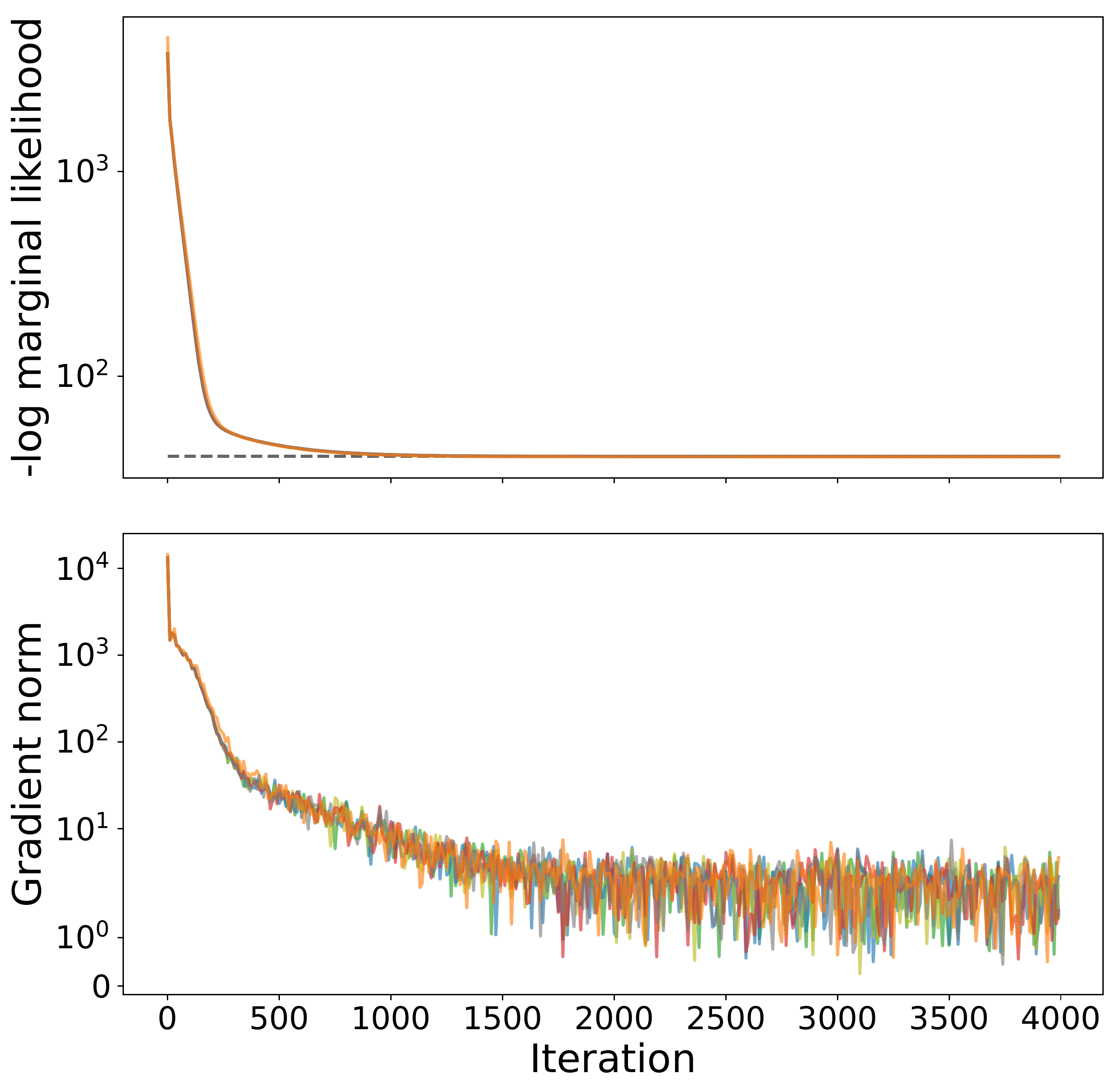}
    \end{subfigure}%
    \vfill
    \begin{subfigure}[b]{0.35\textwidth}
        \centering
        \includegraphics[width=\textwidth]{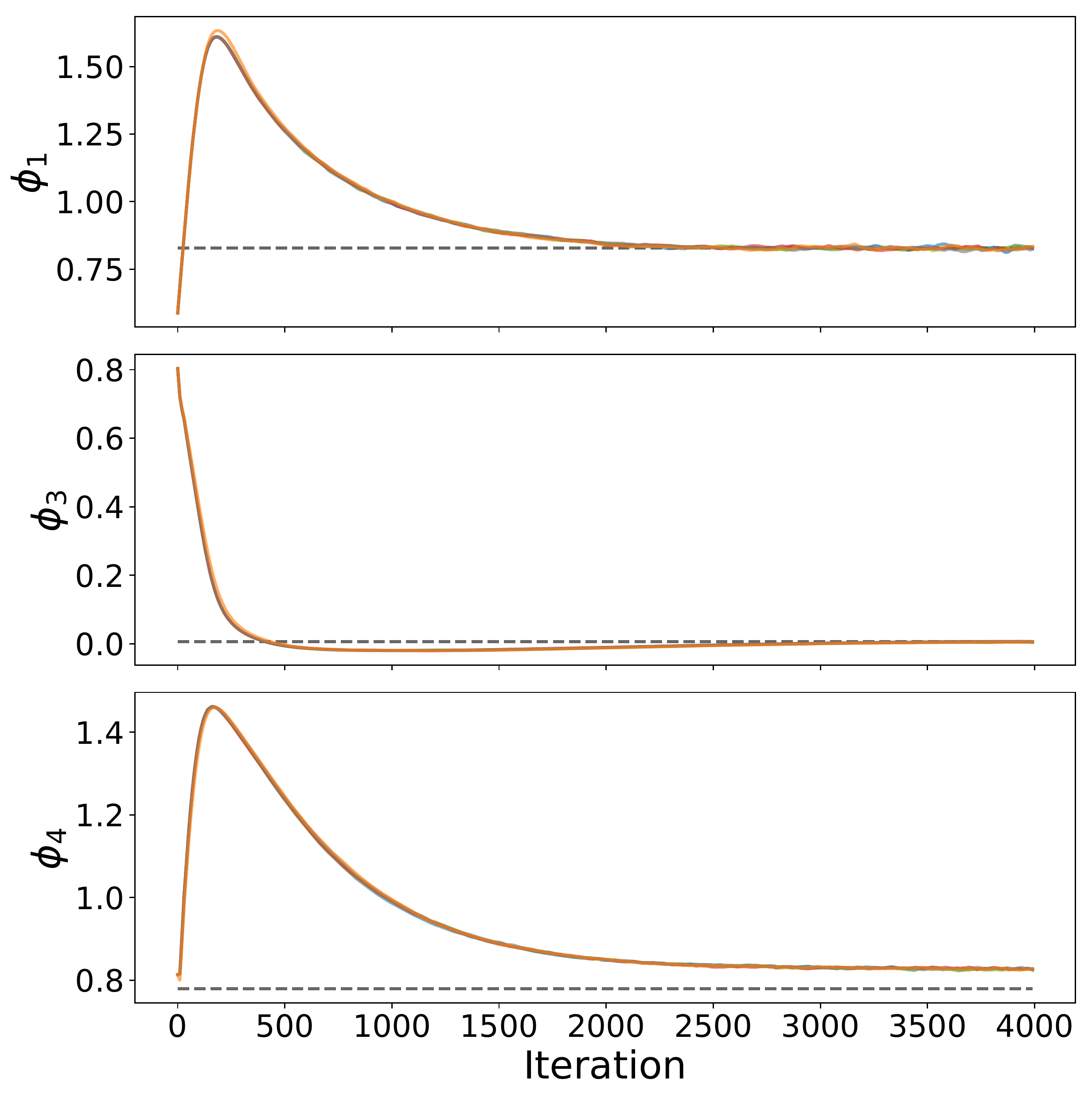}
    \end{subfigure}%
    \medskip
    \begin{subfigure}[b]{0.35\textwidth}
        \centering
        \includegraphics[width=\textwidth]{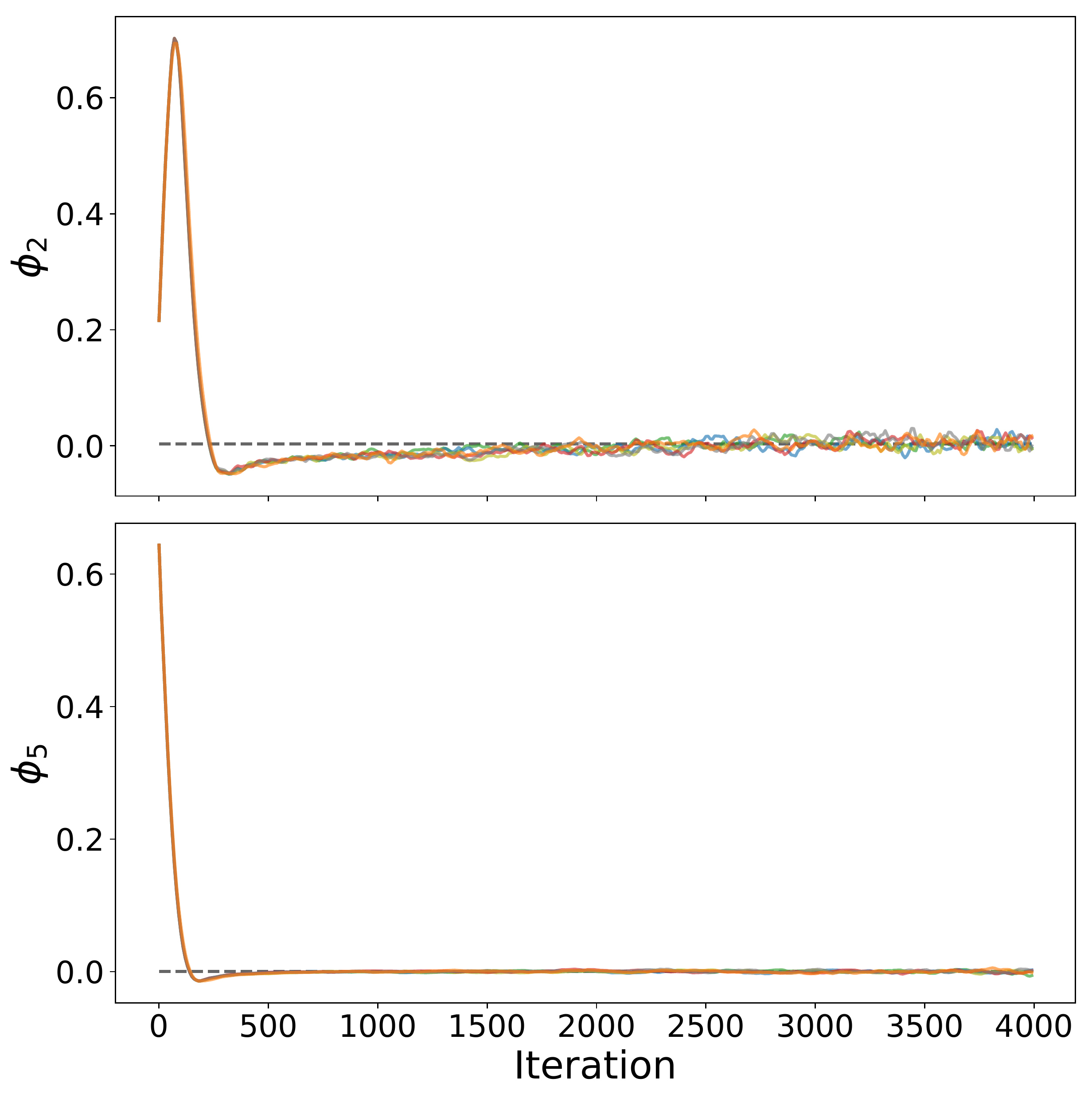}
    \end{subfigure}%
    \caption{\textit{Top left}: Generative model parameters $\theta_1$ and $\theta_2$. \textit{Top right}: Negative marginal log-likelihoods on test data and gradient norms of all parameters. \textit{Bottom left}: 3 proposal weights $\phi_1$, $\phi_3$ and $\phi_4$. \textit{Bottom right}: 2 bias parameters $\phi_2$ and $\phi_5$.  Lines indicate on different samplers and numbers of MH iterations.}\vspace{-3mm}
    \label{fig:learning_pmh}
\end{figure}

\subsubsection{Learning Generative and Proposal Parameters}
\label{sec:appendix_learning_ssm}

The training and test datasets are generated by LGSSM with parameters $\mu_0 = 0.5$, $\sigma_0 = 1$, $\theta_1 = 0.9$, $\theta_2 = 1.2$, $\Sigma_Q = 1.0$, $\Sigma_R = 0.01$. We use \textit{Adam} optimizer with default settings and the learning rate is constant as 0.01 for all methods. The numbers of samples for learning generative and proposal parameters are chosen from 10, 100, 1000. 

Figure \ref{fig:LGSSM_proposal_bias} shows the convergence of parameters, $\boldtheta$ and $\boldphi$, during the same training as Figure 2. MCFO and RWS converge faster than other methods, while IWAE and bootstrap fail to converge to the optimum of parameters. In addition to $K=100$, Figure \ref{fig:learning_ssm_K_more} shows the training when $K=10, 1000$. The convergences of parameters are similar for AESMC, RWS and MCFO with different $K$ while bootstrap and IWAE are more sensitive to $K$. More samples accelerate convergence marginally on $\boldtheta$ while making $\boldphi$ slower to converge. Figure \ref{fig:learning_pmh} shows the comparisons of MCFO-SMC and MCFO-PIMH, using gradient estimates by SMC and PIMH with different numbers of MH iterations. For the simple LGSSM, MCFO-PIMH converges slightly faster that the early overshoot is restricted, compared to MCFO-SMC, but increasing numbers of MH iterations does not make much difference on this simple learning task.

\subsection{Pendulum Video Sequences}
\label{sec:appendix_pendulum}
To evaluate MCFOs on more general state space models, we create a video sequence dataset of a single pendulum system as inspired by \citep{le2017auto}. Different to \citep{le2017auto}, the SSM defined here has more complex latent trajectories than a random walk. The latent state space model of a single pendulum system is defined with angle $\omega$ and angular velocity $\dot{\omega}$ by:
\begin{equation*}
    \begin{aligned}
        & \dot{\omega}_{t+1} = \frac{3}{ml^2} (mg  \frac{l}{2} \sin \varphi_t) \Delta t + \dot{\omega}_t, \\
        & \omega_{t+1} = \dot{\omega}_{t} \Delta t + \omega_t,\\
        & \begin{bmatrix}
            \dot{\omega}_{t+1}\\
            \varphi_{t+1}
        \end{bmatrix}
        = g(\begin{bmatrix}
        \dot{\omega}_t\\
        \omega_t
        \end{bmatrix}) + \mathcal{N}(0, 0.1^2),
    \end{aligned}
\end{equation*}
where $\Delta t = 0.05$, $g=10$, $m=1$, $l=1$ is specified in \texttt{gym} to create video sequences by projecting the angles of pendulum, $\omega$ to $32\times32$ grayscale images as the Bernoulli distribution of binary observations. 

\begin{figure}
    \centering
    \begin{subfigure}[t]{\textwidth}
        \includegraphics[width=0.32\textwidth]{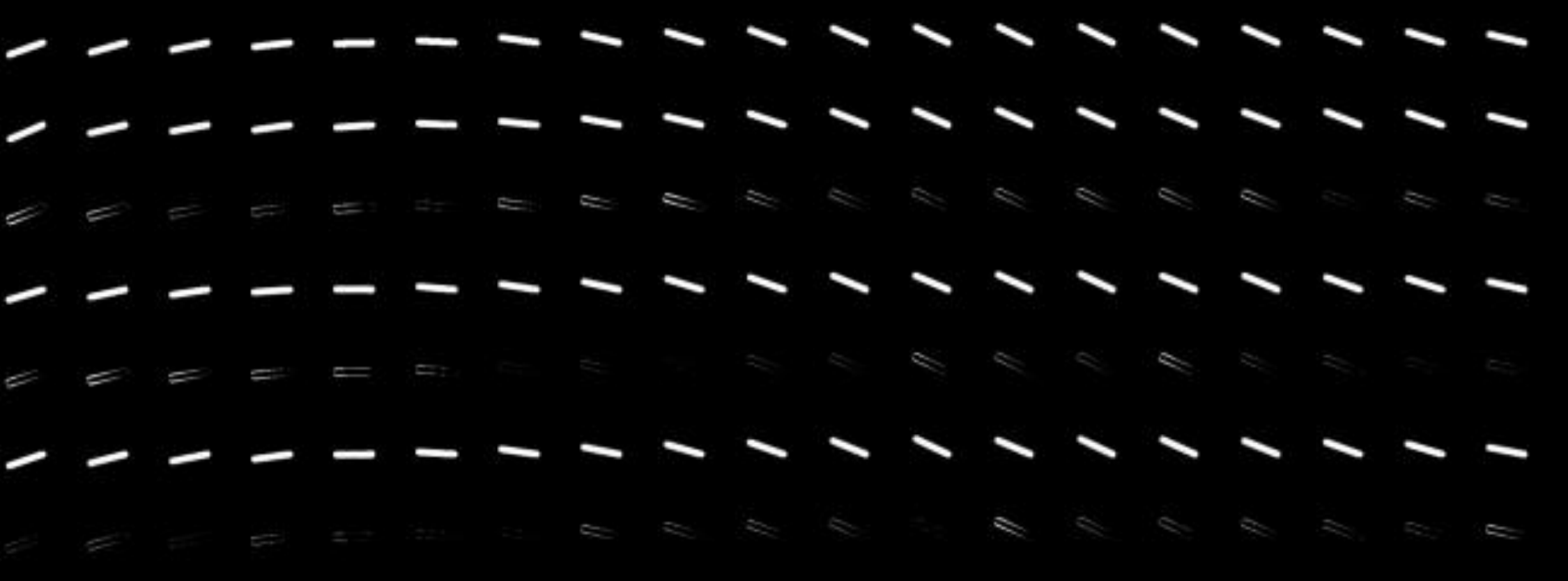}
        \includegraphics[width=0.32\textwidth]{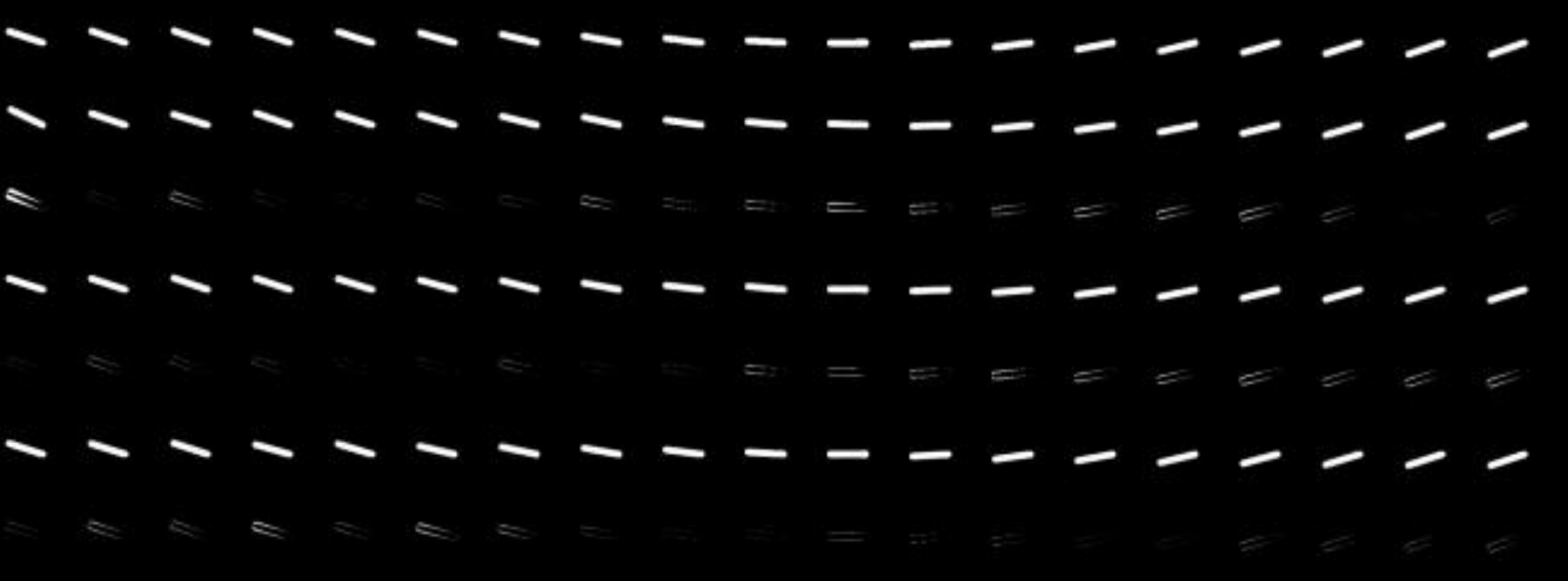}
        \includegraphics[width=0.32\textwidth]{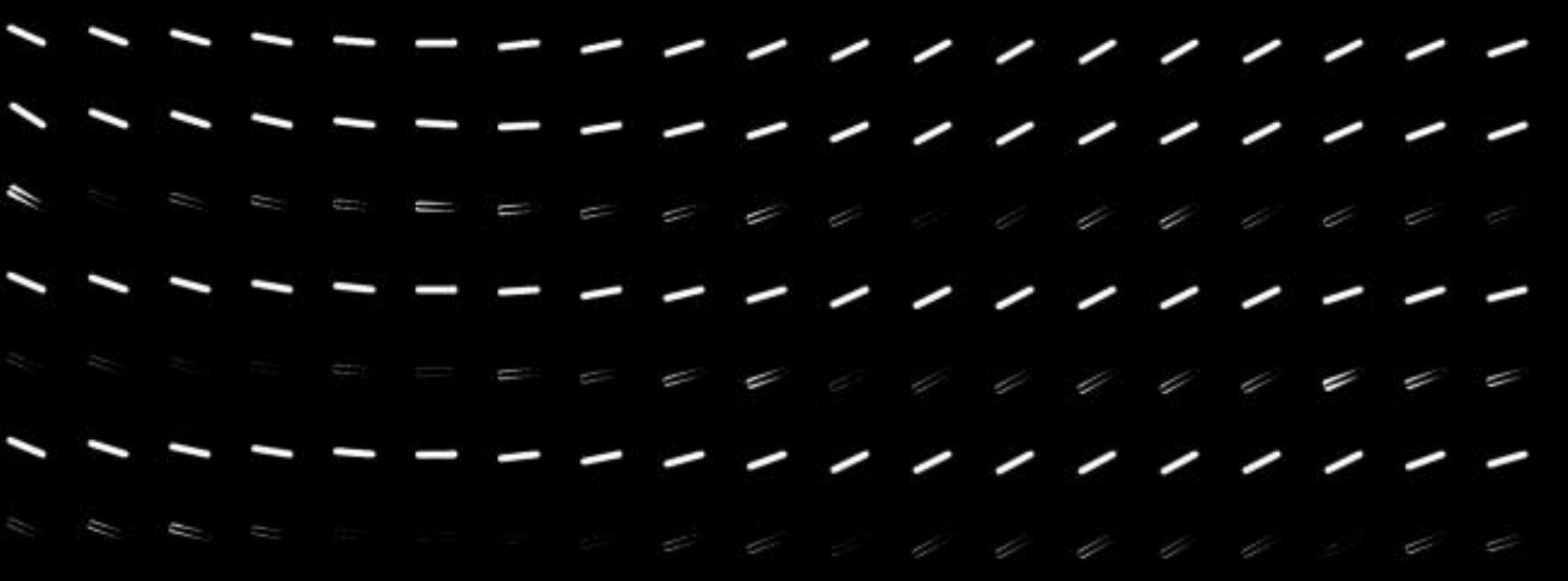}
    \end{subfigure}
    \vfill
    \begin{subfigure}[t]{\textwidth}
        \includegraphics[width=0.32\textwidth]{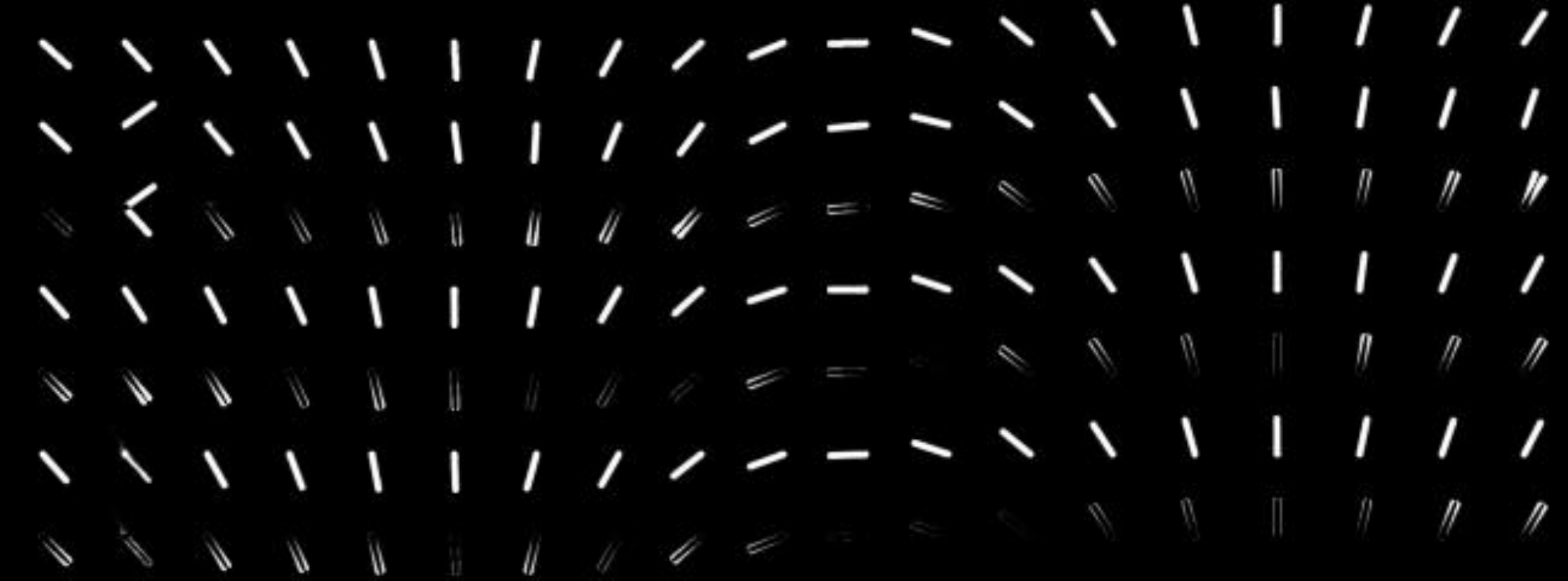}
        \includegraphics[width=0.32\textwidth]{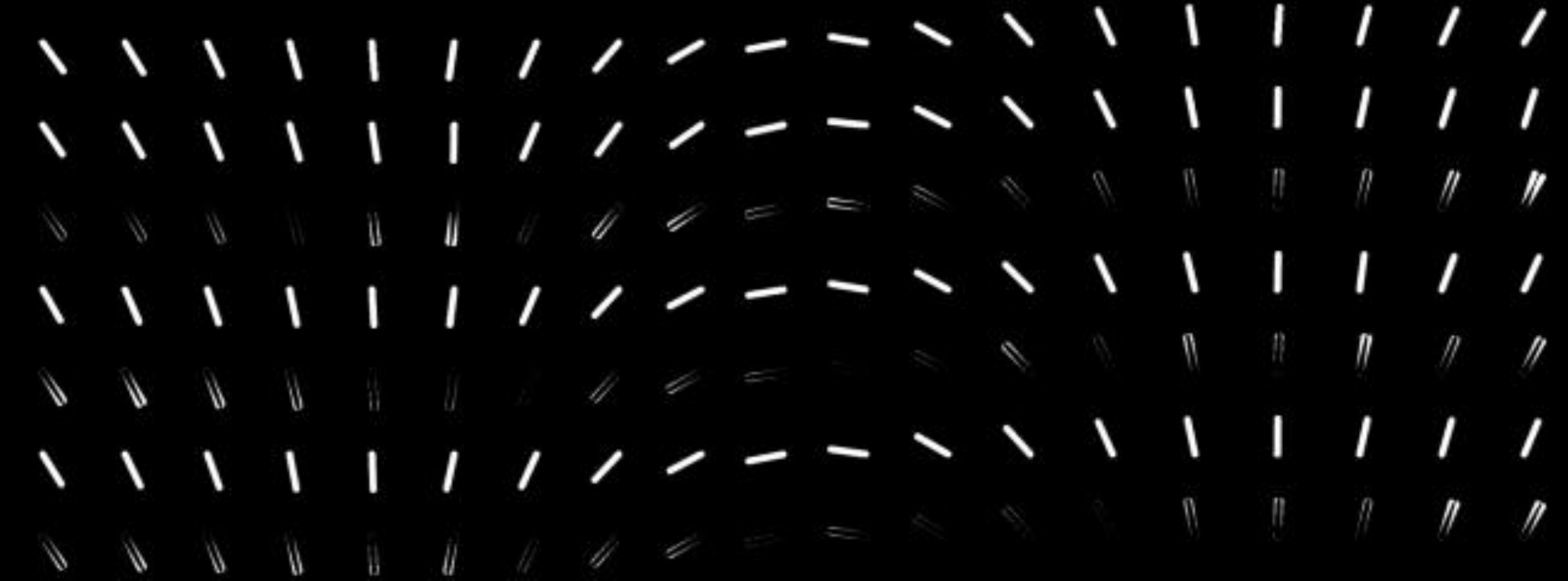}
        \includegraphics[width=0.32\textwidth]{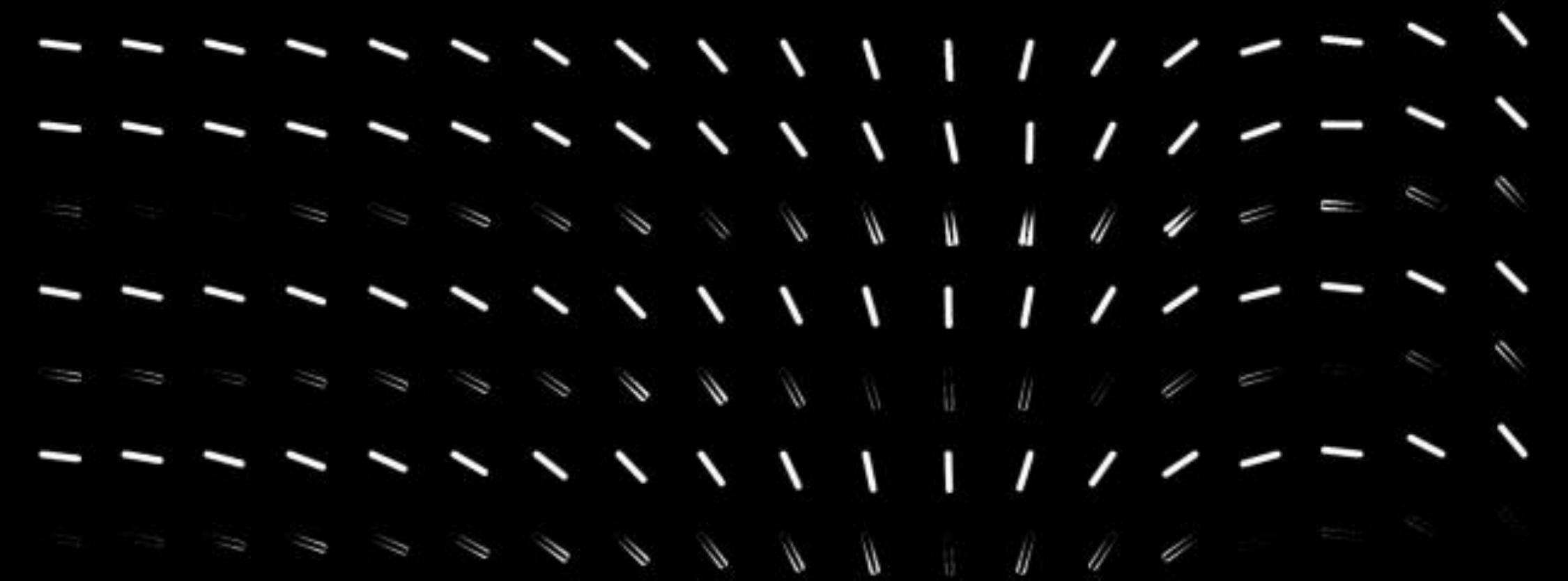}
    \end{subfigure}
    \vfill
    \begin{subfigure}[t]{\textwidth}
        \includegraphics[width=0.32\textwidth]{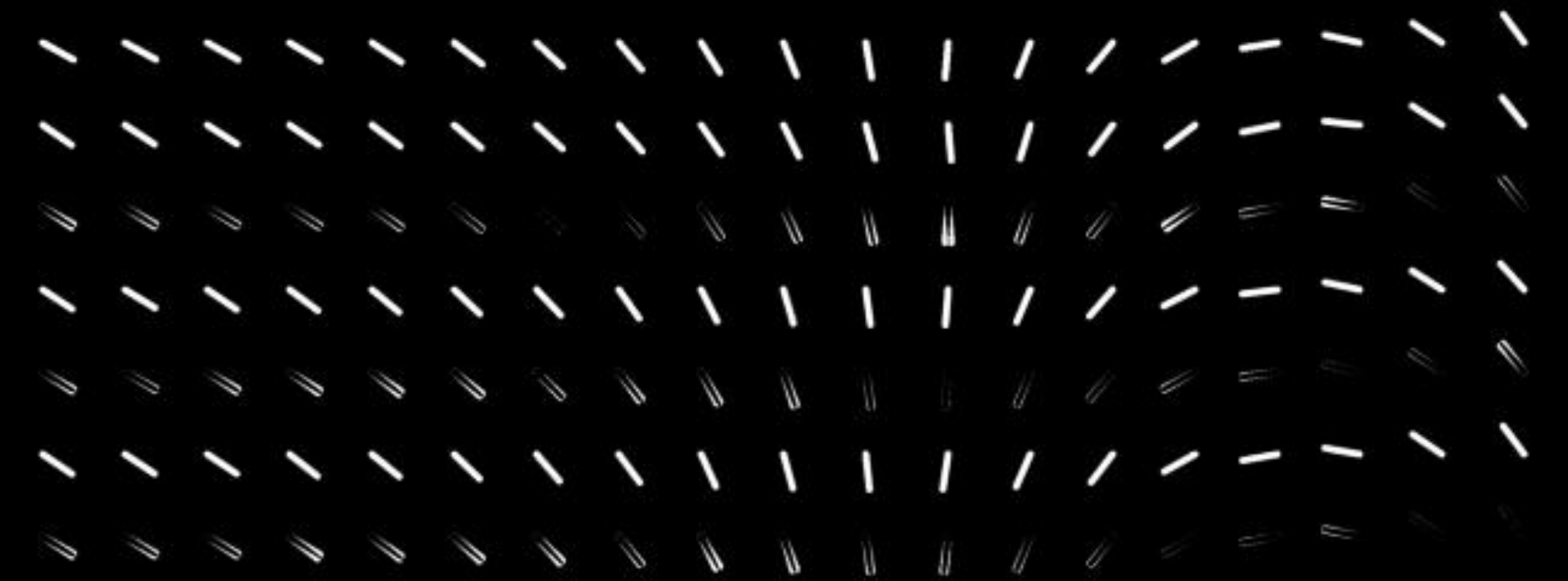}
        \includegraphics[width=0.32\textwidth]{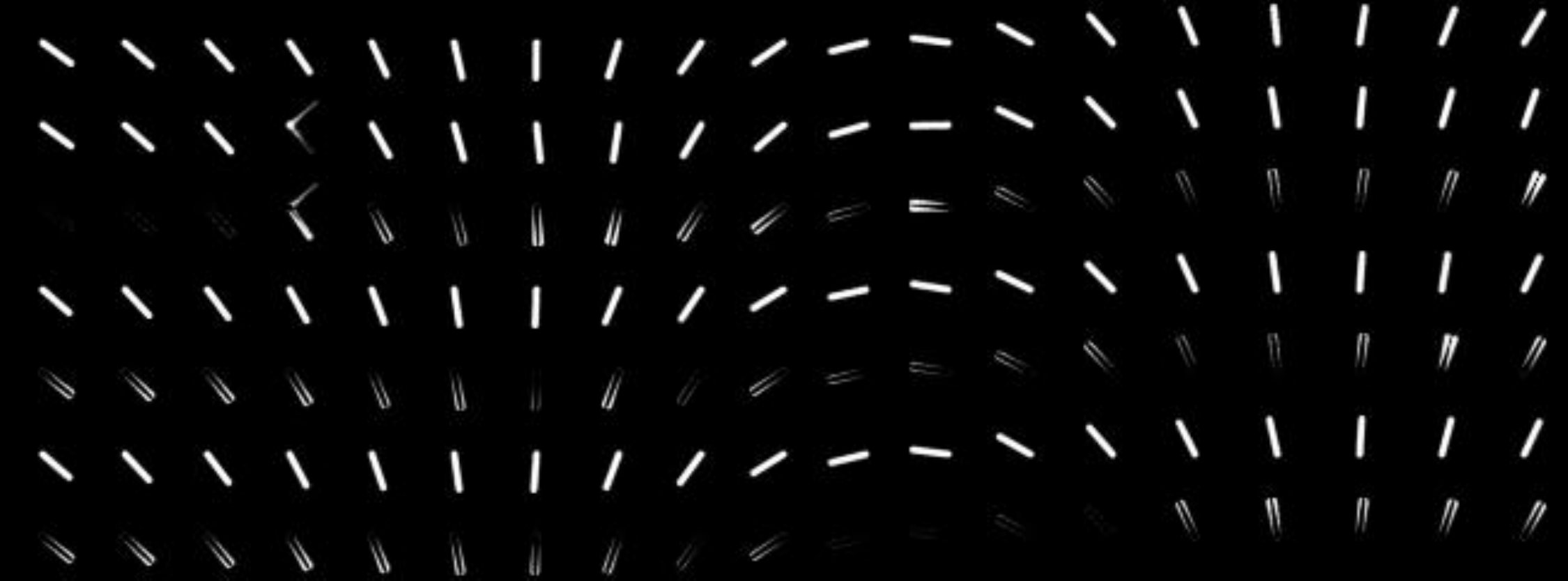}
        \includegraphics[width=0.32\textwidth]{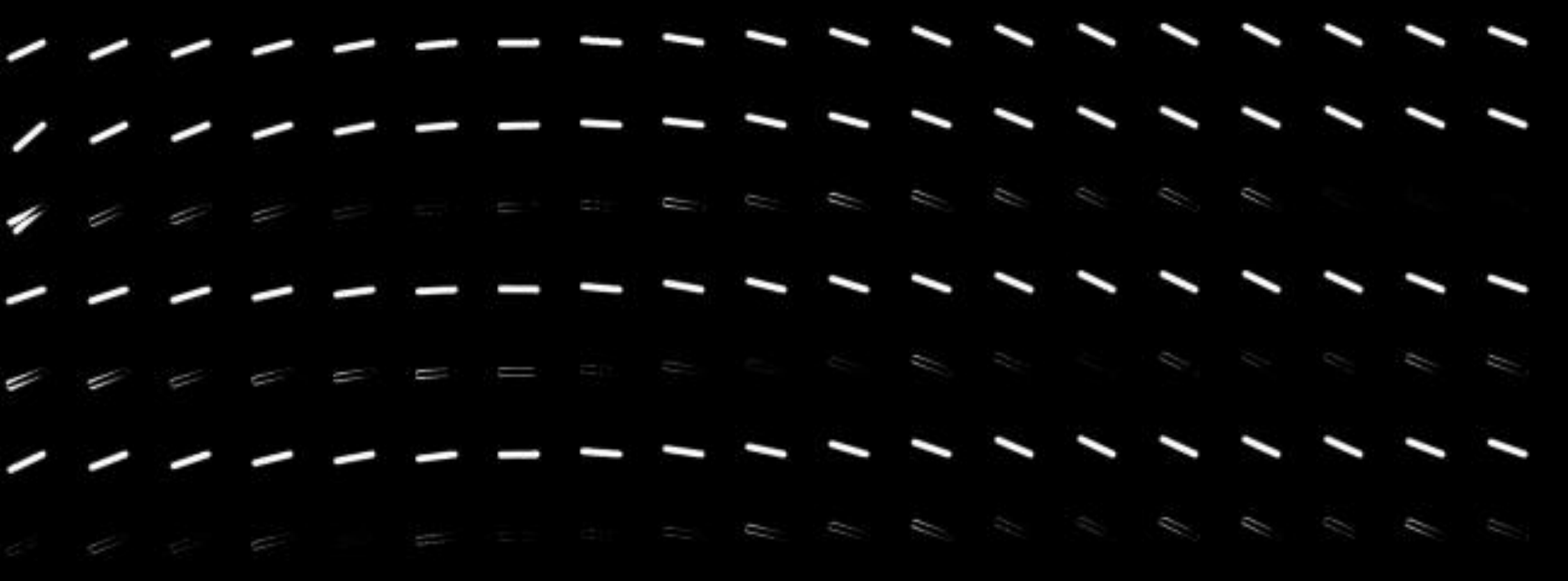}
    \end{subfigure}
    \vfill
    \begin{subfigure}[t]{\textwidth}
        \includegraphics[width=0.32\textwidth]{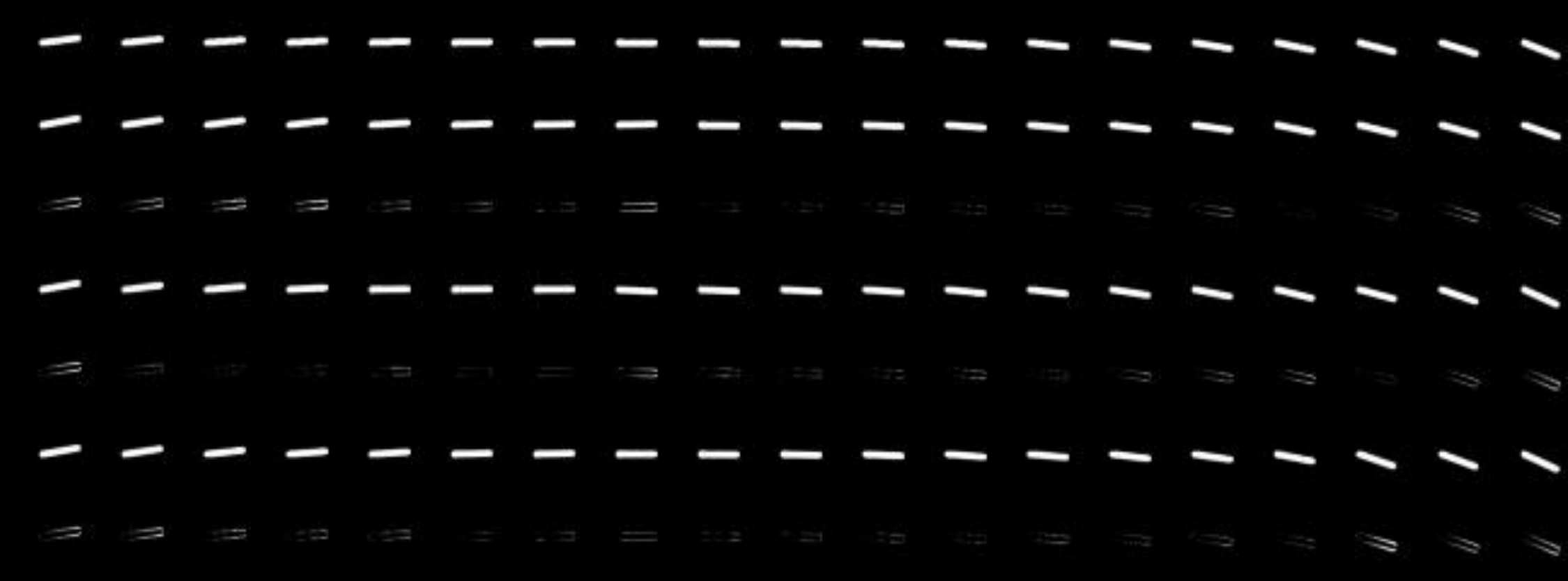}
        \includegraphics[width=0.32\textwidth]{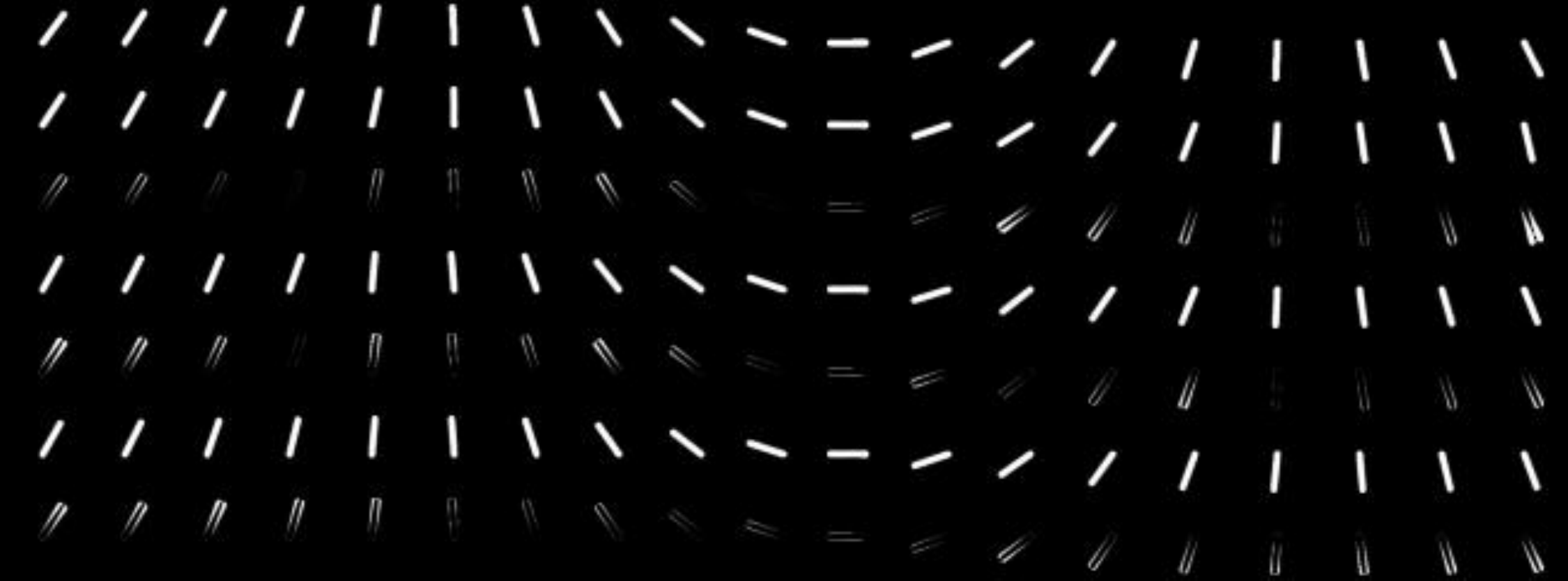}
        \includegraphics[width=0.32\textwidth]{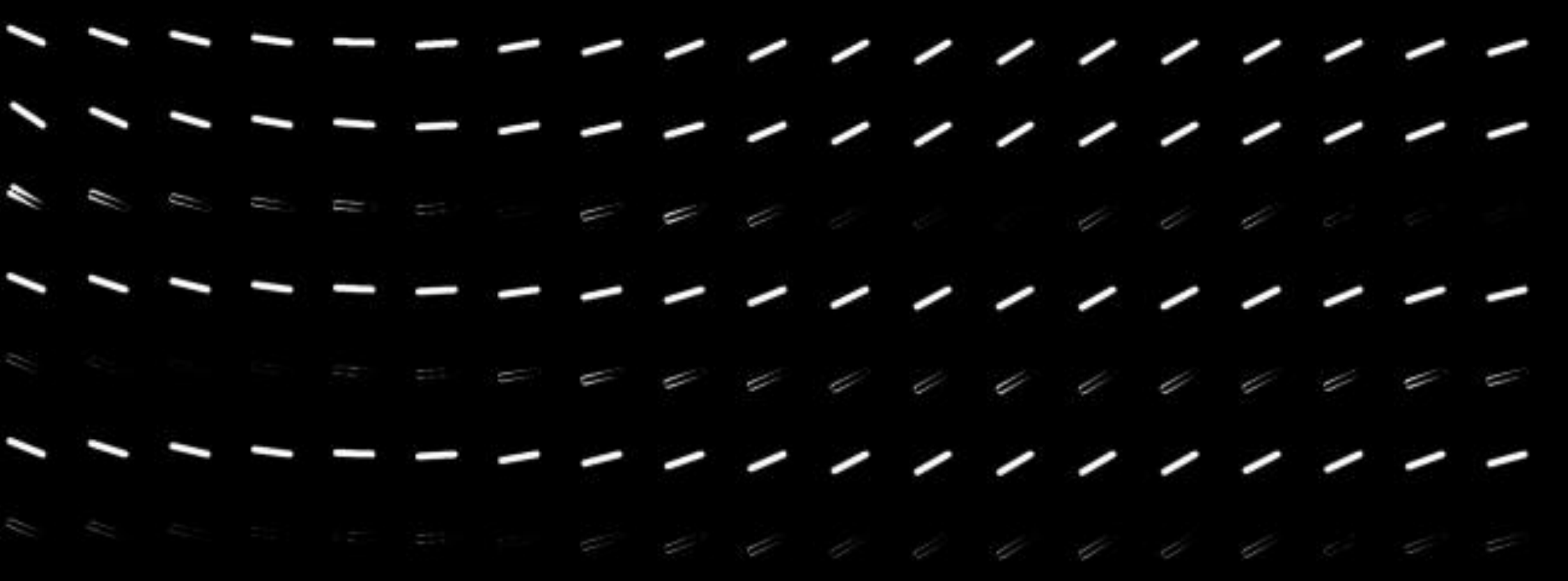}
    \end{subfigure}
    \caption{More sequence examples of filtering and one-step prediction by AESMC, MCFO-SMC and MCFO-PIMH with $K = 100$. The first row is Bernoulli observations, followed by the one-step predictions and the absolute differences of predictions and observations of AESMC, MCFO-SMC and MCFO-PIMH.}
    \label{fig:more_example_filtering_prediction}
\end{figure}

\begin{figure}
    \centering
    \begin{subfigure}[b]{0.4\textwidth}
        \centering
        \includegraphics[width=\textwidth]{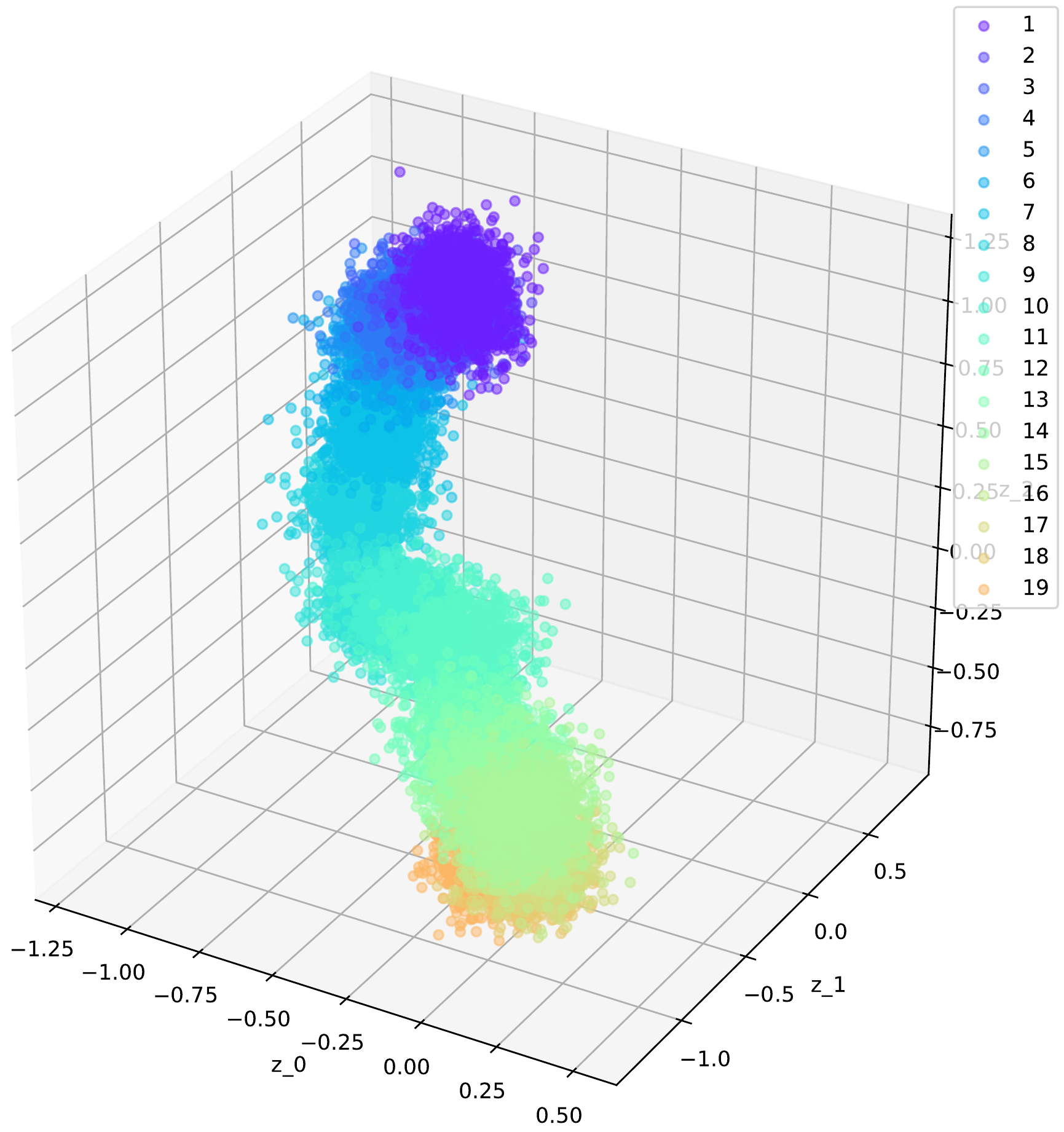}
    \end{subfigure}%
    \medskip
    \begin{subfigure}[b]{0.4\textwidth}
        \centering
        \includegraphics[width=\textwidth]{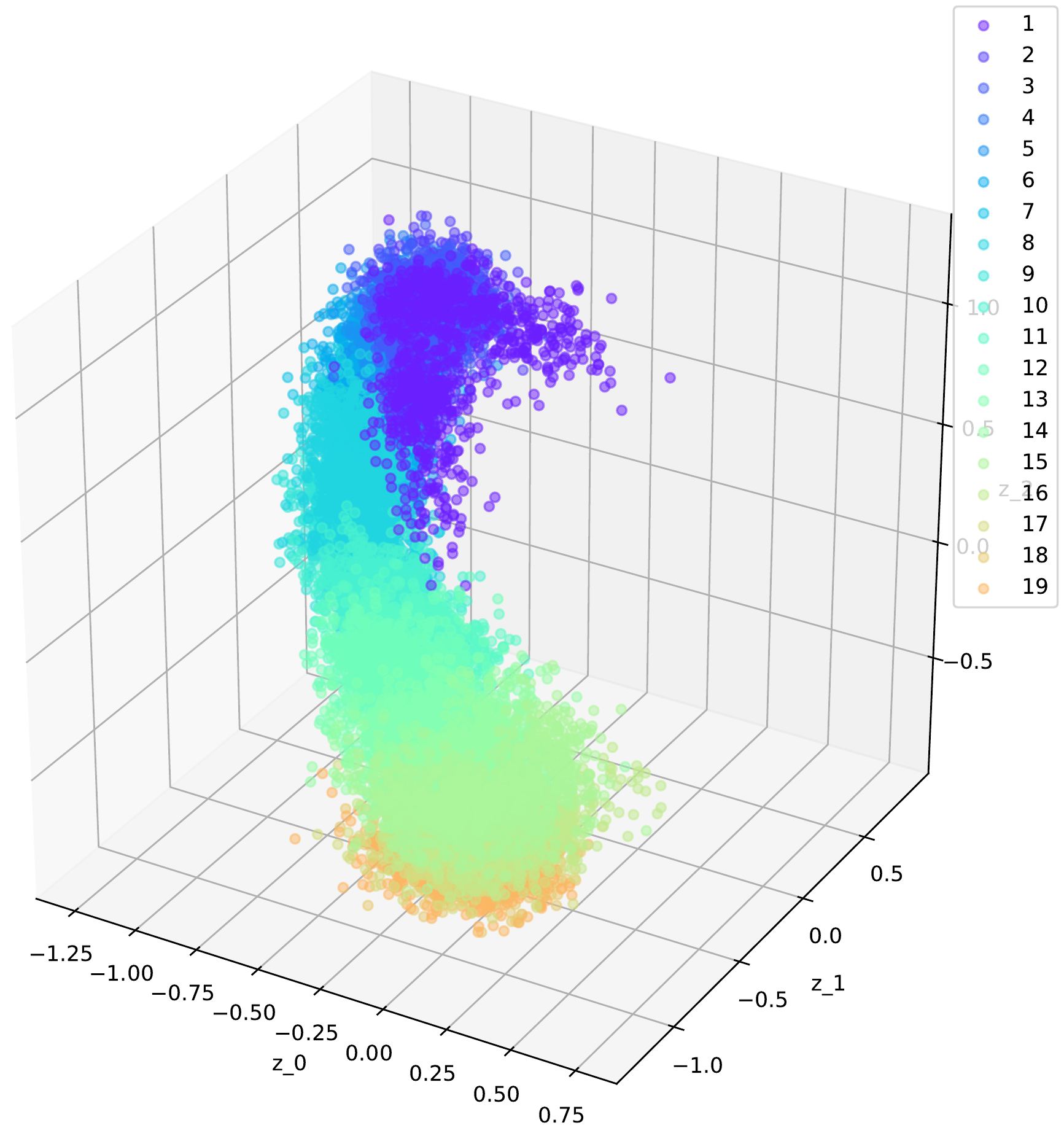}
    \end{subfigure}%
    \vfill
    \begin{subfigure}[b]{0.4\textwidth}
        \centering
        \includegraphics[width=\textwidth]{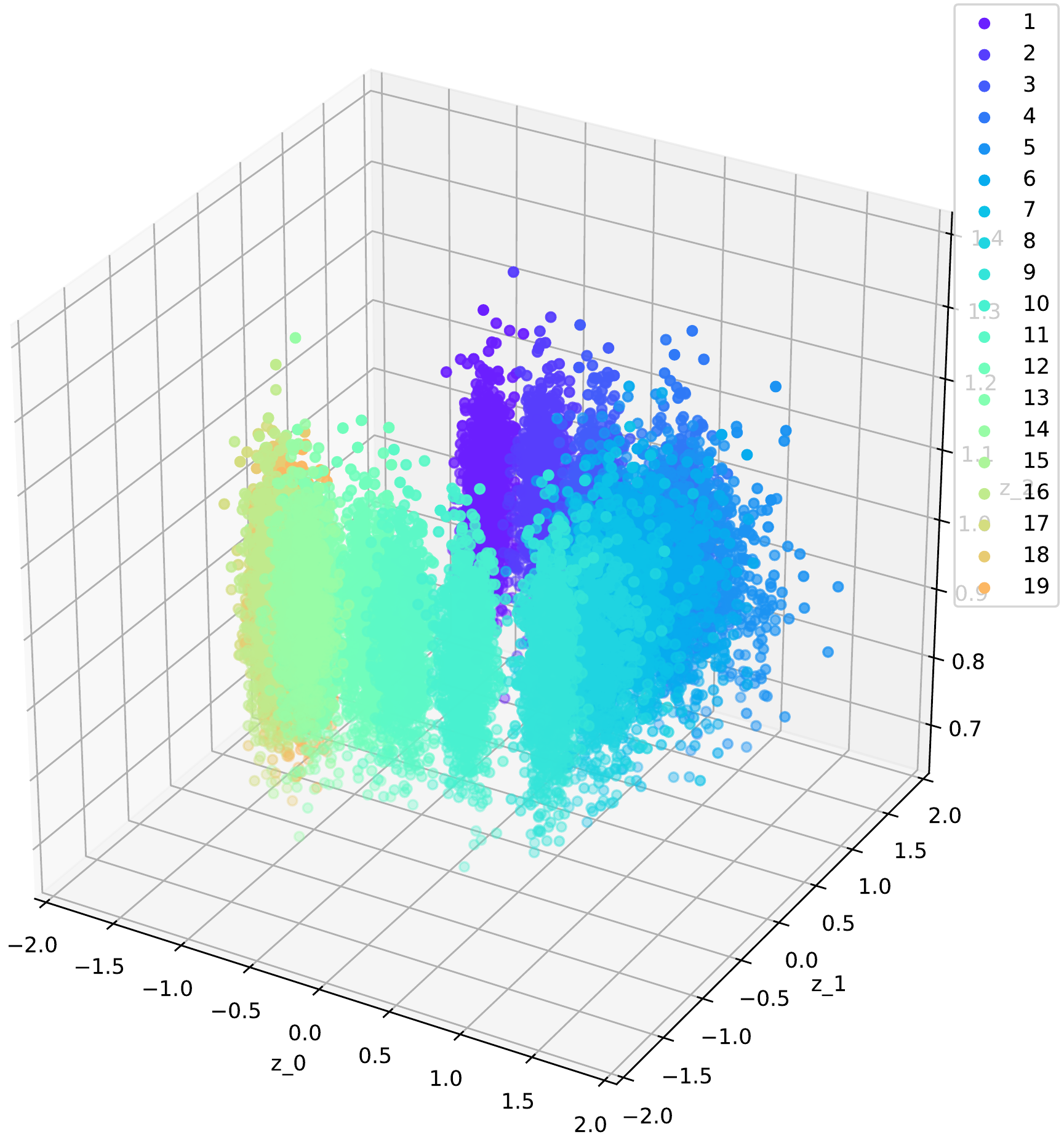}
    \end{subfigure}%
    \medskip
    \begin{subfigure}[b]{0.4\textwidth}
        \centering
        \includegraphics[width=\textwidth]{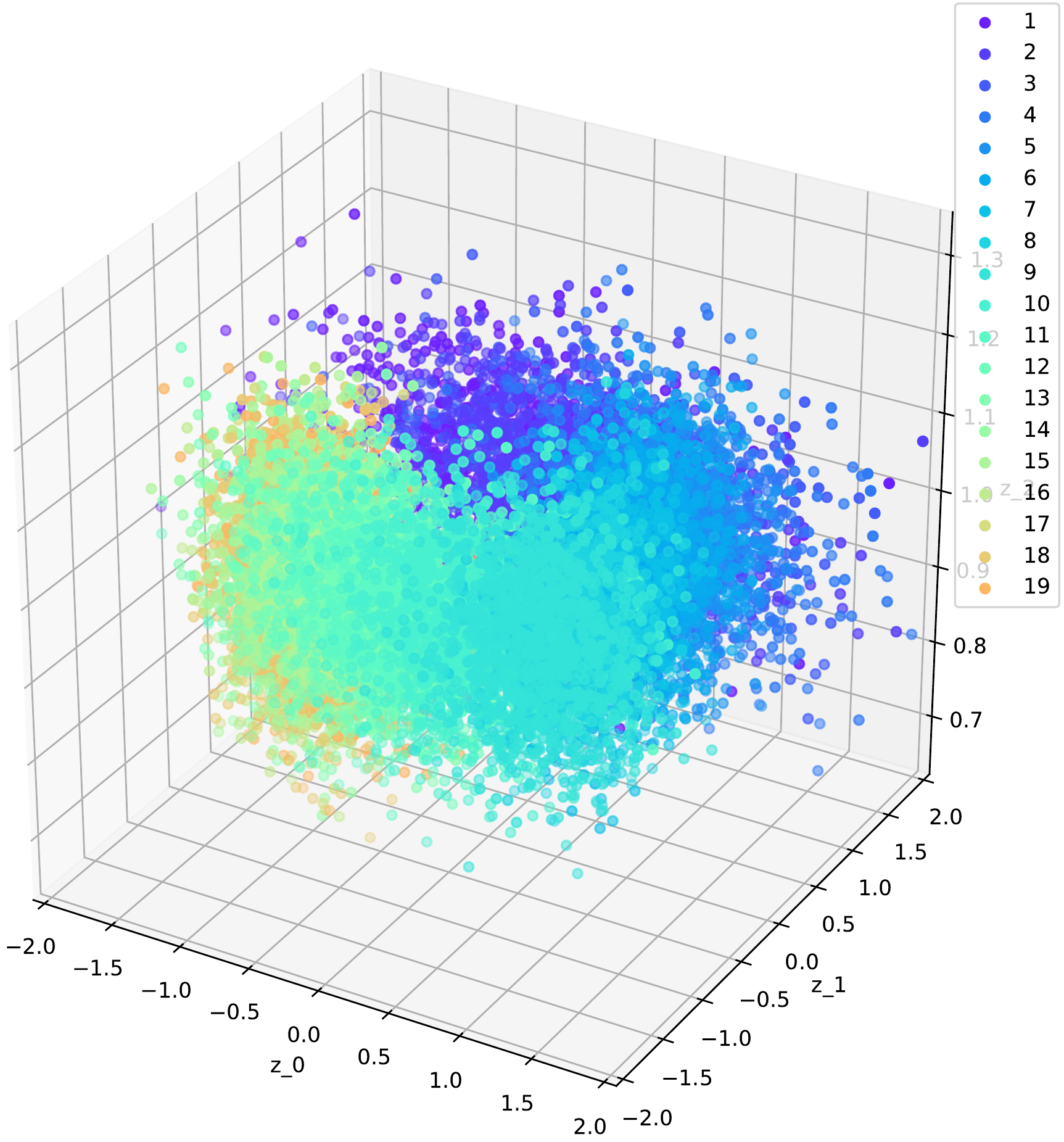}
    \end{subfigure}%
    \caption{Visualization of 1000 samples of filtering and one-step ahead prediction in latent space; \textit{left} is filtering inference samples and \textit{right} is one-step ahead prediction samples. \textit{Top}: AESMC-100. \textit{Bottom}: MCFO-SMC-100.}
    \label{fig:latent_filtering_prediction}
\end{figure}

All training by different objectives use the same generative and proposal model definitions. The latent dimension is set as 3 and all distributions on the latent variables are assumed to be Gaussian. The transition model $p_{\boldtheta}(\mathbf{z}_t|\mathbf{z}_{t-1})$ is specified as $ \mathcal{N}(\mathbf{z}_t; \boldsymbol{\mu}_{\boldtheta}(\mathbf{z}_{t-1}), \boldsymbol{\sigma}_{\boldtheta}^2(\mathbf{z}_{t-1}))$ with its mean and variance parameterized by Multi-layer perceptrons (MLPs) with 1 hidden layer of 64 units. The observation model $p_{\boldtheta}(\mathbf{x}_t|\mathbf{z}_{t})$ is assumed to be factorized Bernoulli distribution $\text{Bern}(\mathbf{x}_t; \boldsymbol{\mu}_{\boldtheta}(\mathbf{z}_{t}))$, its mean is parameterized by a 4-hidden-layer MLP with 16, 64, 512, 2048 units. Instead of mixing  $\mathbf{z}_{t-1}$ and $\mathbf{x}_t$ at first layer of NN, the proposal model $q_{\boldphi}(\mathbf{z}_t|\mathbf{z}_{t-1}, \mathbf{x}_t) \sim \mathcal{N}(\mathbf{z}_t; \boldsymbol{\mu}_{\boldphi}(\mathbf{z}_{t-1}, \mathbf{x}_t), \boldsymbol{\sigma}_{\boldphi}^2(\mathbf{z}_{t-1}, \mathbf{x}_t))$, first encodes $\mathbf{x}_t$ to low dimensional representation by a 2-hidden-layer MLP with 128, 32 units, then concatenates it with $\mathbf{z}_{t-1}$ to map to mean and variance by a 1-hidden-layer MLP with 32 units. 

MCFO-SMC uses SMC while MCFO-PIMH uses PIMH with 5 MH iterations in gradient estimation. \textit{Adam} optimizer with default parameter settings is used for all objectives. The initial learning rate is $1e-2$ and it decays every 10 iterations by a factor of 0.998 with the lowest rate as $1e-4$. For all MCFO-SMC and MCFO-PIMH, the number of samples to optimize proposal models is set to 10 while the numbers of samples for generative models are experimented by $\{10, 20, 50, 100, 200, 500, 1000 \}$.

For evaluation of learned generative and proposal models, though the estimate negative log-likelihood is a common performance metric to report as in Section 4.2 and 4.4, it is not appropriate on this video sequences since it does not reflect how well the uncertainty is predicted on the observations comparing to the Bernoulli mean of observations. For instance, considering a binary pixel is sampled as 1 from a Bernoulli with mean as 0.6 and two models that predict mean as 0.7 and 0.9, the cross-entropy in the log-likelihood will favor the later model, however, first model predicts mean more correctly. To address this, we instead implement another widely used one-step-ahead prediction error, defined as the sum of L2 norms between the ground truth and one-step ahead predictions, $\sum_{t=2}^T \Vert\tilde{\mathbf{x}}_t^{pred}-\mathbf{x}_t^{GT}\Vert_2$. The prediction in observation space, $\tilde{\mathbf{x}}_t^{pred}$, is defined by the Bernoulli mean of the predicted mean in latent space, $\Tilde{\mathbf{z}}^{pred}_t \approx \mathbb{E}_{p(\mathbf{z}_t|\mathbf{z}_{t-1})p(\mathbf{z}_{t-1}|\mathbf{x}_{1:t-1})}[\mathbf{z}_t]$, which is estimated by the transition model and the inference by SMC.

Figure \ref{fig:more_example_filtering_prediction} shows more examples of filtering reconstructions and one-step ahead predictions by MCFO-SMC, MCFO-PIMH and AESMC as Figure 3. Qualitatively, MCFO-PIMH and MCFO-SMC have relatively lower prediction deviation from ground truth, compared to AESMC. BBesides predictions in observation space, we investigate the latent samples from the learned posterior and predict one-step ahead by learned transition model. Figure \ref{fig:latent_filtering_prediction} shows that the manifold of latent variables by MCFO-SMC is more regulated than that of AESMC since MCFO implicitly regularize to learn more sample efficient proposal models and thus simpler generative models. For MCFO-SMC, the representation of the pendulum system angle is mainly encoded in the dimensions $z_0$ and $z_1$ while this disentanglement cannot be observed in AESMC. Table \ref{table:pendulum_evaluation_morek} shows the evaluations on the models learned by more number of samples $K$ as Table 2. When K increases to 200, 500, 1000, no statistically significant difference is observed, compared to 100. 

\begin{table*}[!htb]
  \centering
  \small
    \begin{tabular}{lllll}
    Method  & $K$    
    & Prediction & ESS\\
    \hline
    AESMC &\multirow{3}{*}{200}
    & 37.65 $\pm$ 0.23 & 80.64 $\pm$ 0.97 \\
    MCFO-SMC & 
    & 34.24 $\pm$ 1.56 & 128.97 $\pm$ 2.06 \\
    MCFO-PIMH & 
    & \textbf{33.80 $\pm$ 0.99} & \textbf{129.85 $\pm$ 1.75} \\
    \hline
    AESMC &\multirow{3}{*}{500}
    & 36.93 $\pm$ 0.27 & 78.51 $\pm$ 1.09 \\
    MCFO-SMC & 
    & 34.21 $\pm$ 1.58 & 128.80  $\pm$ 2.10 \\
    MCFO-PIMH & 
    & \textbf{33.74 $\pm$ 0.89} & \textbf{130.01 $\pm$ 2.11} \\
    \hline
    AESMC &\multirow{3}{*}{1000}
    & 36.73 $\pm$ 0.29 & 79.89 $\pm$ 1.89 \\
    MCFO-SMC & 
    & 34.18 $\pm$ 1.72 & \textbf{126.99  $\pm$ 3.26} \\
    MCFO-PIMH & 
    & \textbf{33.62 $\pm$ 1.06} & 125.08 $\pm$ 2.37
  \end{tabular}
  \caption{One-step prediction errors and ESS on the test set of generative and proposal models learned by AESMC, MCFO-SMC, MCFO-PIMH with $K=200,500,1000$, evaluated by SMC with 1000 samples averaged over last 1000 iterations.}
  \label{table:pendulum_evaluation_morek}
\end{table*}

\subsection{Polyphonic Music Sequences}
\label{sec:appendix_polyphonic_music}

To evaluate MCFOs on non-Markovian data of high dimensions and complex temporal dependencies, we choose four polyphonic music datasets \citep{boulanger2012modeling}. Each dataset contains a varied length of sequences, and we pre-process all notes to 88-dimensional binary vectors that space the whole range of piano from A0 to C8. To model temporal dependencies of polyphonic music sequences, the same VRNN \citep{chung2015recurrent} is implemented for MCFO-SMC, MCFO-PIMH, FIVO and IWAE reported in Table 3.

All distributions of latent and hidden states, $\mathbf{z}$ and $\mathbf{h}$, are assumed to be Gaussian, while the observation distributions are factorized Bernoulli distributions of 88 dimensions. Both dimensions of hidden and latent variables are set as 64. VRNN uses Long Short Term Memory (LSTM) to update the deterministic hidden state with internal memory states by $(\mathbf{h}_t, \mathbf{c}_t)  = \mathbf{LSTM}(\mathbf{z}_{t}, \mathbf{x}_t, \mathbf{h}_{t-1}, \mathbf{c}_{t-1})$. With the hidden states, the generation process of VRNN is factorized as $\prod_{t=1}^T p_{\boldtheta}(\mathbf{z}_t|\mathbf{h}_{t-1}) p_{\boldtheta}(\mathbf{x}_t|\mathbf{z}_t, \mathbf{h}_{t-1})$. The transition $p_{\boldtheta}(\mathbf{z}_t|\mathbf{h}_{t-1})$ and emission $p_{\boldtheta}(\mathbf{x}_t|\mathbf{z}_t, \mathbf{h}_{t-1})$ are parameterized by MLPs with a hidden layer of the same size as the dimension of latent variables. A function, $\varphi(\cdot)$, is implemented by an MLP to better encode the observations and the proposal, $q_{\phi}(\mathbf{z}_t|\mathbf{h}_{t-1},\varphi(\mathbf{x}_t))$, models $\mathbf{h}_{t-1}$, the encoded history of observations $\mathbf{x}_{1:t-1}$, and encoded new observation $\varphi(\mathbf{x}_t)$ by a MLP with a hidden layer of the same size as the number of latent variables. 

All the trainings of VRNN by MCFO-SMC, MCFO-PIMH, FIVO, IWAE uses \textit{Adam} optimizer with default parameter settings. The initial learning rate is set as $1e-4$ and decays to the lowest rate as $1e-6$.

\end{document}